\documentclass{article} % For LaTeX2e
\usepackage{iclr2027_conference,times}

\usepackage{amsmath,amsfonts,bm}

\def\eqref#1{equation~\ref{#1}}
\def\1{\bm{1}}

\def\vzero{{\bm{0}}}

\def\vg{{\bm{g}}}

\def\mA{{\bm{A}}}
\def\mB{{\bm{B}}}

\def\mG{{\bm{G}}}
\def\mH{{\bm{H}}}
\def\mI{{\bm{I}}}

\def\mM{{\bm{M}}}

\def\mS{{\bm{S}}}
\def\mT{{\bm{T}}}
\def\mU{{\bm{U}}}
\def\mV{{\bm{V}}}
\def\mW{{\bm{W}}}
\def\mX{{\bm{X}}}
\def\mY{{\bm{Y}}}
\def\mZ{{\bm{Z}}}

\DeclareMathAlphabet{\mathsfit}{\encodingdefault}{\sfdefault}{m}{sl}
\SetMathAlphabet{\mathsfit}{bold}{\encodingdefault}{\sfdefault}{bx}{n}

\def\gC{{\mathcal{C}}}

\def\gF{{\mathcal{F}}}

\def\gO{{\mathcal{O}}}
\def\gP{{\mathcal{P}}}

\def\sB{{\mathbb{B}}}

\newcommand{\E}{\mathbb{E}}

\newcommand{\R}{\mathbb{R}}

\DeclareMathOperator*{\argmin}{arg\,min}

\usepackage{hyperref}
\usepackage{url}

\usepackage{booktabs}       % professional-quality tables
\usepackage{amsfonts}       % blackboard math symbols
\usepackage{nicefrac}       % compact symbols for 1/2, etc.
\usepackage{microtype}      % microtypography
\usepackage{xcolor}         % colors

\usepackage[linesnumbered,ruled,lined]{algorithm2e}
\usepackage{algpseudocode}
\usepackage{wrapfig}

\usepackage{amsmath}
\usepackage{amssymb}
\usepackage{mathtools}
\usepackage{amsthm}

\usepackage{enumerate}   
\usepackage{enumitem}
\usepackage{multirow}

\usepackage{minitoc}

\theoremstyle{plain}
\newtheorem{theorem}{Theorem}[section]
\newtheorem{proposition}[theorem]{Proposition}
\newtheorem{lemma}[theorem]{Lemma}
\newtheorem{corollary}[theorem]{Corollary}
\newtheorem{definition}[theorem]{Definition}
\newtheorem{assumption}[theorem]{Assumption}

\theoremstyle{remark}
\newtheorem{remark}[theorem]{Remark}

\newcommand\norm[1]{\left\lVert#1\right\rVert}
\SetAlgoNlRelativeSize{0}

\newcommand{\rowlabel}[1]{\rotatebox{90}{\parbox{3cm}{\centering\small #1}}}

\title{Musec: MomentUm SpEctral Clipping for \\ Stable Muon-type Training }

\author{Zhuanghua Liu \\
National University of Singapore
\And
Menglian Wang \\
Wonders Information Co., Ltd. 
\AND
Luo Luo \\
Fudan University
}

\iclrfinalcopy % Uncomment for camera-ready version, but NOT for submission.
\begin{document}

\maketitle

\begin{abstract}
Muon has emerged as a highly effective optimizer for large language model training, often achieving superior convergence and performance compared with the widely adopted Adam and AdamW optimizers. 
Nevertheless, Muon is prone to training instability due to its spectral flattening, manifested by loss spikes and unbounded growth of model weights. 
Existing approaches primarily rely on weight or attention-logit clipping, which require architecture-specific modifications and do not directly address instability across all model components. 
We propose MomentUm SpEctral Clipping (Musec), which replaces Muon's spectral flattening with spectral clipping: rather than setting all singular values of the momentum matrix to approximately one, Musec clips singular values that exceed a threshold while preserving the underlying spectral structure of the momentum. 
Our strategy provides an optimizer-level, architecture-agnostic mechanism for stabilizing Muon training. 
Theoretically, we establish convergence guarantees for Musec in nonconvex nonsmooth stochastic optimization. Practically, we develop Soft Musec, an efficient implementation that uses a smooth spectral saturation function approximated by coupled Newton--Schulz iterations. Empirically, Soft Musec consistently improves training stability over existing Muon variants across a wide range of learning rates and model sizes, remaining stable in settings where existing Muon variants diverge while matching their performance under well-tuned configurations.
\end{abstract}

\section{Introduction}\label{sec:intro}
Matrix-aware optimizers have recently emerged as strong alternatives to Adam \citep{kingma2015adam} and AdamW \citep{loshchilov2019decoupled} for training large language models. Methods such as Shampoo \citep{gupta2018shampoo}, SOAP \citep{vyas2025soap}, Muon \citep{jordan2024muon,liu2025muon}, and Scion \citep{pethick2025training} exploit the two-dimensional structure of weight matrices to achieve faster convergence.
%Adam \citep{kingma2015adam,loshchilov2019decoupled} has been the dominant optimizer for training deep neural networks over the past decade. Recently, its dominance has been challenged by a family of matrix-aware optimizers, including Shampoo \citep{gupta2018shampoo}, SOAP \citep{vyas2025soap}, Muon \citep{jordan2024muon,liu2025muon}, and Scion \citep{pethick2025training}.
Among these, Muon stands out as the simplest and has already been adopted in several large-scale commercial LLM training efforts, including Kimi K2 \citep{team2025kimi}, GLM-5 \citep{zeng2026glm}, and DeepSeek-V4 \citep{xu2026deepseek}.

Despite the fast convergence, Muon training is known to suffer from training instability, frequently exhibiting exploding weight norms and divergent attention logits~\citep{team2025kimi}.
Muon replaces the singular values of its momentum matrix with approximately one, producing an update with a flat singular spectrum. This spectral flattening can inject substantial update magnitude into directions that have relatively small singular values in the original momentum matrix, contributing to unstable weight growth \citep[Appendix E]{team2025kimi}.
Several approaches have been proposed to address this instability. Logit soft-capping \citep{team2024gemma} bounds attention scores after computation, but does not prevent the underlying query-key dot products from growing excessively. QK-Norm \citep{dehghani2023scaling,wortsman2024small} normalizes query and key vectors directly, but is incompatible with architectures such as multi-head latent attention (MLA) \citep{liu2024deepseek}, where key matrices are not explicitly materialized during inference. MuonClip \citep{team2025kimi} takes a different approach by applying weight clipping to the query-key matrices to constrain attention logits.
However, it is inherently architecture-specific and only constrains the query-key weight matrices, leaving the value-output and MLP weights entirely unaddressed.
These approaches treat the symptoms of Muon's instability rather than its root cause, which lies in Muon's spectral transformation itself.

In this paper, we propose MomentUm SpEctral Clipping (Musec), which replaces Muon's spectral flattening with a spectral clipping operator. 
While Muon sets all singular values to one, Musec only clips singular values that exceed a threshold. %, preserving the natural spectral skewness of the momentum matrix. 
The resulting update preserves the spectral skewness of the momentum matrix, thereby avoiding the amplification of directions associated with relatively small momentum singular values. 
Unlike existing remedies that target specific architectural components, such as capping attention logits \citep{team2024gemma} or clipping query-key weights \citep{team2025kimi}, Musec operates directly within the optimizer and is therefore architecture-agnostic, stabilizing all weight matrices uniformly. 
Moreover, we show that Musec's update admits a principled interpretation as constrained steepest descent: it solves a Frobenius-norm-penalized quadratic subproblem subject to a spectral-norm constraint, complementing the unconstrained steepest descent interpretation of Muon established by \citet{bernstein2024old}.
We demonstrate the effectiveness of Musec both theoretically and empirically. 
%Despite the growing empirical adoption of Muon-type optimizers, their convergence properties beyond the smooth setting are less explored. 
Despite the growing empirical adoption of Muon-type optimizers, their convergence properties beyond the smooth setting remain poorly understood. In fact, \citet{jiang2026adaptive} recently showed that Muon does not converge in the general nonconvex nonsmooth setting, highlighting the need for principled modifications to its update rule.
Since training objectives in modern deep neural networks are generally nonconvex and may involve nonsmooth components, such as piecewise linear activations and discrete routing mechanisms, we provide convergence guarantees for Musec in the general nonconvex nonsmooth setting.
We summarize our main contributions:

\begin{itemize}
    \item Theoretically, we consider the following stochastic optimization problem:
    \begin{equation}\label{obj}
        \min_{\mW \in \R^{m\times n}} f(\mW) = \E_{\xi\sim \gP}[F(\mW; \xi)],
\end{equation}
where $\gP$ is some unknown distribution, the objective function $f(\mW)$ is $\rho$-weakly convex and the stochastic component function $F(\mW; \xi)$ is possibly nonconvex and nonsmooth.
We show that Musec converges to a $(\delta, \epsilon)$-Goldstein stationary point of the objective with $\gO(r^{{3}/{2}} \delta^{-1} \epsilon^{-3}  + \delta^2 \rho^3 r^{- {3}/{2}} + r^{{3}/{4}}\delta^{-1})$ stochastic gradient oracle calls, where $r = \min(m, n)$ is the smaller matrix dimension.
As long as the weak convexity parameter satisfies $\rho\leq \gO(r \delta^{-1} \epsilon^{-1})$, the dominant term of our complexity is $\gO(r^{{3}/{2}} \delta^{-1} \epsilon^{-3})$. 
The dependence on the accuracy parameters $\delta$ and $\epsilon$ matches that of the optimal stochastic first-order method for nonconvex nonsmooth optimization~\citep{cutkosky2023optimal}.
While \citet{jiang2026adaptive} showed that Muon with the exact polar update does not converge in the nonconvex nonsmooth setting, \citet{li2026muon} proved that finite Newton--Schulz iterations restore convergence through implicit smoothing. We provide a complementary result: Musec converges in the nonconvex nonsmooth setting through explicit spectral clipping, offering both convergence guarantees and improved training stability.
%While \citet{jiang2026adaptive} showed that Muon itself does not converge in the nonconvex nonsmooth setting, we prove that Musec provides a convergence guarantee for the nonsmooth problem.
%To the best of our knowledge, this constitutes the first convergence analysis of Muon-type algorithms in the nonconvex nonsmooth setting.
\item We further develop an efficient implementation, Soft Musec, which replaces hard spectral clipping with a smooth saturation function and approximates the resulting matrix transformation using coupled Newton-Schulz iterations.
Empirically, Soft Musec achieves more stable training than existing Muon variants across multiple datasets, including FineWeb, OpenWebText, and C4, over a substantially wider range of learning rates. 
Notably, Soft Musec maintains stable convergence in settings where existing Muon variants diverge, while matching their performance under well-tuned configurations.
\end{itemize}

\paragraph{Concurrent work.} Concurrently with our work, \cite{jiang2026enhancing} proposed spectral clipping as an optimizer-agnostic wrapper, SPECTRA, that can be applied to various base optimizers, including AdamW \citep{loshchilov2019decoupled} and Signum \citep{bernstein2018signsgd}. \cite{yi2026mucon} studied singular-value clipping as a replacement for Muon's polar step (MuCon), with a focus on the numerical challenges of SVD-free approximation. 
\cite{jiang2026adaptive} established nonconvex nonsmooth convergence guarantees for Pion and Leon, two matrix-aware optimizers derived from adaptive online learning, and showed that Muon with the exact polar update does not converge in this setting. \cite{li2026muon} subsequently proved that finite Newton--Schulz iterations restore convergence through implicit smoothing of the polar map. We independently identify spectral clipping as a natural replacement for Muon's spectral flattening, motivated by its ability to preserve the spectral structure of momentum matrices while suppressing excessively large singular values. Our work complements these concurrent studies by providing convergence guarantees for spectral clipping-based Muon variants in the nonconvex nonsmooth setting, together with systematic empirical evaluation demonstrating consistent stability improvements across multiple datasets and model sizes.
Please refer to Section~\ref{sec:algo} and Appendix \ref{app:spectra_comp} for a detailed discussion.

\paragraph{Paper Organization.} Section~\ref{sec:related_work} reviews related work on matrix-aware optimizers and nonconvex nonsmooth optimization. Section~\ref{sec:prelim} establishes notation, formalizes assumptions, and reviews the Muon optimizer along with its training instability. Section~\ref{sec:methodology} introduces Musec and presents its convergence analysis. Section~\ref{sec:practical} develops Soft Musec, an efficient SVD-free implementation via coupled Newton--Schulz iterations. Section~\ref{sec:experiments} provides experimental evaluation on NanoGPT models across multiple datasets and model scales. Section~\ref{sec:conclusion} concludes the paper.

\section{Related Work}\label{sec:related_work}

%In this section, we review related work on matrix-aware optimizers and nonconvex nonsmooth optimization.

\paragraph{Matrix-Aware Optimizers}
Matrix-aware optimizers exploit the two-dimensional structure of weight matrices to improve upon coordinate-wise methods such as Adam \citep{kingma2015adam}. Shampoo \citep{gupta2018shampoo} and SOAP \citep{vyas2025soap} leverage Kronecker-factored preconditioning, while Muon \citep{jordan2024muon} orthogonalizes the momentum matrix via Newton--Schulz iterations, which \citet{bernstein2024old} showed corresponds to steepest descent under the spectral norm. Scion \citep{pethick2025training} further develops this viewpoint through the Frank--Wolfe framework. Subsequent work has explored variants of Muon, such as NorMuon \citep{li2025normuon}, which introduces neuron-wise adaptive scaling after orthogonalization. Concurrently, SPECTRA \citep{jiang2026enhancing} proposed spectral clipping as an optimizer-agnostic wrapper with convex smooth convergence analysis, and \cite{yukhimchuk2026gradient} proposed spectral gradient clipping for heavy-tailed noise. On the theoretical side, \citet{shen2025convergence} and Gluon \citep{riabinin2025gluon} established convergence guarantees under smoothness or generalized smoothness, and \citet{yang2026convergence} analyzed Spectral Descent under convexity and sharpness. Concurrently, \citet{li2026muon} showed that finite Newton--Schulz iterations restore convergence in the nonsmooth nonconvex setting through implicit smoothing. Our work provides a complementary perspective, establishing convergence guarantees for spectral clipping-based Muon variants in this regime.

\paragraph{Nonconvex Nonsmooth Optimization}
The foundations of nonsmooth optimization trace back to the seminal work of \cite{clarke1975generalized} and \cite{goldstein1977optimization}. Non-asymptotic complexity guarantees for general nonconvex nonsmooth settings remained elusive until \cite{zhang2020complexity} established the first non-asymptotic complexity guarantees for subgradient methods converging to Goldstein stationary points. This breakthrough led to a series of subsequent developments \citep{davis2022gradient,tian2022finite,kornowski2022oracle,cutkosky2023optimal,jordan2023deterministic}.
In particular, \cite{cutkosky2023optimal} introduced the online-to-nonconvex (O2NC) framework, achieving the first optimal convergence rates for stochastic nonconvex nonsmooth optimization. Our analysis considers the class of weakly convex functions \citep{duchi2018stochastic,davis2019stochastic,davis2019proximally}, which encompasses a broad class of objectives arising in neural network training. 
For this class of problem, \citet{ji2026derandomized} showed that the O2NC framework can be derandomized while preserving the optimal $\gO(\delta^{-1} \epsilon^{-3})$ complexity. 
Recently, \citet{jiang2026adaptive} established nonconvex nonsmooth convergence guarantees for Pion and Leon, two matrix-aware optimizers derived from adaptive online learning over the spectral-norm ball. 
Notably, they also demonstrated that Muon with the exact polar update does not converge in this setting, while \citet{li2026muon} proved that finite Newton--Schulz iterations restore convergence through implicit smoothing of the polar map. We provide a complementary result: Musec converges in the nonconvex nonsmooth setting through explicit spectral clipping, offering both convergence guarantees and improved training stability.
% We address this gap by extending O2NC to the matrix-valued setting with spectral clipping, establishing the first convergence guarantees for Muon-type algorithms in the nonconvex nonsmooth regime.

\section{Preliminaries and Background}\label{sec:prelim}

In this section, we establish notation, formalize the problem setting and assumptions, and review the Muon optimizer along with its training instability.

\subsection{Notations}
Throughout this paper, $\norm{\cdot}_2$ and $\norm{\cdot}_F$ denote the spectral norm and the Frobenius norm of a matrix, respectively,
and $\langle \cdot, \cdot \rangle$ denotes the Frobenius inner product. 
For a set of matrices $\Omega \subseteq \R^{m \times n}$, we denote ${\rm dist}(\vzero, \Omega) \coloneqq \inf_{\mM \in \Omega} \norm{\mM}_F$ and ${\rm conv}(\Omega)$ for the convex hull of $\Omega$.
For any positive integer $N$, we abbreviate $[N] = \{1, \dots, N\}$.
We use $\sB_{\delta}(\mW) = \{\mV \in \R^{m \times n} \colon \norm{\mV - \mW}_F \leq \delta \}$ to denote the closed ball of radius $\delta$ centered at $\mW \in \R^{m \times n}$.
For a scalar $w \geq 0$ and a threshold~$D > 0$, we define the clipping operator as ${\rm clip}(w, D) =  \min \{w, D\}$.
For a diagonal matrix $\mM \in \R^{r \times r}$, we apply the clipping entrywise:
\begin{align*}
    \mathbf{Clip}(\mM, D) = \mathbf{Diag}({\rm clip}(\mM_{1,1}, D), \dots, {\rm clip}(\mM_{r,r}, D)),
\end{align*}
where $\mM_{i,i}$ is the $(i,i)$-th entry of the matrix $\mM$. 

\subsection{Problem Formulation}

We now formalize the problem setting and state the assumptions used throughout this paper. We begin with the definitions of Lipschitz continuity and weak convexity.

\begin{definition}
    We say a function $h \colon \R^{m \times n} \to \R$ is $L$-Lipschitz continuous if 
    \begin{align*}
        |h(\mW_1) - h(\mW_2)| \leq L \norm{\mW_1 - \mW_2}_F,
    \end{align*}
    for all $\mW_1, \mW_2 \in \R^{m \times n}$.
\end{definition}

The Clarke subdifferential \citep{Clarke1990} of a Lipschitz function $h$ at $\mW \in \R^{m \times n}$ is denoted by~$\partial h(\mW)$.
We next define weakly convex functions.

\begin{definition}
    We say a function $h \colon \R^{m \times n} \to \R$ is $\rho$-weakly convex for some $\rho > 0$ if the quadratically regularized function $h(\cdot) + (\rho / 2) \norm{\cdot}_F^2$ is convex, or equivalently
    \begin{align*}
        h(\mW_1) \geq h(\mW_2) + \langle \vg, \mW_1 - \mW_2 \rangle - \frac{\rho}{2} \norm{\mW_1 - \mW_2}_F^2,
    \end{align*}
    for all $\mW_1, \mW_2 \in \R^{m \times n}, \vg \in \partial h(\mW_2)$.
\end{definition}

In the remainder of this paper, we suppose Problem (\ref{obj}) satisfies the following assumptions.

\begin{assumption}{\label{asm:lip}}
    For any $\xi \sim \gP$, the loss function $F(\mW; \xi)$ is $L$-Lipschitz continuous with respect to its first argument.
    Furthermore, the objective $f(\mW)$ is $\rho$-weakly convex.
\end{assumption}

\begin{assumption}{\label{asm:bound_var}}
    For each $\mW \in \R^{m \times n}$, we can access to a stochastic oracle $G(\mW; \xi)$ that is an unbiased estimator of a Clarke subgradient such that $\E[G(\mW; \xi)] \in \partial f(\mW)$.  Given a positive constant $\sigma > 0$, we further assume bounded variance: 
    \begin{align*}
     \E[\norm{G(\mW; \xi) - \E[G(\mW; \xi)]}_F^2] \leq \sigma^2.
    \end{align*}
\end{assumption}

We assume that the objective is bounded below, i.e., $f^* = \inf_{\mW \in \R^{m \times n}}f(\mW) > - \infty$ and we denote $\Delta_{f} \coloneqq f(\mW_0) - f^*$.

For general nonsmooth nonconvex objectives, convergence is standardly measured via Goldstein stationarity \citep{goldstein1977optimization}.

\begin{definition}
    The Goldstein $\delta$-subdifferential of a Lipschitz function $f$ at a point $\mW \in \R^{m\times n}$ is the convex hull of all Clarke subgradients at points in a $\delta$-ball around $\mW$, i.e.,
    \begin{align*}
        \partial_\delta f(\mW) \coloneqq {\rm conv}\left\{\bigcup_{\mV \in \sB_{\delta}(\mW)} \partial f(\mV)\right\}.
    \end{align*}
    A point $\mW \in \R^{m\times n}$ is called a $(\delta, \epsilon)$-stationary point if ${\rm dist}(\vzero, \partial_\delta f(\mW)) \leq \epsilon$.
\end{definition}

\subsection{The Muon Optimizer}
Muon \citep{jordan2024muon} is an optimizer designed for matrix-valued parameters. At each iteration, it orthogonalizes the momentum matrix, replacing its nonzero singular values with one and thereby flattening the singular spectrum. The complete procedure is presented in Algorithm \ref{alg:muon}. Since computing the full SVD is prohibitively expensive in practice, Muon is typically implemented using Newton--Schulz iterations \citep{higham2008functions} to approximate this orthogonalization \citep{jordan2024muon,bernstein2024old}.

\paragraph{Training instability of Muon.} Let $\mM_n=\mU_n\mS_n\mV_n^\top$ denote the SVD of the momentum matrix at step $n$. The Muon update $\mU_n \mV_n^\top$ replaces all nonzero singular values of $\mM_n$ with one, discarding the spectral magnitude information entirely.
Consequently, directions associated with small momentum singular values receive updates of the same magnitude as those associated with large singular values, amplifying weakly represented directions and contributing to weight growth and training instability~\citep[Appendix E]{team2025kimi}. 
These observations motivate replacing spectral flattening with spectral clipping, which preserves the spectral structure of the momentum while constraining only excessively large singular values.

\IncMargin{1em}
\begin{algorithm}[!t]
\caption{Muon \citep{jordan2024muon}}
\label{alg:muon}
\textbf{Input:} Initial point $\mW_0$, momentum parameter $\beta \in [0, 1)$, learning rate $\eta$.\vspace{0.05cm}

\For {$n = 0, 1, \dots, N-1$}
{
Sample $\xi_{n} \sim \gP$ 

$\mG_n =  G(\mW_n; \xi_{n})$

\eIf{$n = 0$}
{
  $\mM_0 = \mG_0$
}
{
$\mM_n = (1 - \beta) \mM_{n-1} + \beta \mG_n$
}

$(\mU_n, \mS_n, \mV_n) = {\rm SVD}({\mM}_n)$ 

$\mW_{n + 1} = \mW_n - \eta \mU_n \mV_n^\top$ 
}
\end{algorithm}
\DecMargin{1em}

\section{Methodology} \label{sec:methodology}
In this section, we introduce MomentUm SpEctral Clipping (Musec) and establish its convergence guarantees for nonconvex nonsmooth optimization.
\subsection{The Algorithm}\label{sec:algo}
\IncMargin{1em}
\begin{algorithm}[!t]
\caption{Musec: MomentUm SpEctral Clipping}
\label{alg:musec}
\textbf{Input:} Initial point $\mW_0$, momentum parameter $\beta \in [0, 1)$, learning rate $\eta$, clipping threshold $D>0$, positive integers $K$ and $T$ \vspace{0.05cm}

$N = K \times T$ \vspace{0.05cm}

\For {$n = 0, 1, \dots, N-1$}
{
Sample $\xi_{n} \sim \gP$ \vspace{0.05cm}

$\mG_n =  G(\mW_n; \xi_{n})$ \vspace{0.05cm}

\eIf{$n = 0$}
{
  $\widehat{\mM}_0 = \mG_0$ \vspace{0.05cm}
}
{
  $\widehat{\mM}_n = (1 - \beta) \mM_{n-1} + \beta \mG_n$ \vspace{0.05cm}
}

$(\mU_n, \widehat{\mS}_n, \mV_n) = {\rm SVD}(\widehat{\mM}_n)$\label{svd_op} \vspace{0.05cm}

$\mS_n = {\rm clip}(\widehat{\mS}_n, D)$ \vspace{0.05cm}

$\mM_n = \mU_n \mS_n \mV_n^\top$\label{clip_op} \vspace{0.05cm}

$\mW_{n + 1} = \mW_n - \eta \mM_n$\label{update_step} \vspace{0.05cm}
}

 Set $\overline{\mW}^{(k)} = \frac{1}{T} \sum_{t=0}^{T-1} \mW_t^{(k)}$ where  $\mW_t^{(k)} = \mW_{(k-1)T + t}$ for $\forall k \in [K]$. \label{return_1} \vspace{0.05cm}

 \textbf{Return:} $\overline{\mW}_T \sim {\rm Uniform} (\{\overline{\mW}^{(k)}: k \in [K]\})$. \label{return_2} \vspace{0.05cm}
\end{algorithm}

We present the complete procedure of Musec in Algorithm \ref{alg:musec}.
The key difference from Muon lies in Step \ref{clip_op}, where we replace the orthogonalized momentum with a spectrally clipped momentum:
\begin{align*}
    \mM_n = \mU_n \mS_n \mV_n^\top,~~~{\rm where}~~~\mS_n = \mathbf{Clip}(\widehat{\mS}_n, D).
\end{align*}
Here, $(\mU_n, \widehat{\mS}_n, \mV_n) = {\rm SVD}(\widehat{\mM}_n)$ is the singular value
decomposition of the momentum $\widehat{\mM}_n$, with~$\widehat{\mS}_n$ the diagonal matrix
of its singular values, and $D > 0$ is the clipping threshold. 
Unlike Muon's polar step, which maps all nonzero singular values to one, the clipping operator only truncates singular values exceeding the threshold $D$ while leaving smaller singular values unchanged. 
Thus, Musec preserves the spectral structure of the momentum while controlling large spectral components.

In contrast to SPECTRA \citep{jiang2026enhancing}, which applies spectral clipping as a post-processing step to the accumulated momentum while carrying the unclipped momentum state forward, Musec feeds the clipped momentum $\mM_{n-1}$ back into the exponential moving average (EMA) update at the next step, integrating spectral clipping directly into the momentum recurrence. 
This ensures that the momentum state itself remains spectrally bounded, allowing singular values to decay in directions where recent gradients are small, rather than being dominated by historical accumulation.

The return procedure in Steps \ref{return_1}--\ref{return_2} partitions the iterates into $K$ consecutive epochs of length $T$, averages within each epoch, and returns one epoch average uniformly at random. 
This two-level averaging scheme is standard in the nonconvex nonsmooth optimization literature \citep{cutkosky2023optimal, ji2026derandomized} and is required to establish convergence to Goldstein stationary points. In practice, it is sufficient to return the final iterate $\mW_n$, as is conventional in deep learning.

\subsection{Musec as Constrained Steepest Descent}

\citet{bernstein2024old} showed that Muon's polar update solves the unconstrained steepest descent problem under the spectral norm. We establish an analogous characterization for Musec: its spectrally clipped update arises as the solution to a Frobenius-norm-penalized quadratic subproblem subject to a spectral-norm constraint.

\begin{proposition}\label{prop:constrained}
    Let $\mG_1, \dots, \mG_L$ be gradient matrices, let  $\lambda > 0$ be a sharpness parameter, and let~$D > 0$. For each $l = 1, \dots, L$, consider the problem
     \begin{align}\label{constrain_prob}
        \min_{\bm{\Delta}_l} \left[ \langle \mG_l, \bm{\Delta}_l\rangle + \frac{\lambda}{2} \norm{\bm{\Delta}_l}_F^2\right], ~~~{\rm s.t.}~~~\norm{\bm{\Delta}_l}_2 \leq D / \lambda.
    \end{align}
    where $\langle \cdot, \cdot \rangle$ denotes the Frobenius inner product and $\bm{\Delta}_l$ has the same shape as $\mG_l$.
    Let $\mG_l \in \R^{m_l \times n_l}$ have reduced SVD of the form $\mG_l = \mU_l \mS_l \mV_l^{\top}$ with $\mS_l = \mathbf{Diag}(\sigma_1, \dots, \sigma_{d_l})$ and $d_l = \min(m_l, n_l)$. Then Problem (\ref{constrain_prob}) is solved by
    \begin{align*}
        \bm{\Delta}_l = - \eta \cdot \mU_l \hat{\mS}_l \mV_l^{\top}, ~~~{\rm where}~~~\eta = \frac{1}{\lambda} ,~~~\hat{\mS}_l = \mathbf{Clip}(\mS_l, D).
    \end{align*}
\end{proposition}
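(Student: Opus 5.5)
The plan is to reduce Problem~(\ref{constrain_prob}) to a Euclidean projection onto the spectral-norm ball, and then to solve that projection explicitly by a singular-value argument. Fix $l$ and drop the subscript. Completing the square gives
\begin{align*}
    \langle \mG, \bm{\Delta}\rangle + \frac{\lambda}{2}\norm{\bm{\Delta}}_F^2 = \frac{\lambda}{2}\norm{\bm{\Delta} + \mG/\lambda}_F^2 - \frac{1}{2\lambda}\norm{\mG}_F^2 .
\end{align*}
The problem is therefore equivalent to finding the Frobenius-norm projection of $-\mG/\lambda$ onto the convex set $\gC = \{\bm{\Delta}\colon \norm{\bm{\Delta}}_2 \leq D/\lambda\}$. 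This set is a closed convex ball and the objective is strongly convex, so the minimizer exists and is unique. It remains to show that the projection equals $-\frac{1}{\lambda}\mU\,\mathbf{Clip}(\mS, D)\mV^\top$. By homogeneity, it is equivalent to show that the projection of $\mA = -\mG$ onto $\{\norm{\cdot}_2\leq D\}$ is $-\mU\,\mathbf{Clip}(\mS,D)\mV^\top$, and then rescale by $1/\lambda$.

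For the projection step, write $\mA = \mU(-\mS)\mV^\top$, or equivalently use the SVD of $-\mG$, which has the same singular vectors up to sign. I will lower-bound $\norm{\mA - \bm{\Delta}}_F^2$ for any feasible $\bm{\Delta}$ using von Neumann's trace inequality, $\langle \mA, \bm{\Delta}\rangle \leq \sum_i \sigma_i(\mA)\sigma_i(\bm{\Delta})$. Expanding the square then gives
\begin{align*}
    \norm{\mA - \bm{\Delta}}_F^2 \geq \sum_i \bigl(\sigma_i(\mA) - \sigma_i(\bm{\Delta})\bigr)^2 .
\end{align*}
Under the constraint $\sigma_i(\bm{\Delta}) \leq D$, each term is minimized separately by $\sigma_i(\bm{\Delta}) = \min\{\sigma_i, D\}$. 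This yields the lower bound $\sum_i (\sigma_i - D)_+^2$.

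The candidate $-\mU\,\mathbf{Clip}(\mS,D)\mV^\top$ is feasible and attains this bound exactly, because it shares singular vectors with $\mA$. It is therefore optimal, and by uniqueness it is the solution. Multiplying by $1/\lambda$ gives $\bm{\Delta} = -\eta\,\mU\hat{\mS}\mV^\top$ with $\eta = 1/\lambda$.

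I expect the main technical point to be the appeal to von Neumann's inequality, or some equivalent tool, to reduce the matrix problem to a decoupled problem on singular values. If I want to avoid citing it, an alternative is to verify the KKT and variational-inequality condition directly. That condition is $\langle \mA - \bm{\Delta}^*, \bm{\Delta} - \bm{\Delta}^*\rangle \leq 0$ for all feasible $\bm{\Delta}$. Note that $\mA - \bm{\Delta}^* = -\mU(\mS - \hat{\mS})\mV^\top$ is supported only on the singular directions where $\sigma_i > D$, and there $\bm{\Delta}^*$ has singular value exactly $D$. The inequality then reduces to showing $\langle \mU \mathbf{Diag}((\sigma_i - D)_+)\mV^\top, \bm{\Delta}\rangle \leq D\sum_i(\sigma_i - D)_+$. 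This follows from duality of the spectral and nuclear norms, since $\langle \mX, \bm{\Delta}\rangle \leq \norm{\mX}_*\norm{\bm{\Delta}}_2$. I would present this second route because it uses only elementary norm duality.
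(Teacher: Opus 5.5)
Your proof is correct, and the route you say you would present differs from the paper's. The paper works directly on the penalized objective. It uses von Neumann's trace inequality to replace the singular vectors of any feasible $\bm{\Delta}$ by those of $-\mG$ without increasing the objective. It then solves the separable box-constrained scalar quadratics $\min_{a_i\in[0,D/\lambda]}\bigl(-\sigma_i a_i+\frac{\lambda}{2}a_i^2\bigr)$.

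Your first route, completing the square and then bounding $\norm{\mA-\bm{\Delta}}_F^2\ge\sum_i(\sigma_i(\mA)-\sigma_i(\bm{\Delta}))^2$, is essentially the same argument in projection form. Your second route is genuinely different. You guess the clipped candidate and certify it through the projection variational inequality. With $\mX=\mU\,\mathbf{Diag}((\sigma_i-D)_+)\mV^\top$ you have $\langle\mX,\bm{\Delta}^*\rangle=-D\norm{\mX}_*$. The condition then becomes $\langle\mX,-\bm{\Delta}\rangle\le D\norm{\mX}_*$, and the sign is harmless because the ball is symmetric. This follows from the elementary duality $\langle\mX,\bm{\Delta}\rangle\le\norm{\mX}_*\norm{\bm{\Delta}}_2$.

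The two approaches buy different things.
\begin{itemize}
\item The paper's argument is constructive: it derives the clipped spectrum rather than verifying a guess, and it exhibits the decoupling across singular values. It does lean on von Neumann's inequality.
\item Your argument avoids von Neumann and gets uniqueness for free from strong convexity; the paper only claims the problem ``is solved by'' the candidate.
\item More usefully, your completing-the-square step makes explicit that $\mU\,\mathbf{Clip}(\mS,D)\mV^\top$ is the Frobenius projection of $\mG$ onto $\{\norm{\cdot}_2\le D\}$. The convergence analysis in Appendix~\ref{app:musec_convergence} needs exactly this identity when it identifies Musec's momentum update with the projected OGD step onto $\gC=\{\mX:\norm{\mX}_2\le D\}$. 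The paper states that identification there without justification.
\end{itemize}
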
 
The spectral-norm constraint $\norm{\Delta \mW_l}_2 \leq D / \lambda$ clips singular values that exceed $D$ while preserving smaller ones.
Thus, rather than being an ad hoc modification of Muon, Musec admits a principled interpretation as constrained steepest descent under the Frobenius norm with a spectral-norm budget.

\subsection{Convergence Analysis}
In this subsection, we present the convergence analysis of Musec for solving the nonconvex nonsmooth optimization problem.

We define the filtration $\gF_n = \sigma(\mG_1, \mG_2, \dots, \mG_{n})$.
Under Assumption \ref{asm:bound_var}, we can infer that $\mH_n \coloneqq \E[\mG_n \mid \gF_{n-1}] \in \partial f(\mW_n)$.
Analogous to $\mW_t^{(k)}$, we define variables $\mM_t^{(k)} = \mM_{(k-1) T + t}$ and $\mH_t^{(k)} = \mH_{(k-1) T + t}$ for any $k \in [K]$ and $t \in \{0\} \cup [T - 1]$, and we denote the epoch average $\overline{\mH}^{(k)} =  \sum_{t=1}^T\mH_t^{(k)} / T$.
We first establish the following convergence result for a single epoch $k$.

\begin{lemma}\label{lemma:core_recur}
      Let $\gamma = \beta / \eta$ and $\eta \leq 1 / \rho$ where $\beta \leq 1/8$, then for any $k\in [K]$, the sequence $\{\mW_t^{(k)}\}_{t=1}^T$ generated by Algorithm \ref{alg:musec} satisfies
  \begin{align*}
        &  \E\left[f(\mW_T^{(k)}) - f(\mW_0^{(k)}) \right] \\
        \leq  & -\E\left[\frac{\beta D T}{\gamma}  \norm{\overline{\mH}^{(k)}}_F + \sum_{t=0}^{T-1} \frac{\beta}{8\gamma}\norm{\mM_t^{(k)}}_F^2\right] + \frac{\beta D \sigma\sqrt{T} }{\gamma} + \left(\frac{\beta T}{\gamma} + \frac{1}{\gamma} \right)D^2 + \frac{\beta^2 L^2 T}{\gamma} + \frac{r D^2}{\gamma}.
    \end{align*}
\end{lemma}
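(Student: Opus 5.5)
Within epoch $k$ we follow the online-to-nonconvex template. The first step uses weak convexity (Assumption~\ref{asm:lip}) at every consecutive pair of iterates. Since $\mW_{t+1}^{(k)}-\mW_t^{(k)}=-\eta\mM_t^{(k)}$ and $\mH_{t+1}\in\partial f(\mW_{t+1})$, applying the weak-convexity inequality with base point $\mW_{t+1}$ gives
\begin{align*}
f(\mW_{t+1})-f(\mW_t)\leq -\eta\langle \mH_{t+1},\mM_t\rangle+\frac{\rho\eta^2}{2}\norm{\mM_t}_F^2 .
\end{align*}
With $\eta\le 1/\rho$ the quadratic term is at most $\frac{\eta}{2}\norm{\mM_t}_F^2$, and $\eta=\beta/\gamma$. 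Telescoping over $t$ reduces the lemma to bounding $-\sum_t\langle\mH_{t+1},\mM_t\rangle$ in terms of the clipped momentum itself.

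The second step recasts the momentum recursion as an online learner with a quadratic loss. The update $\widehat{\mM}_{t+1}=(1-\beta)\mM_t+\beta\mG_{t+1}$ followed by spectral clipping is, by Proposition~\ref{prop:constrained} (with $\lambda=1$), the projection onto the spectral ball $\{\norm{\cdot}_2\le D\}$. Projection onto this convex set is Frobenius-nonexpansive, so $\mM_{t+1}$ is one projected gradient step on the loss
\begin{align*}
\ell_{t+1}(\mM)=\langle -\mG_{t+1},\mM\rangle+\frac{1}{2}\norm{\mM}_F^2
\end{align*}
with step size $\beta$. We then apply the standard OGD regret argument on this strongly convex loss against the comparator $\mU^\star=-D\,\overline{\mH}^{(k)}/\norm{\overline{\mH}^{(k)}}_F$. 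This comparator lies in the spectral ball, since $\norm{\cdot}_2\le\norm{\cdot}_F$, and $\langle \overline{\mH},\mU^\star\rangle=-D\norm{\overline{\mH}}_F$, which produces the leading $-\frac{\beta DT}{\gamma}\norm{\overline{\mH}^{(k)}}_F$ term.

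The remaining terms come from the regret inequality, after the factor $\eta$ and the division by the step size $\beta$ are accounted for:
\begin{enumerate}[label=(\roman*)]
\item The distance term $\norm{\mM_0-\mU^\star}_F^2$ is at most $2rD^2+2D^2$, because spectral norm $\le D$ implies Frobenius norm $\le\sqrt r D$. This yields $(r+1)D^2/\gamma$.
\item The gradient-norm term $\beta\norm{\nabla\ell}_F^2\lesssim \beta L^2+\beta\norm{\mM_t}_F^2$ yields $\beta^2L^2T/\gamma$, using $\norm{\mG}_F\le L$ by Lipschitzness.
\item The strong-convexity term gives $-\frac{1}{2}\norm{\mM_t-\mU^\star}_F^2$. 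Expanding it and using $\beta\le 1/8$ to absorb both the $\frac{\eta}{2}\norm{\mM_t}^2$ from weak convexity and the $\beta\norm{\mM_t}^2$ gradient term leaves the $-\frac{\beta}{8\gamma}\norm{\mM_t}_F^2$ term, together with a $\frac{\beta T}{\gamma}D^2$ term from $\norm{\mU^\star}_F^2\le D^2$.
\end{enumerate}

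The third step handles the noise and the index mismatch. Because $\mH_{t+1}$ pairs with $\mM_t$ while the learner's losses use $\mG_{t+1}$, we write $\mG_{t+1}=\mH_{t+1}+\bm{\zeta}_{t+1}$. The cross term $\langle\bm{\zeta}_{t+1},\mM_t\rangle$ has zero conditional expectation, since $\mM_t$ is $\gF_t$-measurable. The comparator term $\langle\sum_t\bm{\zeta}_{t+1},\mU^\star\rangle$ is at most $D\norm{\sum_t\bm{\zeta}_t}_F$, whose expectation is at most $D\sigma\sqrt T$ by martingale orthogonality. Multiplying by $\beta/\gamma$ gives $\beta D\sigma\sqrt T/\gamma$.

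The main obstacle is the bookkeeping in the second step. We have to verify that the clipping projection is nonexpansive toward $\mU^\star$, so that the regret telescoping survives the nonlinear clip. We also have to choose the constants carefully enough that the leftover $\norm{\mM_t}^2$ terms combine, with coefficient exactly $-\beta/(8\gamma)$, under the condition $\beta\le1/8$. If the exact constants resist, we would try the alternative of comparing $\mM_{t+1}$ to the unclipped $\widehat{\mM}_{t+1}$ via $\norm{\mM_{t+1}-\mU^\star}_F\le\norm{\widehat{\mM}_{t+1}-\mU^\star}_F$ and expanding $\widehat{\mM}_{t+1}$ directly.
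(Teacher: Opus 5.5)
Your architecture matches the paper's proof. You linearize each step by weak convexity, read the clipped momentum recursion as projected OGD on quadratic losses, apply the regret bound under $\beta\le 1/8$, compare against a matrix of Frobenius norm $D$ along $\overline{\mH}^{(k)}$, and control the noise by Cauchy--Schwarz plus martingale orthogonality. Your $\ell_{t+1}$ with step $\beta$ is the paper's $f_t$ divided by $\eta=\beta/\gamma$, run with step $\gamma$. Invoking Proposition~\ref{prop:constrained} with $\lambda=1$ to show that clipping is the Frobenius projection onto $\{\norm{\cdot}_2\le D\}$ is a detail the paper only asserts.

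However, your comparator has the wrong sign. The learner plays the momentum $\mM_t$, which points along the gradient, since the displacement is $-\eta\mM_t$. The comparator enters through $\eta\sum_t\ell_{t+1}(\mU^\star)$. After splitting $\mG_{t+1}=\mH_{t+1}+\bm{\zeta}_{t+1}$, its linear part contains $-\eta T\langle\overline{\mH}^{(k)},\mU^\star\rangle$. With $\mU^\star=-D\,\overline{\mH}^{(k)}/\norm{\overline{\mH}^{(k)}}_F$ this equals $+\frac{\beta DT}{\gamma}\norm{\overline{\mH}^{(k)}}_F$. The leading term therefore comes out positive, and the bound is useless for Theorem~\ref{lemma:dist_goldstein}. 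You need $\mU^\star=+D\,\overline{\mH}^{(k)}/\norm{\overline{\mH}^{(k)}}_F$, which is exactly the paper's $\overline{\mM}^{(k)}$. The minus sign is the O2NC convention for a learner that outputs the displacement itself. Nothing else in your argument changes under this flip.

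Your item (iii) also misdescribes where the coefficient $-\frac{\beta}{8\gamma}$ comes from. The term $\frac{\eta}{2}\norm{\mM_t}_F^2$ from weak convexity should not be absorbed by anything. After replacing $\mH_{t+1}$ by $\mG_{t+1}$, the right-hand side of your first step is exactly $\eta\sum_t\ell_{t+1}(\mM_t)$, the learner's side of the regret inequality. Read literally, your plan cannot work. For any $c\in(0,1]$, $-\frac{\eta}{2}\norm{\mM_t-\mU^\star}_F^2\le-\frac{\eta}{2}(1-c)\norm{\mM_t}_F^2+\frac{\eta}{2}(c^{-1}-1)D^2$. Adding the weak-convexity and gradient terms then leaves a net coefficient $\frac{\eta c}{2}+\eta\beta>0$ on $\norm{\mM_t}_F^2$. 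The correct count, as in Lemma~\ref{lemma:regret_bound}, has three pieces:
\begin{itemize}
\item Strong convexity with $\norm{\mA-\mB}_F^2\ge\frac12\norm{\mA}_F^2-\norm{\mB}_F^2$ gives $-\frac{\eta}{4}\norm{\mM_t}_F^2+\frac{\eta}{2}\norm{\mU^\star}_F^2$.
\item The gradient term gives $\eta\beta(L^2+\norm{\mM_t}_F^2)$.
\item Since $\beta\le 1/8$, the net coefficient satisfies $\eta(\beta-\frac14)\le-\frac{\eta}{8}=-\frac{\beta}{8\gamma}$.
\end{itemize}
The $\frac{\beta T}{\gamma}D^2$ term is this $\frac{\eta}{2}\norm{\mU^\star}_F^2$ plus the quadratic part $\frac{\eta}{2}\norm{\mU^\star}_F^2$ of $\eta\,\ell_{t+1}(\mU^\star)$, summed over the $T$ steps.
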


By combining Lemma \ref{lemma:core_recur} with a bound relating $\norm{\overline{\mH}^{(k)}}_F$ to the Goldstein $\delta$-subdifferential ${\rm dist}(0, \partial_\delta f(\overline{\mW}_k))$, we obtain the following convergence guarantee for Musec (Algorithm~\ref{alg:musec}) in finding a $(\delta, \epsilon)$-Goldstein stationary point of nonconvex nonsmooth problem. 

\begin{theorem}\label{lemma:dist_goldstein}
    Let $\gamma = \beta / \eta$ and $\eta \leq 1 / \rho$ where $\beta \leq 1/8$,
    Then for any $\delta \geq \eta T D\sqrt{r}$, the sequence $\{\overline{\mW}^{(k)} \}_{k=1}^K$ generated by Algorithm~\ref{alg:musec} satisfies
    \begin{align*}
       \E\left[\frac{1}{K}\sum_{k=1}^K  {\rm dist}(\vzero, \partial_\delta f(\overline{\mW}^{(k)})) \right] \leq \frac{\sigma}{\sqrt{T}} + \left(1 + \frac{2r}{\beta T} \right)D + \frac{\beta L^2 }{D} + \frac{\gamma \Delta_{f} }{\beta D T K}.
   \end{align*}
\end{theorem}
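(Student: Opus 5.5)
We plan to combine Lemma~\ref{lemma:core_recur} with a geometric argument showing that, within an epoch, every iterate stays in a small Frobenius ball around the epoch average, so that the averaged subgradient $\overline{\mH}^{(k)}$ belongs to the Goldstein subdifferential $\partial_\delta f(\overline{\mW}^{(k)})$.

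\emph{Step 1 (localization).} Every update satisfies $\norm{\mM_n}_2 \le D$, because the singular values are clipped at $D$. Since $\mM_n$ has rank at most $r$, this gives $\norm{\mM_n}_F \le \sqrt{r}\,D$. Hence for any $t,s \in \{0,\dots,T-1\}$ in epoch $k$ we have $\norm{\mW_t^{(k)} - \mW_s^{(k)}}_F \le \eta T D\sqrt{r}$. Since $\overline{\mW}^{(k)}$ is a convex combination of these iterates, convexity of the norm gives $\norm{\mW_t^{(k)} - \overline{\mW}^{(k)}}_F \le \eta T D \sqrt{r} \le \delta$. Each $\mH_t^{(k)} \in \partial f(\mW_t^{(k)})$ is a Clarke subgradient at a point of $\sB_\delta(\overline{\mW}^{(k)})$, so their average lies in the convex hull, i.e.\ $\overline{\mH}^{(k)} \in \partial_\delta f(\overline{\mW}^{(k)})$. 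Therefore ${\rm dist}(\vzero,\partial_\delta f(\overline{\mW}^{(k)})) \le \norm{\overline{\mH}^{(k)}}_F$. The lemma averages $t=1,\dots,T$ while the iterates are indexed $0,\dots,T-1$. If the index sets are shifted by one, the same bound holds with $T$ replaced by $T+1$ or with the endpoint $\mW_T^{(k)}=\mW_0^{(k+1)}$ included. I would align the indexing with the lemma's convention and absorb any off-by-one into the condition on $\delta$.

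\emph{Step 2 (telescoping).} Apply Lemma~\ref{lemma:core_recur} and discard the nonpositive term $-\sum_t \frac{\beta}{8\gamma}\norm{\mM_t^{(k)}}_F^2$. Rearranging gives
\begin{align*}
\frac{\beta D T}{\gamma}\E\norm{\overline{\mH}^{(k)}}_F \le \E\left[f(\mW_0^{(k)}) - f(\mW_T^{(k)})\right] + \frac{\beta D\sigma\sqrt{T}}{\gamma} + \left(\frac{\beta T}{\gamma}+\frac{1}{\gamma}\right)D^2 + \frac{\beta^2L^2T}{\gamma} + \frac{rD^2}{\gamma}.
\end{align*}
Since $\mW_T^{(k)} = \mW_0^{(k+1)}$, summing over $k\in[K]$ telescopes the function values to at most $f(\mW_0) - f^* = \Delta_f$.

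\emph{Step 3 (normalization).} Divide by $K\beta D T/\gamma$. The terms become $\sigma/\sqrt{T}$, then $D + D/(\beta T)$, then $\beta L^2/D$, then $rD/(\beta T)$, and finally $\gamma\Delta_f/(\beta D T K)$. Using $1 \le r$, the two terms $D/(\beta T) + rD/(\beta T)$ are at most $2rD/(\beta T)$, which yields the stated bound $(1 + 2r/(\beta T))D$. Combining this with Step 1 proves the theorem.

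The main obstacle is Step 1, specifically the index bookkeeping. One must check that the iterates at which the $\mH_t^{(k)}$ are evaluated really lie within distance $\eta T D\sqrt{r}$ of the average $\overline{\mW}^{(k)}$, which is formed from $\mW_0^{(k)},\dots,\mW_{T-1}^{(k)}$. Everything else is direct algebra on Lemma~\ref{lemma:core_recur}.
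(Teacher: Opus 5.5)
Your proposal is correct and follows the paper's proof. Both arguments apply Lemma~\ref{lemma:core_recur}, drop the $-\frac{\beta}{8\gamma}\norm{\mM_t^{(k)}}_F^2$ terms, merge $D^2/\gamma + rD^2/\gamma \le 2rD^2/\gamma$, telescope via $\mW_T^{(k)}=\mW_0^{(k+1)}$, divide by $\beta DTK/\gamma$, and localize the iterates within $\eta T D\sqrt{r}$ of $\overline{\mW}^{(k)}$; the paper localizes through Lemma~\ref{lemma:avg_dist}, while you use the triangle inequality directly. Your worried-about off-by-one needs no change to the condition on $\delta$. For $t\in[T]$, $\mH_t^{(k)}$ is a subgradient at $\mW_t^{(k)}$, and the average runs over indices $0,\dots,T-1$, so every index gap is at most $T$. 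Your triangle bound therefore gives exactly $\eta T D\sqrt{r}\le\delta$, including the endpoint $\mW_T^{(k)}$, which the paper's appeal to Lemma~\ref{lemma:avg_dist} glosses over.
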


The following corollary, which follows directly from Theorem~\ref{lemma:dist_goldstein}, establishes the oracle complexity of Musec for finding a $(\delta, \epsilon)$-Goldstein stationary point.

\begin{corollary}\label{corollary:complexity}
        By choosing the parameters as
    \begin{align*}
        & T = {(\delta N)^{{2}/{3}}}, \quad K  = \delta^{-{2}/{3}}N^{{1}/{3}}, \quad D = (\delta N)^{-{1}/{3}},  \\
        & \beta = \frac{r^{{1}/{2}}}{(\delta N)^{{2}/{3}} } ,  \quad \gamma = \frac{r}{\delta^{{4}/{3}} N^{{1}/{3}}  }, \quad \eta = \frac{\delta^{{2}/{3}}}{\sqrt{r} N^{{1}/{3}}},
    \end{align*}
    then it takes Algorithm \ref{alg:musec} at most 
    \begin{align*}
      N = \gO\left( \frac{r^{{3}/{4}}}{\delta} + \frac{\delta^2 \rho^3} { r^{{3}/{2}}} + \frac{r^{{3}/{2}} (\sigma^3 + L^6 + \Delta_{f}^3)}{\delta \epsilon^3}\right)
    \end{align*}
    to obtain a $(\delta, \epsilon)$-Goldstein stationary point.
\end{corollary}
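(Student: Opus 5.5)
The plan is to substitute the prescribed parameters into the bound of Theorem~\ref{lemma:dist_goldstein}, check each of its hypotheses, and then solve for the $N$ that pushes the right-hand side below $\epsilon$.

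\emph{Consistency and hypotheses.} First, $\gamma=\beta/\eta$: indeed $\beta/\eta = r^{1/2}\delta^{-2/3}N^{-2/3}\cdot \sqrt r N^{1/3}\delta^{-2/3} = r\,\delta^{-4/3}N^{-1/3}$, as stated. Second, $TK = (\delta N)^{2/3}\delta^{-2/3}N^{1/3}=N$, consistent with $N = K\times T$. Third, the radius condition holds:
\begin{align*}
\eta T D\sqrt r = \frac{\delta^{2/3}}{\sqrt r N^{1/3}}\,(\delta N)^{2/3}(\delta N)^{-1/3}\sqrt r = \delta .
\end{align*}
The remaining hypotheses, $\beta\le 1/8$ and $\eta\le 1/\rho$, generate the additive lower-order terms in $N$. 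The condition $\beta\le 1/8$ reads $r^{1/2}\le (\delta N)^{2/3}/8$, i.e.\ $N\gtrsim r^{3/4}/\delta$. The condition $\eta\le 1/\rho$ reads $\rho\,\delta^{2/3}\le \sqrt r N^{1/3}$, i.e.\ $N\ge \rho^3\delta^2/r^{3/2}$. These account for the first two terms of the complexity.

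\emph{Evaluating the bound.} With $\delta N=:x$, we have $T=x^{2/3}$ and $D=x^{-1/3}$. The terms then behave as follows.
\begin{itemize}[leftmargin=*]
\item $\sigma/\sqrt T=\sigma x^{-1/3}$.
\item $D = x^{-1/3}$.
\item $2rD/(\beta T)$: since $\beta T = r^{1/2}$, this equals $2\sqrt r\, x^{-1/3}$.
\item $\beta L^2/D = r^{1/2}x^{-2/3}L^2 x^{1/3} = \sqrt r L^2 x^{-1/3}$.
\item $\gamma\Delta_f/(\beta DTK)$: here $\gamma/\beta = 1/\eta$ and $DTK = x^{-1/3}N$, so the term is $\Delta_f\sqrt r N^{1/3}\delta^{-2/3}x^{1/3}/N$. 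Simplifying gives $\sqrt r\,\Delta_f\,\delta^{-1/3}N^{-1/3}=\sqrt r\Delta_f x^{-1/3}$.
\end{itemize}
Every term is therefore $\gO\big(\sqrt r(1+\sigma+L^2+\Delta_f)(\delta N)^{-1/3}\big)$, using $r\ge1$. Requiring this to be at most $\epsilon$ gives $\delta N\gtrsim r^{3/2}(1+\sigma+L^2+\Delta_f)^3\epsilon^{-3}$. Since $(a+b+c)^3\lesssim a^3+b^3+c^3$, this yields the main term $r^{3/2}(\sigma^3+L^6+\Delta_f^3)/(\delta\epsilon^3)$; the constant $1$ is absorbed, for instance assuming $\epsilon$ small or treating it within $\gO$.

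\emph{Concluding stationarity.} The resulting bound controls the average over $k$ of $\E[{\rm dist}(\vzero,\partial_\delta f(\overline{\mW}^{(k)}))]$. Because $\overline{\mW}_T$ is drawn uniformly from the $\overline{\mW}^{(k)}$, this average is exactly $\E[{\rm dist}(\vzero,\partial_\delta f(\overline{\mW}_T))]\le\epsilon$, so $\overline{\mW}_T$ is an $(\delta,\epsilon)$-Goldstein stationary point in expectation. Taking $N$ equal to the sum of the three requirements proves the claim.

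The main point of care is bookkeeping rather than any conceptual step. $T$ and $K$ must be integers, which only affects constants. The hypothesis constraints $\beta\le1/8$ and $\eta\le1/\rho$ must be converted correctly into lower bounds on $N$.
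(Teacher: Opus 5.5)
Your proposal is correct and follows the paper's proof essentially step for step: you verify $\beta=\eta\gamma$ and $\eta T D\sqrt r\le\delta$ directly, convert $\beta\le 1/8$ and $\eta\le 1/\rho$ into the lower bounds $N\gtrsim r^{3/4}/\delta$ and $N\ge \rho^3\delta^2/r^{3/2}$, and show every term of Theorem~\ref{lemma:dist_goldstein}'s bound is $\gO\big(\sqrt r(1+\sigma+L^2+\Delta_f)(\delta N)^{-1/3}\big)$. One correction to your side remark: the parameter-free terms $D+2rD/(\beta T)$ force $N\gtrsim r^{3/2}/(\delta\epsilon^3)$, which has the same $\epsilon$-dependence as the main term, so taking $\epsilon$ small does not absorb it; it is absorbed only under a normalization such as $\sigma^3+L^6+\Delta_f^3\gtrsim 1$, and otherwise $r^{3/2}/(\delta\epsilon^3)$ must be added to the bound (the paper's proof has the same looseness).
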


\begin{remark}
    When the weak convexity parameter satisfies $\rho \leq \gO(r \delta^{-1} \epsilon^{-1})$, the dominant term of the complexity reduces to $\gO(r^{{3}/{2}} \delta^{-1} \epsilon^{-3})$. The dependence on the accuracy parameters $\delta$ and $\epsilon$ matches that of the optimal stochastic first-order method for nonconvex nonsmooth optimization~\citep{cutkosky2023optimal}. 
    The factor of $r^{{3}/{2}}$ arises from the matrix-valued structure of the updates. 
    For comparison, the convergence analysis of Muon under smoothness assumptions by \citet{shen2025convergence} incurs a factor of $r^2$ in the stochastic setting, though we note that the two results target different stationarity concepts and operate under different assumptions.
\end{remark}

\section{A Practical Implementation}\label{sec:practical}
Computing the exact spectral clipping operator requires a full SVD decomposition at each iteration (step \ref{svd_op} in Algorithm~\ref{alg:musec}), which is prohibitively expensive for the large weight matrices encountered in LLM training. In this section, we develop an efficient SVD-free approximation of the clipping operation. We term the resulting algorithm Soft Musec.

Specifically, we replace the hard clipping function ${\rm clip}(w, D) = \min(w, D)$ with the smooth saturation function
\begin{align*}
h(w, D) = \frac{D   w}{\sqrt{w^2 + D^2}},
\end{align*}
which satisfies that $h(w, D) \approx w$ when $w \ll D$ and $h(w, D) \approx D$ when $w \gg D$.
Applying $h(\cdot, D)$ to each singular value of a matrix $\mM \in \R^{m \times n}$ while preserving its singular vectors yields the soft spectral clipping operator:
\begin{align}\label{spectral_clip_matrix}
    H(\mM, D) = D(\mM \mM^\top + D^2 \mI)^{-1/2} \mM.
\end{align}
The key computational challenge in evaluating the operator (\ref{spectral_clip_matrix}) is the matrix inverse square root $(\mM \mM^\top + D^2 \mI)^{-1/2}$. We approximate this using coupled Newton--Schulz iterations (\citealt[Chapter~6]{higham2008functions}; \citealt{jiang2026enhancing}), which involve only matrix--matrix multiplications and are therefore well-suited to modern GPU hardware. The addition of $D^2 \mI$ shifts all eigenvalues away from zero, ensuring numerical stability and rapid convergence of the iteration. In practice, we find that five Newton--Schulz iterations yield strong training performance in bfloat16 arithmetic.
The complete Soft Musec algorithm is provided in Algorithm~\ref{alg:soft_musec} in Appendix~\ref{app:practical_soft}.

\section{Experiments}\label{sec:experiments}

In this section, we evaluate the effectiveness and stability of Soft Musec on LLM training.

\subsection{Experimental Setup}
We compare Soft Musec against three baselines: Muon \citep{jordan2024muon}, MuonClip \citep{team2025kimi}, and SPECTRA \citep{jiang2026enhancing}. For SPECTRA, we use SGDM (stochastic gradient descent with momentum) as the base optimizer, which corresponds to Muon without the orthogonalization step. This isolates the effect of spectral clipping from Muon's orthogonalization.
We note that \citet{jiang2026enhancing} evaluated SPECTRA on AdEMAMix \citep{pagliardini2025ademamix}, AdamW \citep{loshchilov2019decoupled}, and Signum \citep{bernstein2018signsgd}; to our knowledge, our experiments provide the first empirical evaluation of spectral clipping applied to Muon in this setting.
For all methods, we tune weight decay over $\{0.096,0.3,0.6,1.2\}$. 
All methods use five Newton--Schulz iterations.
For Soft Musec and SPECTRA, we additionally tune the clipping threshold 
$D$ over $\{0.05,0.125,0.25,0.5,0.75,1.0\}$.

We adopt the modded-nanogpt codebase\footnote{\url{https://github.com/kellerjordan/modded-nanogpt}}, a decoder-only Transformer with modern architectural modifications, including RMSNorm \citep{zhang2019root}, Rotary Positional Embeddings (RoPE) \citep{su2024roformer}, and square ReLU activations \citep{so2021searching}.
We evaluate three model configurations—NanoGPT-Small (491M parameters), NanoGPT-Medium (613M parameters), and NanoGPT-Wide (1.63B parameters)—across three datasets: FineWeb \citep{penedo2024fineweb}, OpenWebText \citep{Gokaslan2019OpenWeb}, and C4 \citep{raffel2020exploring}.
For all experiments, the respective optimizer is applied to matrix parameters in non-embedding layers (attention and MLP weight matrices), while AdamW is used for embedding layers and vector parameters. 
%We use the same training budget and model configuration across optimizers.
%For each model configuration, we evaluate all methods over a common grid of learning rates and report the resulting validation loss, enabling a direct comparison of optimization performance and training stability across learning rates.
Full architectural and hyperparameter details are provided in Appendix~\ref{app:exp_details}.

\subsection{Stability Across Learning Rates}\label{sec:lr_stability}

We evaluate each method across learning rates ranging from $0.01$ to $0.8$ for each model configuration on FineWeb. The exact grid for each configuration is provided in Appendix \ref{app:hyperparameters}.
Results are shown in Figure~\ref{fig:lr_sweep_fineweb}; runs that diverge during training are omitted.
For NanoGPT-Small, all methods achieve comparable validation loss at smaller learning rates. As the learning rate increases, Soft Musec maintains stable convergence, while Muon and MuonClip begin to degrade. 
At the largest learning rate, Muon fails to converge across all model sizes. MuonClip also fails to converge on NanoGPT-Small and NanoGPT-Medium, and converges to substantially worse validation loss on NanoGPT-Wide. In contrast, Soft Musec and SPECTRA remain stable across all configurations.
%The stability gap becomes more pronounced at larger model scales: for NanoGPT-Medium and NanoGPT-Wide, Muon and MuonClip converge to substantially worse validation loss at learning rates greater than $0.1$, while Soft Musec and SPECTRA continue to converge reliably.
Across all configurations, Soft Musec achieves performance comparable to SPECTRA, with slight improvements in several settings. 
Consistent results on OpenWebText and C4 are reported in Appendix \ref{app:lr_sweep}.

\begin{figure}[htbp]
    \centering
    \begin{tabular}{ccc}
        % First Row
        \includegraphics[width=0.31\textwidth]{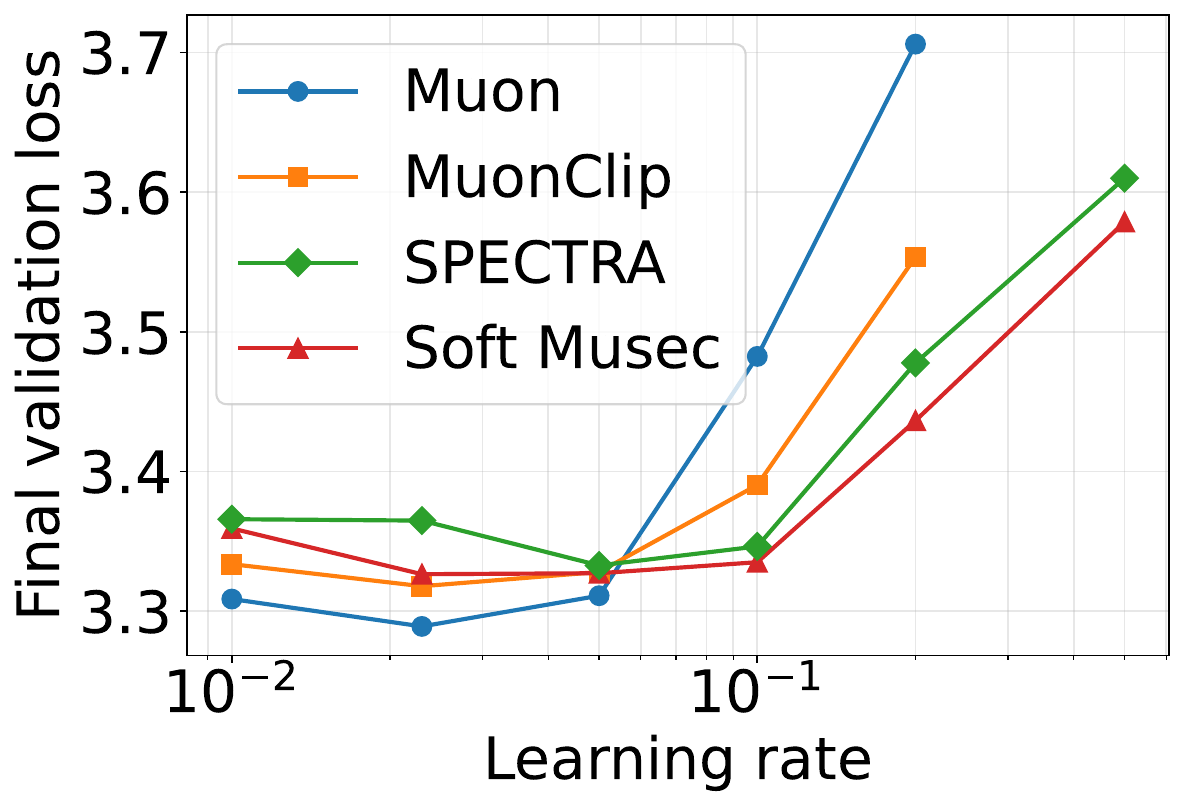} &
        \includegraphics[width=0.31\textwidth]{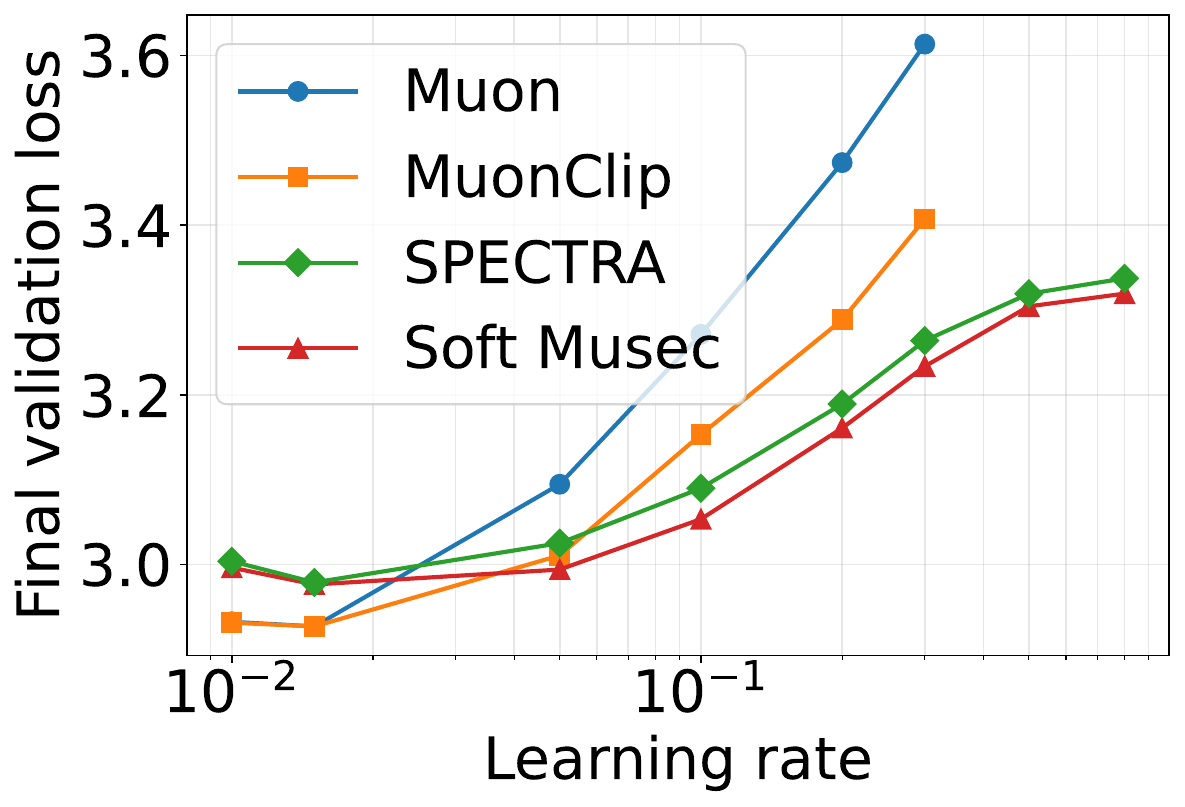} &
        \includegraphics[width=0.31\textwidth]{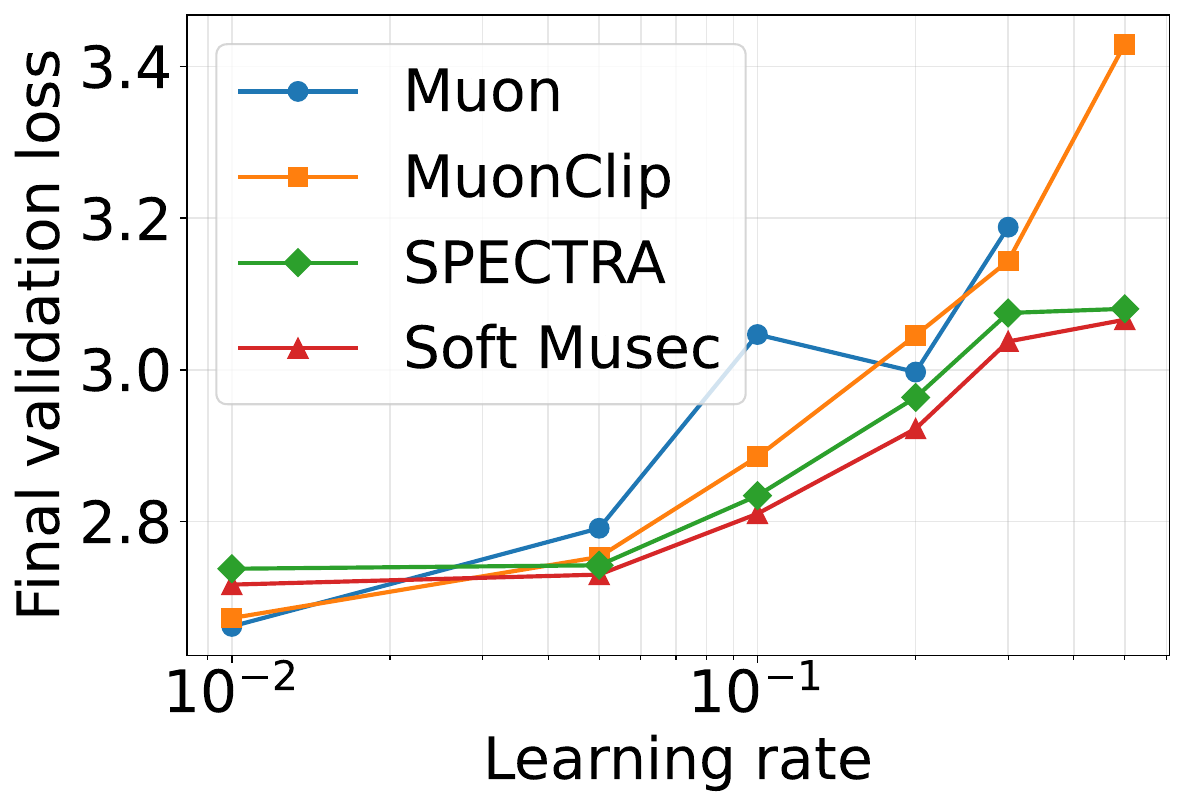} \\
        (a) NanoGPT-small & (b) NanoGPT-medium & (c) NanoGPT-wide \\[1em]
    \end{tabular}
    \caption{Validation loss versus learning rate on FineWeb across three model configurations.}
    \label{fig:lr_sweep_fineweb}
\end{figure}

\subsection{Training Dynamics}

To examine training behavior in greater detail, we train NanoGPT-Medium at a learning rate of $0.2$ on all three datasets and plot the validation loss over training steps in Figure~\ref{fig:training_dynamics_main}.

All methods exhibit a transient loss spike near step 400, which coincides with a scheduled transition in the modded-nanogpt training pipeline: the learning rate increases by $52\%$, the batch size doubles, and the attention window widens simultaneously. 
Notably, Soft Musec and SPECTRA recover from this spike quickly and smoothly, while Muon recovers slowly and converges to a substantially higher final validation loss. 
MuonClip improves over Muon but displays noticeable oscillations throughout training, suggesting that QK-weight clipping alone does not fully stabilize the optimization trajectory. In contrast, both Soft Musec and SPECTRA converge smoothly and reach comparable final validation loss.
Complete training dynamics across all datasets and model configurations are reported in Appendix~\ref{app:training_dynamics}.
% These results are consistent across all three datasets, illustrating that spectral clipping not only widens the range of stable learning rates (as shown in Section~\ref{sec:lr_stability}), but also improves resilience to transient training perturbations within a single run.
\begin{figure}[htbp]
    \centering
    \begin{tabular}{ccc}
        % First Row
        \includegraphics[width=0.31\textwidth]{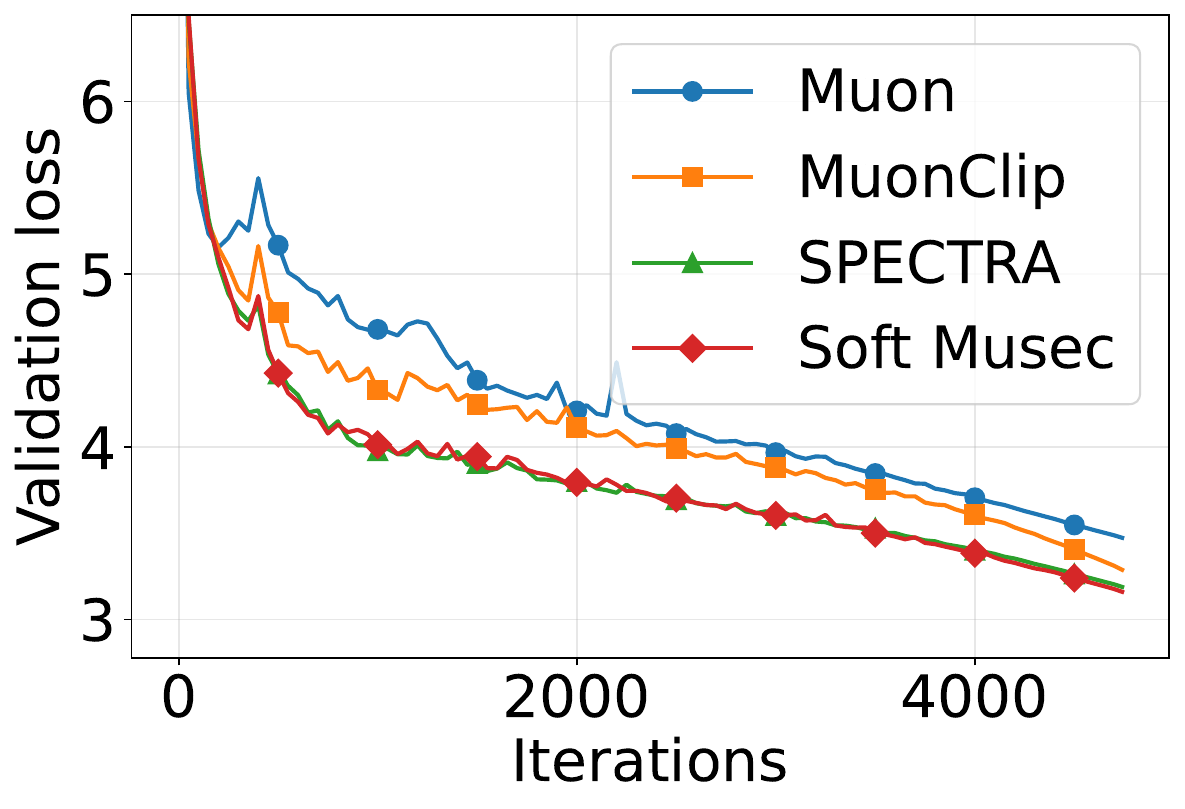} &
        \includegraphics[width=0.31\textwidth]{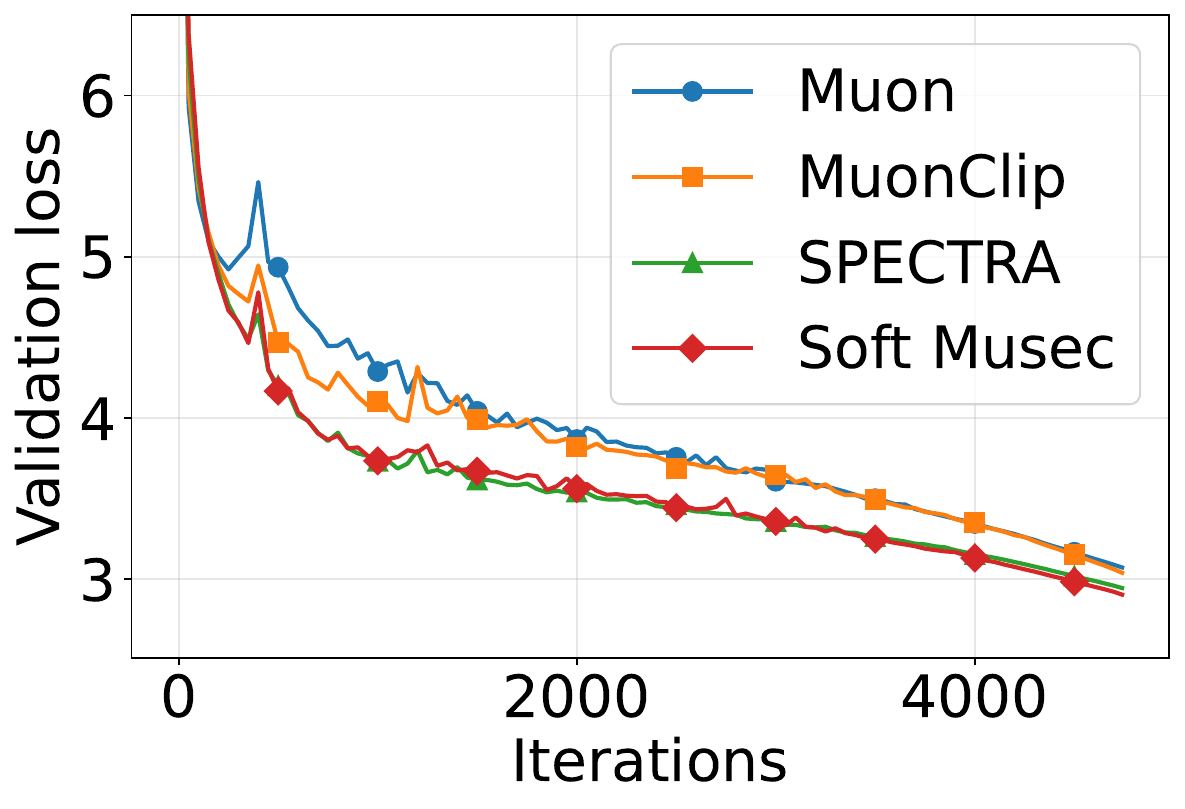} &
        \includegraphics[width=0.31\textwidth]{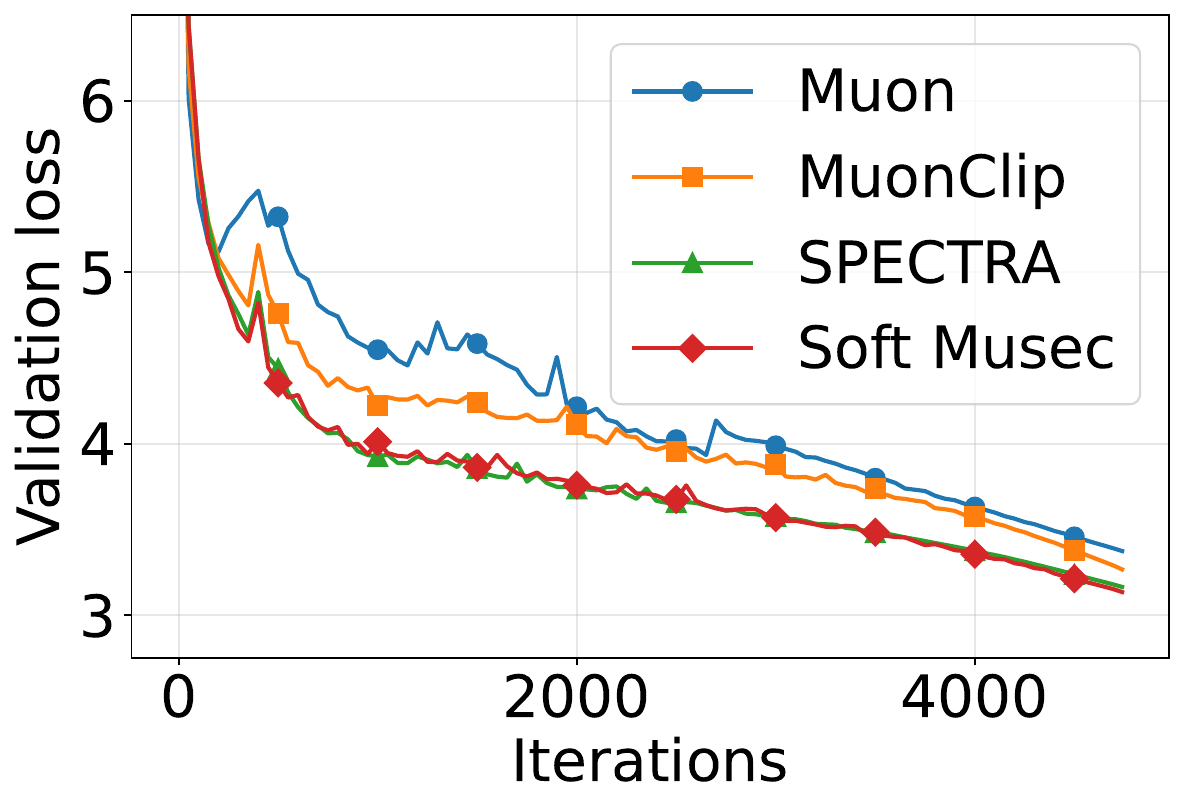} \\
        (a) FineWeb & (b) OpenWebText & (c) C4 \\[1em]
    \end{tabular}
    \caption{Validation loss versus steps of training NanoGPT-Medium on the three datasets.}
    \label{fig:training_dynamics_main}
\end{figure}

\subsection{Weight Norm Analysis}

To understand the source of the stability improvements, we track the spectral norm of weight matrices across three component types: query-key (QK) projections, value-output (VO) projections, and MLP weights, during NanoGPT-Medium training on FineWeb at learning rate~$\eta=0.2$. Results are shown in Figure~\ref{fig:weight_norm}.
Muon exhibits severe spectral-norm inflation across all three components, with norms reaching 300--400 and displaying large oscillations. This is consistent with the spectral flattening mechanism discussed in Section~\ref{sec:prelim}: by assigning comparable update magnitudes to all singular directions, Muon amplifies weakly represented directions, leading to substantial weight growth. The unchecked growth of QK norms is particularly concerning, as it can drive attention logit explosion \citep{team2025kimi}. MuonClip partially mitigates QK norm growth but leaves VO and MLP norms elevated, confirming that it only regulates the query-key matrices. In contrast, both Soft Musec and SPECTRA maintain spectral norms below 10 across all components, demonstrating that optimizer-level spectral clipping provides uniform stabilization across the entire model.

\begin{figure}[htbp]
    \centering
    \begin{tabular}{ccc}
        % First Row
        \includegraphics[width=0.31\textwidth]{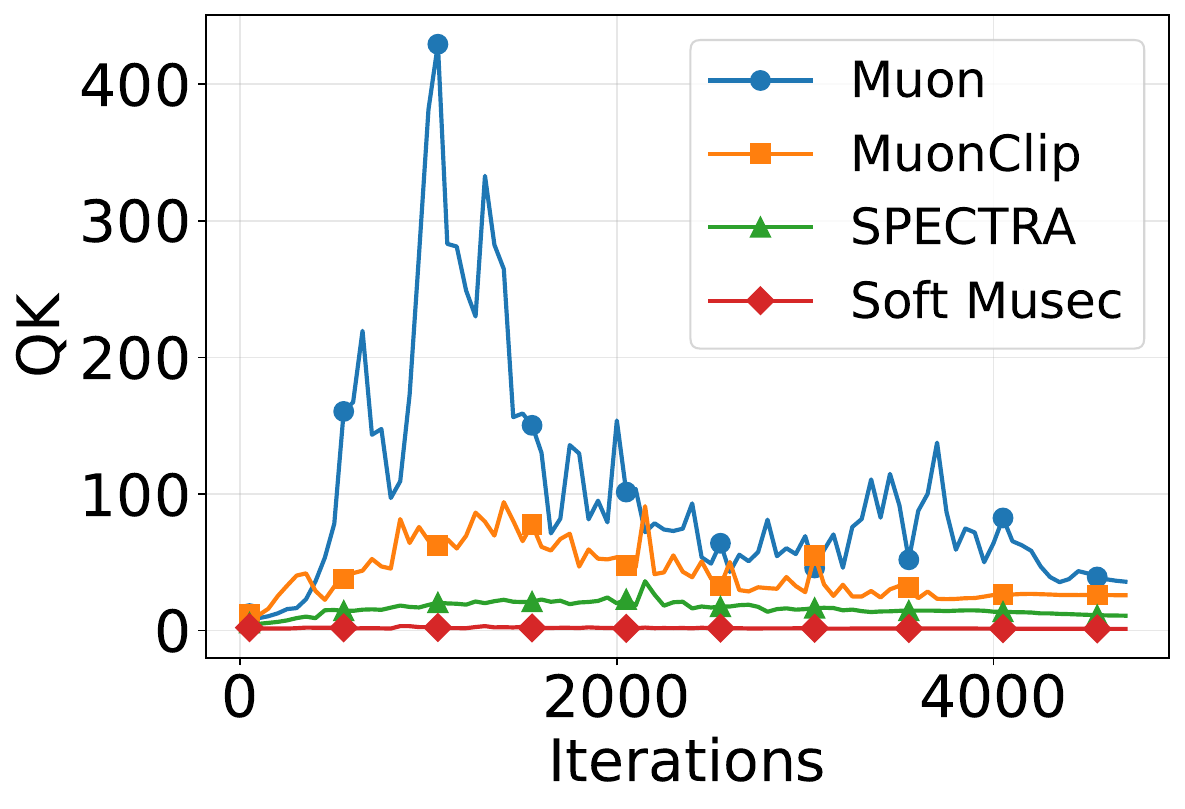} &
        \includegraphics[width=0.31\textwidth]{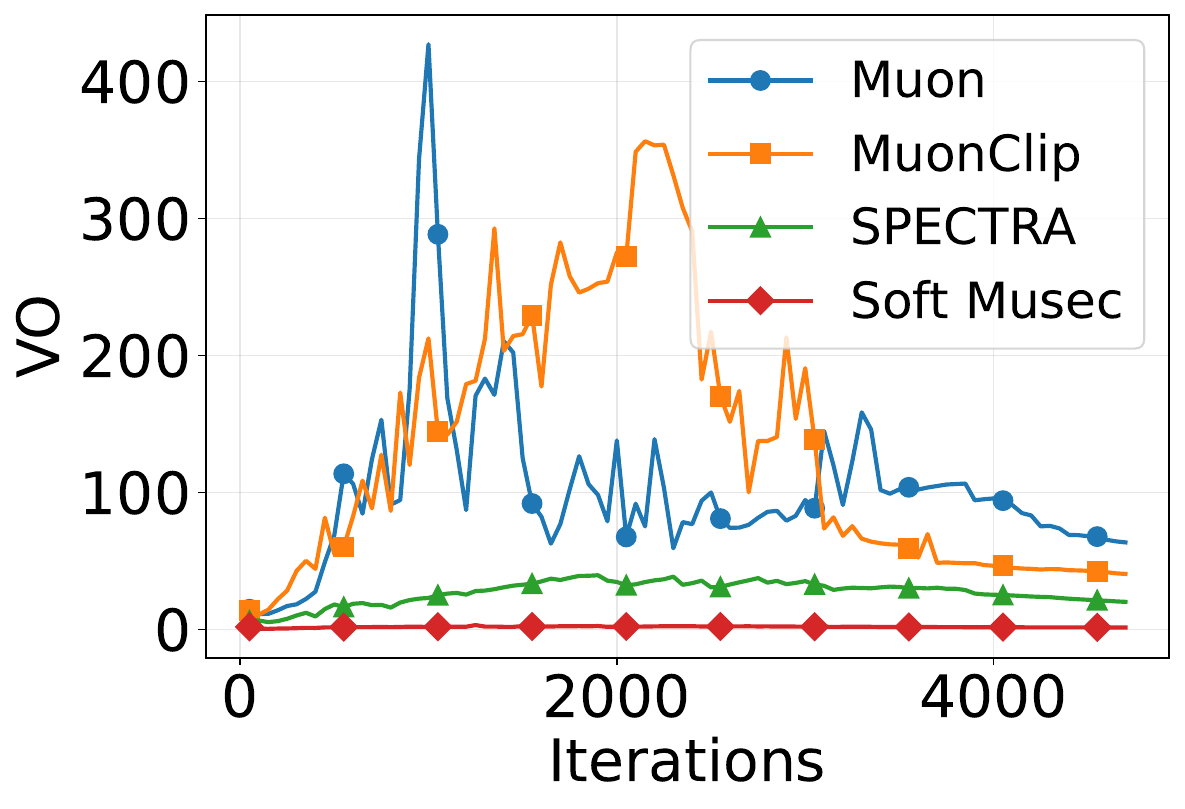} &
        \includegraphics[width=0.31\textwidth]{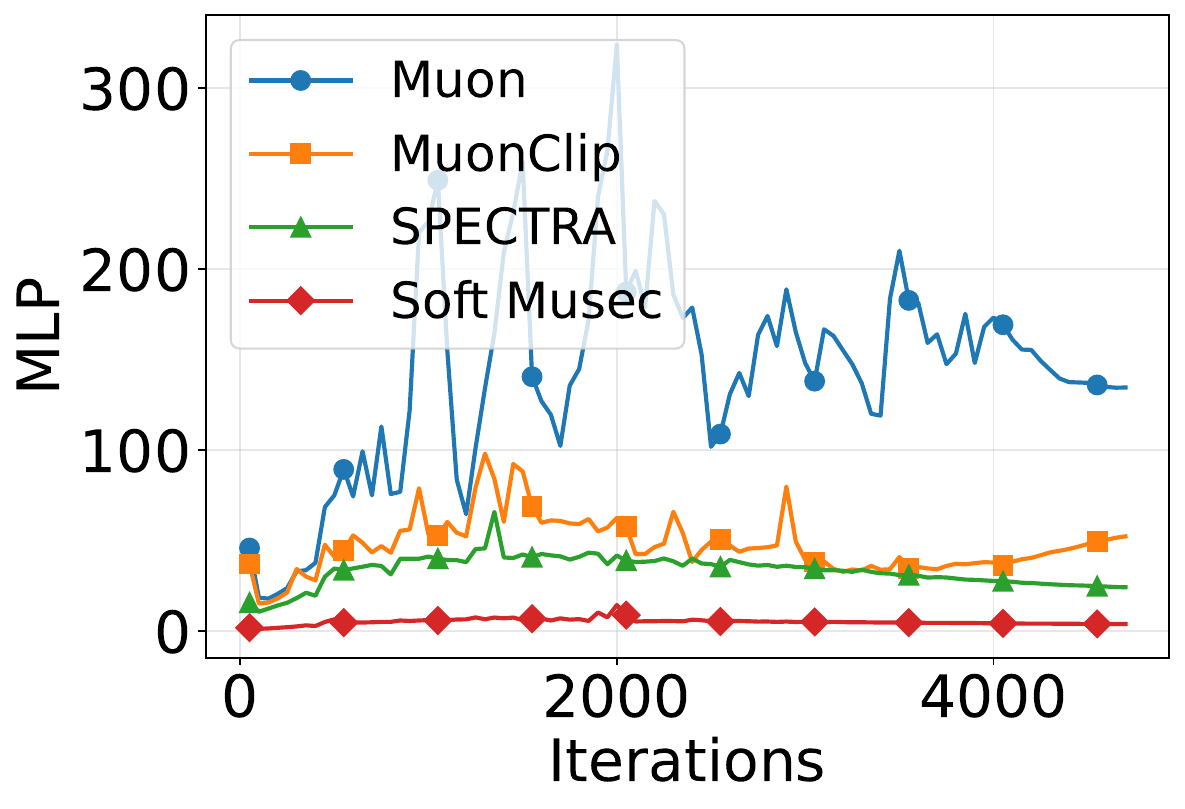} \\
        (a) QK projections & (b) VO projections & (c) MLP weights \\[1em]
    \end{tabular}
    \caption{Spectral norm of weight matrices over training steps for NanoGPT-Medium on FineWeb.}
    \label{fig:weight_norm}
\end{figure}

% Muon exhibits severe spectral-norm inflation across all three component types, with norms reaching $300$--$400$ and exhibiting large oscillations throughout training. This behavior is consistent with the spectral flattening mechanism discussed in Section~\ref{sec:prelim}: by assigning comparable update magnitudes to all singular directions, Muon can amplify directions with initially small spectral components, contributing to substantial weight growth. The unchecked growth of QK spectral norms is particularly concerning, as it can lead to excessively large attention logits~\citep{team2025kimi}.

% MuonClip partially mitigates QK spectral-norm growth by explicitly clipping the query-key matrices, but its VO and MLP norms remain elevated and volatile. This highlights the limitation discussed in Section~\ref{sec:intro}: MuonClip specifically targets the query-key matrices and does not directly regulate the remaining weight matrices, including VO and MLP weights.

% In contrast, both Soft Musec and SPECTRA maintain spectral norms below 10 across all three component types throughout training, compared with values exceeding 300 for Muon. 
% This demonstrates that optimizer-level spectral clipping provides uniform stabilization across the entire model.
% These results directly validate our central argument: by preserving the spectral structure of the momentum and suppressing excessively large singular components, spectral clipping mitigates the weight-norm inflation associated with Muon's spectral flattening.

\section{Conclusion}\label{sec:conclusion}

In this paper, we introduced Musec, which replaces Muon's spectral flattening with a spectral clipping operator that preserves the spectral structure of the momentum matrix while constraining excessively large singular values. 
We established the first convergence analysis of Muon-type algorithms in the nonconvex nonsmooth setting, with a rate matching the optimal dependence on the accuracy parameters for stochastic nonconvex nonsmooth optimization. 
On the practical side, we developed Soft Musec, an efficient SVD-free implementation based on coupled Newton--Schulz iterations, and demonstrated that it substantially improves training stability over existing Muon variants across multiple datasets and model sizes. %In particular, Soft Musec maintains stable convergence in settings where existing Muon variants diverge, while achieving comparable or better performance under well-tuned configurations.

%Several directions remain for future work. On the theoretical side, it would be valuable to investigate whether the weak convexity assumption can be relaxed to accommodate broader classes of nonconvex nonsmooth objectives. On the practical side, an interesting direction is to extend Musec beyond matrix-valued parameters by developing analogous spectral-clipping mechanisms for vector and tensor parameters, potentially leading to a unified optimizer across different parameter types.

\bibliography{iclr2027_conference}
\bibliographystyle{iclr2027_conference}

\newpage
\appendix
\section*{Appendix}
The appendix is organized as follows. 
Section~\ref{app:constrained} proves that Musec's update solves a Frobenius-norm-penalized quadratic subproblem subject to a spectral-norm constraint. 
Section~\ref{app:musec_convergence} provides the detailed convergence analysis of Musec for nonconvex nonsmooth optimization. 
Section~\ref{app:practical_soft} presents the complete Soft Musec algorithm with coupled Newton--Schulz iterations. 
Section~\ref{app:spectra_comp} compares the update rules of Musec and SPECTRA.
Section~\ref{app:exp_details} details the architecture configurations, training configurations, and hyperparameter choices for the NanoGPT experiments. 
Section~\ref{app:additional_exp} provides additional learning rate sweep results, training dynamics, effective rank analysis, computational overhead, and ablation studies.

\section{Proof of Proposition \ref{prop:constrained}}\label{app:constrained}
In this section, we give a formal proof of Proposition \ref{prop:constrained}. 
We first restate the proposition as follows.
\begin{proposition}
    Let $\mG_1, \dots, \mG_L$ be gradient matrices, let  $\lambda > 0$ be a sharpness parameter, and let~$D > 0$. For each $l = 1, \dots, L$, consider the problem
     \begin{align}
        \argmin_{\bm{\Delta}_l} \left[ \langle \mG_l, \bm{\Delta}_l\rangle + \frac{\lambda}{2} \norm{\bm{\Delta}_l}_F^2\right], ~~~{\rm s.t.}~~~\norm{\bm{\Delta}_l}_2 \leq \frac{D}{\lambda}.
    \end{align}
    where $\langle \cdot, \cdot \rangle$ denotes the Frobenius inner product and $\bm{\Delta}_l$ has the same shape as $\mG_l$.
    Let $\mG_l \in \R^{m_l \times n_l}$ have reduced SVD of the form $\mG_l = \mU_l \mS_l \mV_l^{\top}$ with $\mS_l = \mathbf{Diag}(\sigma_1, \dots, \sigma_{d_l})$ and $d_l = \min(m_l, n_l)$. Then Problem (\ref{constrain_prob}) is solved by
    \begin{align*}
        \bm{\Delta}_l = - \eta \cdot \mU_l \hat{\mS}_l \mV_l^{\top}, ~~~{\rm where}~~~\eta = \frac{1}{\lambda} ,~~~\hat{\mS}_l = \mathbf{Clip}(\mS_l, D).
    \end{align*}
\end{proposition}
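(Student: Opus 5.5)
The plan is to recast the objective as a Euclidean projection onto the spectral-norm ball and then solve that projection in closed form. Completing the square gives
\begin{align*}
\langle \mG_l, \bm{\Delta}_l\rangle + \frac{\lambda}{2}\norm{\bm{\Delta}_l}_F^2 = \frac{\lambda}{2}\norm{\bm{\Delta}_l + \mG_l/\lambda}_F^2 - \frac{1}{2\lambda}\norm{\mG_l}_F^2 .
\end{align*}
Hence Problem (\ref{constrain_prob}) is equivalent to finding the Frobenius projection of $-\mG_l/\lambda$ onto the convex set $\{\bm{\Delta}: \norm{\bm{\Delta}}_2 \le D/\lambda\}$. This set is closed and convex, and the objective is strongly convex, so the minimizer exists and is unique. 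It therefore suffices to show that the claimed point attains the minimum value.

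The key step is a lower bound on $\norm{\bm{\Delta} - \mA}_F^2$ for any feasible $\bm{\Delta}$, where $\mA = -\mG_l/\lambda = \mU_l(-\mS_l/\lambda)\mV_l^\top$. The singular values of $\mA$ are $a_i = \sigma_i/\lambda$. I would invoke von Neumann's trace inequality, $\langle \bm{\Delta}, \mA\rangle \le \sum_i s_i(\bm{\Delta}) a_i$, where the $s_i(\bm{\Delta}) \le D/\lambda$ are the ordered singular values of $\bm{\Delta}$. This yields
\begin{align*}
\norm{\bm{\Delta}-\mA}_F^2 \ge \sum_i \left(s_i(\bm{\Delta})^2 - 2 s_i(\bm{\Delta}) a_i + a_i^2\right) = \sum_i \left(s_i(\bm{\Delta}) - a_i\right)^2 .
\end{align*}
Each term is then minimized over $s_i \in [0, D/\lambda]$ independently. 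This scalar projection gives $s_i = \min(a_i, D/\lambda) = \min(\sigma_i, D)/\lambda$.

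To finish, I would check that $\bm{\Delta}_l^\star = -\frac{1}{\lambda}\mU_l\,\mathbf{Clip}(\mS_l,D)\,\mV_l^\top$ attains this bound. Its singular values are $\min(\sigma_i,D)/\lambda$, so it is feasible. It shares singular vectors with $\mA$, so von Neumann's inequality holds with equality. Thus $\bm{\Delta}_l^\star$ is the minimizer, with $\eta = 1/\lambda$. When $\mG_l$ has zero singular values, or the reduced SVD is not unique, the formula is unaffected, because clipping fixes zero.

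The only nontrivial ingredient is von Neumann's trace inequality, which I will cite rather than reprove. As an alternative check, I could verify the KKT conditions. These hold with multiplier $\mU_l\,\mathbf{Diag}((\sigma_i - D)_+)\mV_l^\top/\lambda$, which lies in the normal cone of the spectral ball at $\bm{\Delta}_l^\star$. The trace-inequality route is cleaner, however. Since the problem decouples over $l$, the argument applies to each layer separately.
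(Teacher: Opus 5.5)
Your proof is correct and is essentially the paper's argument: both use von Neumann's trace inequality to reduce to singular values and then solve separable box-constrained scalar quadratics to get $\min(\sigma_i,D)/\lambda$. Your projection reformulation and explicit lower-bound-plus-attainment step just make rigorous the paper's ``restrict attention to aligned singular vectors'' reduction. One minor slip in your optional KKT aside: the normal-cone element at $\bm{\Delta}_l^\star$ is $-\mU_l\,\mathbf{Diag}((\sigma_i-D)_+)\mV_l^\top/\lambda$, with a minus sign; in the $1\times 1$ case $\mG_l = g > D$ one has $\bm{\Delta}_l^\star = -D/\lambda$, where the normal cone is $(-\infty,0]$.
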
 

\begin{proof}
The Frobenius norm $\norm{\bm{\Delta}_l}_{F}$ and the spectral norm $\norm{\bm{\Delta}_l}_2$  depend on $\bm{\Delta}_l$ only through its singular values, while by Von Neumann's trace inequality the inner product $\langle \mG_l, \bm{\Delta}_l\rangle$ is minimized, for any fixed spectrum of $\bm{\Delta}_l$, by aligning $\bm{\Delta}_l$'s singular vectors with those of $-\mG_l$. It therefore suffices to restrict attention to
\begin{align*}
\bm{\Delta}_l = -\mU_l \mA_l \mV_l^{\top}, \quad \mA_l = \mathrm{diag}(a_1, \dots, a_{d_l}), \quad a_1 \geq a_2 \geq \dots \geq a_{d_l}\geq 0.
\end{align*}
Under this parametrization,
\begin{align*}
\langle \mG_l, \bm{\Delta}_l\rangle = -\sum_{i=1}^{d_l} \sigma_i a_i, \qquad \norm{\bm{\Delta}_l}_F^2 = \sum_{i=1}^{d_l} a_i^2, \qquad \norm{\bm{\Delta}_l}_2 = a_1.
\end{align*}
Since $a_1 \geq \dots \geq a_{d_l} \geq 0$, the constraint $a_1 \leq D / \lambda$ is equivalent to $0 \leq a_l \leq D / \lambda$ for every $i$, and Problem~(\ref{constrain_prob}) reduces to the separable box-constrained program
\begin{align}\label{equiv_prob}
\min_{ a_i \in [0, D/\lambda]} \sum_{i=1}^{d_l} \left[-\sigma_i a_i + \frac{\lambda}{2} a_i^2\right].
\end{align}
Each scalar subproblem is a quadratic in $a_i$ with unconstrained minimizer $\sigma_i / \lambda \geq 0$; clipping to the interval $[0, D/\lambda]$ gives
\begin{align*}
a_i^* = \min\left(\frac{\sigma_i}{\lambda}, \frac{D}{\lambda}\right) = \frac{1}{\lambda}\min(\sigma_i, D).
\end{align*}
The ordering $\sigma_1 \geq \dots \geq \sigma_{d_l}$ ensures $a_1^* \geq \dots \geq a_{d_l}^*$, so the monotonicity of the parametrization is automatically satisfied. Since $a_i^* = \min(\sigma_i, D)/\lambda$, we have $\mA_l^* = \frac{1}{\lambda} \hat{\mS_l}$ with $\hat{\mS}_l = \mathbf{Clip}(\mS_l, D)$, and hence 
\begin{align*}
\bm{\Delta}_l = -\mU_l \mA_l^* \mV_l^{\top} = -\eta \cdot \mU_l \hat{\mS}_l \mV_l^{\top},
\end{align*}
with $\eta = 1/ \lambda$, as claimed.
\end{proof}

\section{Convergence Analysis}\label{app:musec_convergence}
In this section, we provide the detailed convergence analysis of the Musec algorithm.
\subsection{Regret Analysis of Online Gradient Descent}
Before presenting the convergence analysis of the Musec algorithm, we first establish a convergence result for online gradient descent on a quadratic objective, which will subsequently serve as the basis for analyzing the clipping operation in Musec.

Consider a quadratic function of the form: 
\begin{align*}
    f_t(\mX)\coloneqq -\frac{\beta}{\gamma}\langle \mG_{t+1}, \mX\rangle + \frac{\beta}{2 \gamma} \norm{\mX}_F^2
\end{align*}
 in the matrix-valued variable $\mX$. 
We analyze the following standard online gradient descent (OGD) method with stepsize $\gamma > 0$ over a convex compact set $\gC$:
\begin{align}\label{eq:ogd_step}
\begin{split}
\mX_{t+1} 
\coloneqq & \Pi_\gC (\mX_t - \gamma\nabla f_t(\mX_t)) \\
= & \Pi_\gC (\mX_t + \beta \mG_{t+1} - \beta \mX_t)  \\
= & \Pi_\gC \left((1 - \beta)\mX_t + \beta \mG_{t+1}\right),
\end{split}
\end{align}
where $\Pi_\gC(\cdot)$ denotes the Euclidean Projection associated with the convex and compact set $\gC$.
The OGD step (\ref{eq:ogd_step}) is consistent with the update of the momentum variable $\mM_n$ in Algorithm \ref{alg:musec} when we choose $\gC= \{\mX: \norm{\mX} \leq D \}$.

Let ${\rm Regret}_T(\overline{\mX})$ denote the regret of the OGD algorithm with respect to some $\overline{\mX} \in \gC$ after $T$ iterations, defined as:
\begin{align*}
    {\rm Regret}_T(\overline{\mX}) \coloneqq \sum_{t=0}^{T-1} f_t(\mX_t) - \sum_{t=0}^{T-1} f_t(\overline{\mX}).
\end{align*}
We present the regret analysis of the OGD algorithm as follows.

\begin{lemma}\label{lemma:regret_bound}
    Suppose that $\beta \leq 1/8$, then the OGD update (\ref{eq:ogd_step}) applied to the sequence of quadratic functions $\{f_t (\mX)\}_{t=0}^{T-1}$ over the convex constraint set $\gC$ guarantees that for all $T \geq 1$ and all $\overline{\mX} \in \gC$:
    \begin{align*}
        & {\rm Regret}_T(\overline{\mX}) \\
        \leq & \sum_{t=0}^{T-1} \left(-\frac{\beta}{8\gamma}\norm{\mX_t}_F^2 + \frac{\beta}{2 \gamma}\norm{\overline{\mX}}_F^2 + \frac{\beta^2}{\gamma} \norm{\mG_{t+1}}_F^2\right) + \frac{\norm{\mX_0 }_F^2 + \norm{\overline{\mX}}_F^2}{\gamma}.
    \end{align*}
\end{lemma}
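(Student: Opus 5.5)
The approach is the standard one-step projected OGD analysis, with a telescoping distance term. Before that, two facts about the loss make the bound stronger than the usual one. First, $f_t$ is $(\beta/\gamma)$-strongly convex. Second, its gradient is $\nabla f_t(\mX) = \frac{\beta}{\gamma}(\mX - \mG_{t+1})$, so $\gamma\nabla f_t(\mX_t) = \beta(\mX_t - \mG_{t+1})$.

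\textbf{Step 1: one-step inequality.} Since $\overline{\mX}\in\gC$, nonexpansiveness of $\Pi_\gC$ gives
\begin{align*}
\norm{\mX_{t+1}-\overline{\mX}}_F^2 \le \norm{\mX_t-\overline{\mX}}_F^2 - 2\gamma\langle \nabla f_t(\mX_t), \mX_t-\overline{\mX}\rangle + \gamma^2\norm{\nabla f_t(\mX_t)}_F^2 .
\end{align*}
Strong convexity then gives
\begin{align*}
f_t(\mX_t)-f_t(\overline{\mX}) \le \langle \nabla f_t(\mX_t),\mX_t-\overline{\mX}\rangle - \frac{\beta}{2\gamma}\norm{\mX_t-\overline{\mX}}_F^2 .
\end{align*}
Combining the two, I get
\begin{align*}
f_t(\mX_t)-f_t(\overline{\mX}) \le \frac{\norm{\mX_t-\overline{\mX}}_F^2-\norm{\mX_{t+1}-\overline{\mX}}_F^2}{2\gamma} - \frac{\beta}{2\gamma}\norm{\mX_t-\overline{\mX}}_F^2 + \frac{\beta^2}{2\gamma}\norm{\mX_t-\mG_{t+1}}_F^2 .
\end{align*}

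\textbf{Step 2: summing.} Summing over $t$ telescopes the first term to at most $\norm{\mX_0-\overline{\mX}}_F^2/(2\gamma) \le (\norm{\mX_0}_F^2+\norm{\overline{\mX}}_F^2)/\gamma$. This already matches the final term of the target.

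\textbf{Step 3: the remaining terms.} For the gradient-norm term I use $\norm{\mX_t-\mG_{t+1}}_F^2\le 2\norm{\mX_t}_F^2+2\norm{\mG_{t+1}}_F^2$. This contributes $\frac{\beta^2}{\gamma}\norm{\mX_t}_F^2 + \frac{\beta^2}{\gamma}\norm{\mG_{t+1}}_F^2$, and the second piece is exactly the $\frac{\beta^2}{\gamma}\norm{\mG_{t+1}}_F^2$ term of the target.

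For the strong-convexity term I use $\norm{\mX_t-\overline{\mX}}_F^2 \ge \frac12\norm{\mX_t}_F^2 - \norm{\overline{\mX}}_F^2$. Then
\begin{align*}
-\frac{\beta}{2\gamma}\norm{\mX_t-\overline{\mX}}_F^2 \le -\frac{\beta}{4\gamma}\norm{\mX_t}_F^2+\frac{\beta}{2\gamma}\norm{\overline{\mX}}_F^2 .
\end{align*}

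Collecting the coefficients of $\norm{\mX_t}_F^2$ gives $-\frac{\beta}{4\gamma}+\frac{\beta^2}{\gamma}$. When $\beta\le 1/8$ this is at most $-\frac{\beta}{4\gamma}+\frac{\beta}{8\gamma} = -\frac{\beta}{8\gamma}$, which yields the claimed per-step term $-\frac{\beta}{8\gamma}\norm{\mX_t}_F^2+\frac{\beta}{2\gamma}\norm{\overline{\mX}}_F^2+\frac{\beta^2}{\gamma}\norm{\mG_{t+1}}_F^2$.

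The main obstacle is the constant bookkeeping in Step 3. The task is to split the strong-convexity term and the gradient-norm term so that a net negative multiple of $\norm{\mX_t}_F^2$ survives, and this is exactly where $\beta\le 1/8$ is used. If the simple splittings above turn out to give slightly different constants, I would retune the Young parameters, for example $\norm{a-b}^2\ge(1-c)\norm{a}^2-(1/c-1)\norm{b}^2$. The stated bound has slack in the $\norm{\overline{\mX}}_F^2$ coefficients, so some reweighting is available.
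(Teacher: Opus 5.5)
Your proof is correct and follows the paper's argument essentially step for step. Both use nonexpansiveness of $\Pi_\gC$ together with $(\beta/\gamma)$-strong convexity, telescope and bound $\norm{\mX_0-\overline{\mX}}_F^2/(2\gamma)\le(\norm{\mX_0}_F^2+\norm{\overline{\mX}}_F^2)/\gamma$, apply Young's inequality to the gradient term and the reverse bound $\norm{\mX_t-\overline{\mX}}_F^2\ge\tfrac12\norm{\mX_t}_F^2-\norm{\overline{\mX}}_F^2$, and finally use $\beta\le 1/8$ to turn $\frac{\beta}{\gamma}(\beta-\frac14)$ into $-\frac{\beta}{8\gamma}$; the constants close exactly as you computed, so no retuning is needed.
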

\begin{proof}
    Fix any $\overline{\mX}\in \gC$. By the non-expansiveness of the projection operator,
    \begin{align*}
       & \norm{\mX_{t+1} - \overline{\mX}}_F^2 \\
       = & \norm{\Pi_\gC(\mX_t - \gamma \nabla f_t(\mX_t)) - \overline{\mX}}_F^2 \\
        \leq & \norm{\mX_t - \gamma \nabla f_t(\mX_t) - \overline{\mX}}_F^2 \\
        = & \norm{\mX_t - \overline{\mX}}_F^2 + \gamma^2 \norm{\nabla f_t(\mX_t)}_F^2 - 2\gamma \langle \nabla f_t(\mX_t), \mX_t - \overline{\mX}\rangle.
    \end{align*}
    Rearranging the terms yields
    \begin{align}\label{eq:inner_prod_bound}
        \langle \nabla f_t(\mX_t), \mX_t - \overline{\mX}\rangle \leq \frac{\norm{\mX_t - \overline{\mX}}_F^2 - \norm{\mX_{t+1} - \overline{\mX}}_F^2}{2 \gamma} + \frac{\gamma \norm{\nabla f_t(\mX_t)}_F^2}{2}.
    \end{align}
    Accordingly, we can show that
    \begin{align*}
        & {\rm Regret}_T(\overline{\mX})\\
        = &\sum_{t=0}^{T-1} f_t(\mX_t) - \sum_{t=0}^{T-1} f_t(\overline{\mX}) \\
        \leq & \sum_{t=0}^{T-1} \left(\langle \nabla f_t(\mX_t), \mX_t - \overline{\mX} \rangle - \frac{\beta}{2 \gamma} \norm{\mX_t - \overline{\mX}}_F^2\right) \\
        \leq & - \frac{\beta}{2 \gamma} \sum_{t=0}^{T-1} \norm{\mX_t -\overline{\mX}}_F^2 + \frac{\norm{\mX_0 - \overline{\mX}}_F^2}{2\gamma} - \frac{\norm{\mX_{T} - \overline{\mX}}_F^2}{2 \gamma} + \sum_{t=0}^{T-1}\frac{\gamma \norm{\nabla f_t(\mX_t)}_F^2}{2}\\
        \leq & - \frac{\beta}{2 \gamma} \sum_{t=0}^{T-1} \norm{\mX_t -\overline{\mX}}_F^2 + \frac{\norm{\mX_0 }_F^2 + \norm{\overline{\mX}}_F^2}{\gamma} + \sum_{t=0}^{T-1}\frac{\gamma \norm{- \frac{\beta}{\gamma} \mG_{t+1} + \frac{\beta}{\gamma} \mX_t}_F^2}{2}\\
         \leq & - \frac{\beta}{2 \gamma} \sum_{t=0}^{T-1} \left(\frac{\norm{\mX_t}_F^2}{2} - \norm{\overline{\mX}}_F^2\right) + \frac{\norm{\mX_0 }_F^2 + \norm{\overline{\mX}}_F^2}{\gamma} + \sum_{t=0}^{T-1}\left(\frac{\beta^2 \norm{\mG_{t+1}}_F^2}{\gamma} + \frac{\beta^2}{\gamma}\norm{\mX_t}_F^2\right)\\
         =&  \sum_{t=0}^{T-1} \left(\frac{\beta}{\gamma}(\beta - \frac{1}{4})\norm{\mX_t}_F^2 + \frac{\beta}{2 \gamma}\norm{\overline{\mX}}_F^2 + \frac{\beta^2}{\gamma} \norm{\mG_{t+1}}_F^2\right) + \frac{\norm{\mX_0 }_F^2 + \norm{\overline{\mX}}_F^2}{\gamma}.
    \end{align*}
    The first inequality is due to the $\beta / \gamma$-strong convexity of $f_t(\cdot)$; the second inequality follows from~(\ref{eq:inner_prod_bound}); the third inequality follows from Young's inequality. The last inequality is due to the reverse Young's inequality: $\norm{\mA - \mB}_F^2 \geq \norm{\mA}_F^2 / 2 - \norm{\mB}_F^2$.
    Since $\beta \leq 1/8$, then we have
    \begin{align*}
        {\rm Regret}_T(\overline{\mX}) \leq \sum_{t=0}^{T-1} \left(-\frac{\beta}{8\gamma}\norm{\mX_t}_F^2 + \frac{\beta}{2 \gamma}\norm{\overline{\mX}}_F^2 + \frac{\beta^2}{\gamma} \norm{\mG_{t+1}}_F^2\right) + \frac{\norm{\mX_0 }_F^2 + \norm{\overline{\mX}}_F^2}{\gamma}.
    \end{align*}
\end{proof}

\subsection{Proof of Lemma \ref{lemma:core_recur}}
We first restate Lemma \ref{lemma:core_recur} as follows.
\begin{lemma}
      Let $\gamma = \beta / \eta$ and $\eta \leq 1 / \rho$ where $\beta \leq 1/8$, then for any $k\in [K]$, the sequence $\{\mW_t^{(k)}\}_{t=1}^T$ generated by Algorithm \ref{alg:musec} satisfies
  \begin{align*}
        &  \E\left[f(\mW_T^{(k)}) - f(\mW_0^{(k)}) \right] \\
        \leq  & -\E\left[\frac{\beta D T}{\gamma}  \norm{\overline{\mH}^{(k)}}_F + \sum_{t=0}^{T-1} \frac{\beta}{8\gamma}\norm{\mM_t^{(k)}}_F^2\right] + \frac{\beta D \sigma\sqrt{T} }{\gamma} + \left(\frac{\beta T}{\gamma} + \frac{1}{\gamma} \right)D^2 + \frac{\beta^2 L^2 T}{\gamma} + \frac{r D^2}{\gamma}.
    \end{align*}
\end{lemma}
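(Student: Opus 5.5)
The plan is to read the clipped momentum recursion as projected online gradient descent and combine a weak-convexity descent step with Lemma~\ref{lemma:regret_bound}, in the spirit of the online-to-nonconvex conversion. The key observation is that spectral clipping $\mU\,\mathbf{Clip}(\widehat{\mS},D)\mV^\top$ is exactly the Frobenius projection onto $\gC=\{\mX:\norm{\mX}_2\le D\}$. So within epoch $k$ the sequence $\mX_t=\mM_t^{(k)}$ is the OGD iterate (\ref{eq:ogd_step}) with losses $f_t(\mX)=-\eta\langle\mG_{t+1},\mX\rangle+\frac{\eta}{2}\norm{\mX}_F^2$, where I use $\beta/\gamma=\eta$ and write $\mG_{t+1}=\mG^{(k)}_{t+1}$.

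\emph{Step 1 (one-step descent).} I apply the weak convexity inequality with $\mW_2=\mW_{t+1}$, $\mW_1=\mW_t$ and $\vg=\mH_{t+1}\in\partial f(\mW_{t+1})$. Using $\mW_t-\mW_{t+1}=\eta\mM_t$, this gives
\[
f(\mW_{t+1})-f(\mW_t)\le -\eta\langle \mH_{t+1},\mM_t\rangle+\tfrac{\rho\eta^2}{2}\norm{\mM_t}_F^2 .
\]
I then write $\mH_{t+1}=\mG_{t+1}-(\mG_{t+1}-\mH_{t+1})$. Since $\mM_t$ is $\gF_t$-measurable, the cross term with the noise has zero expectation. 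Moreover $-\eta\langle\mG_{t+1},\mM_t\rangle=f_t(\mM_t)-\frac{\eta}{2}\norm{\mM_t}_F^2$. Because $\eta\le 1/\rho$, we have $\frac{\rho\eta^2}{2}\le\frac{\eta}{2}$, so the quadratic terms cancel to something nonpositive. Summing over $t$, I obtain
\[
\E[f(\mW_T)-f(\mW_0)]\le \E\Big[\textstyle\sum_t f_t(\mM_t)\Big]=\E\Big[{\rm Regret}_T(\overline{\mX})+\textstyle\sum_t f_t(\overline{\mX})\Big].
\]
This holds for any comparator $\overline{\mX}\in\gC$, which may be random.

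\emph{Step 2 (comparator).} I choose $\overline{\mX}=-D\,\overline{\mH}^{(k)}/\norm{\overline{\mH}^{(k)}}_F$. Its Frobenius norm is $D$, so its spectral norm is at most $D$ and it lies in $\gC$. Then
\[
\sum_t f_t(\overline{\mX})=-\eta T\langle\overline{\mH},\overline{\mX}\rangle-\eta\Big\langle\textstyle\sum_t(\mG_{t+1}-\mH_{t+1}),\overline{\mX}\Big\rangle+\tfrac{\eta T D^2}{2}.
\]
The first term equals $-\eta TD\norm{\overline{\mH}}_F$. For the noise term I use Cauchy--Schwarz rather than a zero-mean argument, since $\overline{\mX}$ depends on the whole epoch. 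This bounds it by $\eta D\,\E\norm{\sum_t(\mG_{t+1}-\mH_{t+1})}_F\le\eta D\sigma\sqrt{T}$, because martingale differences are orthogonal in $L^2$.

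\emph{Step 3 (regret).} I invoke Lemma~\ref{lemma:regret_bound} and bound each term as follows:
\begin{itemize}
\item $\norm{\overline{\mX}}_F^2=D^2$, which contributes $\frac{\beta T D^2}{2\gamma}+\frac{D^2}{\gamma}$;
\item $\norm{\mX_0}_F^2=\norm{\mM_0^{(k)}}_F^2\le rD^2$, since the matrix has rank at most $r$ and spectral norm at most $D$, giving the $\frac{rD^2}{\gamma}$ term;
\item $\norm{\mG_{t+1}}_F\le L$ by $L$-Lipschitzness of $F(\cdot;\xi)$, giving $\frac{\beta^2L^2T}{\gamma}$.
\end{itemize}
The $-\frac{\beta}{8\gamma}\sum_t\norm{\mM_t}_F^2$ term is carried through unchanged. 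Adding $\frac{\eta TD^2}{2}=\frac{\beta TD^2}{2\gamma}$ from Step 2 yields the claimed $\big(\frac{\beta T}{\gamma}+\frac1\gamma\big)D^2$, and substituting $\eta=\beta/\gamma$ throughout gives the stated bound.

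The main obstacle is the bookkeeping in Steps 1--2. The noise term paired with $\mM_t$ must be killed by measurability, while the noise term paired with the random comparator must instead be controlled by the martingale $L^2$ bound. I also need to check that the indexing of $\mH$, $\mG$ and $\mM$ at epoch boundaries matches the OGD indexing $f_t\leftrightarrow\mG_{t+1}$. Finally, the use of $\norm{\mG}_F\le L$ needs the oracle to be a subgradient of the $L$-Lipschitz $F(\cdot;\xi)$.
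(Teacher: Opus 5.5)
Your argument follows the paper's proof essentially line for line. The shared steps are: weak convexity at $\mW_{t+1}$, with the cross term $\langle \mG_{t+1}-\mH_{t+1},\mM_t\rangle$ removed by measurability; the identification of the clipped momentum recursion with projected OGD on $f_t$; Lemma~\ref{lemma:regret_bound} together with $\norm{\mM_0^{(k)}}_F^2\le rD^2$ and $\norm{\mG_{t+1}}_F\le L$; and Cauchy--Schwarz plus the martingale $L^2$ bound for the noise paired with the random comparator. Your caveat that $\norm{\mG_{t+1}}_F\le L$ needs the oracle to be a subgradient of $F(\cdot;\xi)$ is fair; the paper uses it the same way without comment.

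There is, however, a sign error in Step 2 that breaks the bound as written. Since $f_t(\mX)=-\eta\langle\mG_{t+1},\mX\rangle+\frac{\eta}{2}\norm{\mX}_F^2$, your choice $\overline{\mX}=-D\,\overline{\mH}^{(k)}/\norm{\overline{\mH}^{(k)}}_F$ gives $-\eta T\langle\overline{\mH}^{(k)},\overline{\mX}\rangle=+\eta TD\norm{\overline{\mH}^{(k)}}_F$. You claimed $-\eta TD\norm{\overline{\mH}^{(k)}}_F$, and with the positive sign the inequality says nothing about stationarity. The momentum tracks $+\mG$ and the step is $\mW_{t+1}=\mW_t-\eta\mM_t$, so the comparator must point along $+\overline{\mH}^{(k)}$. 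Take $\overline{\mX}=D\,\overline{\mH}^{(k)}/\norm{\overline{\mH}^{(k)}}_F$, which is the paper's $\overline{\mM}^{(k)}$, or $\overline{\mX}=\vzero$ if $\overline{\mH}^{(k)}=\vzero$. With that correction, the rest of your Steps 2--3 goes through unchanged and gives exactly the stated bound.
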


\begin{proof}
    Let us consider the filtration $\gF_n = \sigma(\mG_1, \mG_2, \dots, \mG_{n})$, where $\sigma(\cdot)$ denotes the generated $\sigma$-field.
    From Assumption \ref{asm:bound_var}, we can infer that 
    \begin{align*}
        \mH_n \coloneqq \E[\mG_n \mid \gF_{n-1}] \in \partial f(\mW_n).
    \end{align*}
    The weak convexity of $f(\cdot)$ implies that for all $n \geq 1$ and $\eta^{-1} \geq \rho$,
    \begin{align*}
        f(\mW_{n+1}) - f(\mW_{n}) \leq & \langle \mH_{n+1}, \mW_{n+1} - \mW_{n} \rangle + \frac{1}{2 \eta} \norm{\mW_{n+1} - \mW_{n}}_F^2 \\
        = & \E\left[\langle \mG_{n+1}, \mW_{n+1} - \mW_{n} \rangle + \frac{1}{2 \eta} \norm{\mW_{n+1} - \mW_{n}}_F^2 \mid \gF_{n}\right].
    \end{align*}
    The law of total expectation implies that
    \begin{align*}
        \E [f(\mW_{n+1}) - f(\mW_{n})] \leq \E\left[\langle \mG_{n+1}, \mW_{n+1} - \mW_{n} \rangle + \frac{1}{2 \eta} \norm{\mW_{n+1} - \mW_{n}}_F^2 \right].
    \end{align*}
    Fix an arbitrary $k \in [K]$. Recall that $\mW_t^{(k)} = \mW_{(k-1)T + t}$ where $t\in\{0\}\cup[T-1]$ and we similarly define the notions of $\mG_t^{(k)}$, $\mH_t^{(k)}$ and $\mM_t^{(k)}$.
    We can express the previous inequality as 
    \begin{align*}
         \E [f(\mW_{t+1}^{(k)}) - f(\mW_{t}^{(k)})] \leq \E\left[\langle \mG_{t+1}^{(k)}, \mW_{t+1}^{(k)} - \mW_{t}^{(k)} \rangle + \frac{1}{2 \eta} \norm{\mW_{t+1}^{(k)} - \mW_{t}^{(k)}}_F^2 \right].
    \end{align*}
    Summing the above inequality over $t = 0, \dots, T-1$ yields that
    \begin{align}\label{sum_eq}
        \E[f(\mW_T^{(k)}) - f(\mW_0^{(k)})] \leq \E\left[\sum_{t=0}^{T-1} \left(\langle \mG_{t+1}^{(k)}, \mW_{t+1}^{(k)} - \mW_{t}^{(k)} \rangle + \frac{1}{2 \eta} \norm{\mW_{t+1}^{(k)} - \mW_{t}^{(k)}}_F^2\right) \right].
    \end{align}
    From the update rule of $\mW_{t+1}^{(k)}$ (step \ref{update_step} in Algorithm \ref{alg:musec}), we have
    \begin{align*}
        \mM_{t}^{(k)} = \frac{\mW_{t}^{(k)} - \mW_{t+1}^{(k)}}{\eta},
    \end{align*}
    then we can rewrite the inequality (\ref{sum_eq}) as:
     \begin{align}\label{regret_eq1}
        \E[f(\mW_T^{(k)}) - f(\mW_0^{(k)})] \leq \E\left[\sum_{t=0}^{T-1} \left(- \eta \langle \mG_{t+1}^{(k)}, \mM_{t}^{(k)}\rangle + \frac{ \eta}{2 } \norm{\mM_{t}^{(k)} }_F^2\right) \right].
    \end{align}
    If we choose $\beta = \eta \gamma$, and we denote the quadratic function
    \begin{align}\label{eq:quadra_func}
        f_t^{(k)}(\mX) = - \frac{\beta}{\gamma} \langle \mG_{t+1}^{(k)}, \mX\rangle + \frac{\beta}{2 \gamma} \norm{\mX}_F^2.
    \end{align}
    
    Since $\beta \leq 1 / 8$, we can apply Lemma \ref{lemma:regret_bound} for the regret analysis with respect to the quadratic function $f_t^{(k)}(\mX)$ by taking $\mX_t = \mM_t^{(k)}$ and $\overline{\mX} = \overline{\mM}^{(k)}$, $\gC = \{\mX:\norm{\mX}_2 \leq D\}$:
    \begin{align}\label{eq:regret_analysis}
    \begin{split}
        & {\rm Regret}_T(\overline{\mM}) \\
        \coloneqq & \sum_{t=0}^{T-1} f_t^{(k)}(\mM_t^{(k)}) - \sum_{t=0}^{T-1} f_t^{(k)}(\overline{\mM}^{(k)}) \\
        \leq & \sum_{t=0}^{T-1} \left(-\frac{\beta}{8\gamma}\norm{\mM_t^{(k)}}_F^2 + \frac{\beta}{2 \gamma}\norm{\overline{\mM}^{(k)}}_F^2 + \frac{\beta^2}{\gamma} \norm{\mG_{t+1}^{(k)}}_F^2\right) + \frac{\norm{\mM_0^{(k)} }_F^2 + \norm{\overline{\mM}^{(k)}}_F^2}{\gamma}.
    \end{split}
    \end{align}
    Substitute the definition of the quadratic function (\ref{eq:quadra_func}) into (\ref{eq:regret_analysis}):
    {\small
    \begin{align*}
         & \sum_{t=0}^{T-1} \left(- \frac{\beta}{\gamma} \langle \mG_{t+1}^{(k)}, \mM_t^{(k)}\rangle + \frac{\beta}{2 \gamma} \norm{\mM_t^{(k)}}_F^2 \right) \\
         \leq & \sum_{t=0}^{T-1} \left(- \frac{\beta}{\gamma} \langle \mG_{t+1}^{(k)}, \overline{\mM}^{(k)}\rangle + \frac{\beta}{2 \gamma} \norm{\overline{\mM}^{(k)}}_F^2 \right) + \sum_{t=0}^{T-1} \left(-\frac{\beta}{8\gamma}\norm{\mM_t^{(k)}}_F^2 + \frac{\beta}{2 \gamma}\norm{\overline{\mM}^{(k)}}_F^2 + \frac{\beta^2}{\gamma} \norm{\mG_{t+1}^{(k)}}_F^2\right) \\
         & \qquad + \frac{\norm{\mM_0^{(k)} }_F^2 + \norm{\overline{\mM}^{(k)}}_F^2}{\gamma}.
    \end{align*}}
    Taking expectations on both sides yields that:
    \begin{align}\label{intermediate_step_2}
    \begin{split}
      &  \E\left[\sum_{t=0}^{T-1} \left(  - \frac{\beta}{\gamma} \langle \mG_{t+1}^{(k)}, \mM_{t}^{(k)}\rangle + \frac{\beta}{2 \gamma} \norm{\mM_{t}^{(k)}}_F^2\right) \right] \\
      \leq &  \E\left[\sum_{t=0}^{T-1} \left(  - \frac{\beta}{\gamma} \langle \mG_{t+1}^{(k)}, \overline{\mM}^{(k)}\rangle + \frac{\beta}{2 \gamma} \norm{\overline{\mM}^{(k)}}_F^2\right) \right. \\
      &\left. + \sum_{t=0}^{T-1} \left(-\frac{\beta}{8\gamma}\norm{\mM_{t}^{(k)}}_F^2 + \frac{\beta}{2 \gamma}\norm{\overline{\mM}^{(k)}}_F^2 + \frac{\beta^2}{\gamma} \norm{\mG_{t+1}^{(k)}}_F^2\right)   + \frac{\norm{\mM_0^{(k)} }_F^2 + \norm{\overline{\mM}^{(k)}}_F^2}{\gamma}\right] \\
      = & \E\left[ - \frac{\beta}{\gamma} \sum_{t=0}^{T-1}  \langle \mG_{t+1}^{(k)} - \mH_{t+1}^{(k)}, \overline{\mM}^{(k)}\rangle - \frac{\beta}{\gamma} \sum_{t=0}^{T-1} \langle \mH_{t+1}^{(k)}, \overline{\mM}^{(k)} \rangle \right.\\
      & \left. + \sum_{t=0}^{T-1} \left(-\frac{\beta}{8\gamma}\norm{\mM_t^{(k)}}_F^2 + \frac{\beta}{ \gamma}\norm{\overline{\mM}^{(k)}}_F^2 + \frac{\beta^2}{\gamma} \norm{\mG_{t+1}^{(k)}}_F^2\right) + \frac{\norm{\mM_0^{(k)} }_F^2 + \norm{\overline{\mM}^{(k)}}_F^2}{\gamma} \right].
      \end{split}
    \end{align}
    We choose
    \begin{align}\label{bar_M_def}
        \overline{\mM}^{(k)} = D \frac{\sum_{t=1}^T \mH_t^{(k)}}{\norm{\sum_{t=1}^T \mH_t^{(k)}}_F},
    \end{align}
    then we can show that
    \begin{align}\label{bar_M_bound}
        \norm{\overline{\mM}^{(k)}}_2 \leq \norm{\overline{\mM}^{(k)}}_F = D.
    \end{align}
    In addition, using Jensen's inequality and the fact that $\{\mG_{t+1}^{(k)} - \mH_{t+1}^{(k)}\}$ is a martingale-difference sequence with variance bounded by $\sigma^2$ (Assumption~\ref{asm:bound_var}):
    \begin{align}\label{intermediate_step_0} 
    \begin{split}
        \E\left[- \frac{\beta}{\gamma} \sum_{t=0}^{T-1}  \langle \mG_{t+1}^{(k)} - \mH_{t+1}^{(k)}, \overline{\mM}^{(k)}\rangle\right] \leq & \E\left[\frac{\beta }{\gamma}\norm{\sum_{t=0}^{T-1}  \left(\mG_{t+1}^{(k)} - \mH_{t+1}^{(k)}\right)}_F \norm{\overline{\mM}^{(k)}}_F\right] \\
        \leq & \frac{\beta D \sigma\sqrt{T} }{\gamma},
    \end{split}
    \end{align}
    where the last inequality follows from Assumption \ref{asm:bound_var}.
    In addition, the definition (\ref{bar_M_def}) implies that
    \begin{align}\label{intermediate_step_1}
    \begin{split}
        \E\left[-\frac{\beta}{\gamma} \sum_{t=0}^{T-1}\langle \mH_{t+1}^{(k)} , \overline{\mM}^{(k)}\rangle\right] = & -  \E\left[ \frac{\beta}{\gamma}\left\langle \sum_{t=1}^T \mH_t^{(k)}, \frac{D \sum_{t=1}^T \mH_t^{(k)}}{\norm{\sum_{t=1}^T \mH_t^{(k)}}_F} \right\rangle \right]\\
        =& - \E\left[\frac{\beta D}{\gamma} \norm{\sum_{t=1}^T \mH_t^{(k)}}_F\right].
        \end{split}
    \end{align}
    Furthermore, Assumption \ref{asm:lip} implies that $\norm{\mG_{t+1}^{(k)}}_F \leq L$. Step (\ref{clip_op}) in Algorithm \ref{alg:musec} yields that $\norm{\mM_0^{(k)}}_F \leq \sqrt{r}D$.
     Consequently, substituting (\ref{bar_M_bound}), (\ref{intermediate_step_0}), (\ref{intermediate_step_1}) into (\ref{intermediate_step_2}), we have
    {\small
    \begin{align}\label{regret_eq2}
    \begin{split}
        &  \E\left[\sum_{t=0}^{T-1} \left(  - \frac{\beta}{\gamma} \langle \mG_{t+1}^{(k)}, \mM_{t}^{(k)}\rangle + \frac{\beta}{2 \gamma} \norm{\mM_{t}^{(k)}}_F^2\right) \right] \\
        \leq &  \frac{\beta D \sigma\sqrt{T} }{\gamma}   -\E\left[\frac{\beta D}{\gamma}  \norm{\sum_{t=1}^T \mH_t^{(k)}}_F + \sum_{t=0}^{T-1} \frac{\beta}{8\gamma}\norm{\mM_t^{(k)}}_F^2\right] + \left(\frac{\beta T D^2}{\gamma} + \frac{\beta^2 L^2 T}{\gamma} \right)  + \frac{r D^2}{\gamma} + \frac{D^2}{\gamma} \\
        = & -\E\left[\frac{\beta D T}{\gamma}  \norm{\overline{\mH}^{(k)}}_F + \sum_{t=0}^{T-1} \frac{\beta}{8\gamma}\norm{\mM_t^{(k)}}_F^2\right] + \frac{\beta D \sigma\sqrt{T} }{\gamma} + \left(\frac{\beta T}{\gamma} + \frac{1}{\gamma} \right)D^2 + \frac{\beta^2 L^2 T}{\gamma} + \frac{r D^2}{\gamma}.
        \end{split}
    \end{align}
    }
    where in the last equality follows from the definition $\overline{\mH}^{(k)} = \frac{1}{T}\sum_{t=1}^{T} \mH_t^{(k)}$.
    Recall that $\beta = \eta \gamma$, then combining~(\ref{regret_eq1}) and (\ref{regret_eq2}) yields that
    \begin{align*}
        & \E[f(\mW_T^{(k)}) - f(\mW_0^{(k)})] \\ \leq  & -\E\left[\frac{\beta D T}{\gamma}  \norm{\overline{\mH}^{(k)}}_F + \sum_{t=0}^{T-1} \frac{\beta}{8\gamma}\norm{\mM_t^{(k)}}_F^2\right] + \frac{\beta D \sigma\sqrt{T} }{\gamma} + \left(\frac{\beta T}{\gamma} + \frac{1}{\gamma} \right)D^2 + \frac{\beta^2 L^2 T}{\gamma} + \frac{r D^2}{\gamma}.
    \end{align*}
\end{proof}

Before proceeding, we establish an auxiliary lemma that will be used to control the deviation of the iterates $\mW_t^{(k)}$ from their average.

\begin{lemma}\label{lemma:avg_dist}
    Let $\mX_1,\dots, \mX_n$ be a set of matrices, and let $\overline{\mX} = \frac{1}{n}\sum_{i=1}^n \mX_i$.
    Then for all $i \in [n]$,
    \begin{align*}
        \norm{\mX_i - \overline{\mX}}_F^2 \leq \frac{1}{n}\sum_{i'=1}^n \norm{\mX_i - \mX_{i'}}_F^2 \leq n \sum_{j=1}^n \norm{\Delta_j}_F^2,
    \end{align*}
    where $\Delta_j \coloneqq \mX_j-\mX_{j-1}$ and $\mX_0$ can be chosen arbitrarily.
\end{lemma}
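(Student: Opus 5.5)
The lemma makes two claims, and I will prove them separately.

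\emph{First inequality.} I will write $\mX_i - \overline{\mX} = \frac{1}{n}\sum_{i'=1}^n (\mX_i - \mX_{i'})$, which holds because $\overline{\mX}$ is the average of the $\mX_{i'}$. Jensen's inequality for the convex map $\norm{\cdot}_F^2$, applied to this uniform average, then gives
\begin{align*}
\norm{\mX_i - \overline{\mX}}_F^2 \leq \frac{1}{n}\sum_{i'=1}^n \norm{\mX_i - \mX_{i'}}_F^2 .
\end{align*}
The same bound also follows from Cauchy--Schwarz on a sum of $n$ terms.

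\emph{Second inequality.} I will express each pairwise difference as a telescoping sum of consecutive increments. Assume without loss of generality that $i' < i$; the other case is symmetric and the case $i'=i$ is trivial. Then
\begin{align*}
\mX_i - \mX_{i'} = \sum_{j=i'+1}^{i} \Delta_j .
\end{align*}
This sum has at most $n$ terms, all of them with indices in $[n]$. The arbitrary point $\mX_0$ never appears, since $i,i'\ge 1$; this is why $\mX_0$ may be chosen freely and $\Delta_1$ is only added as a nonnegative extra term. Cauchy--Schwarz then gives
\begin{align*}
\norm{\mX_i - \mX_{i'}}_F^2 \leq |i-i'| \sum_{j=i'+1}^{i}\norm{\Delta_j}_F^2 \leq n\sum_{j=1}^n \norm{\Delta_j}_F^2 .
\end{align*}
Since the right-hand side does not depend on $i'$, averaging over $i'\in[n]$ leaves it unchanged, which yields the second inequality.

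No step here is a real obstacle. The only point needing care is the index bookkeeping: checking that the telescoping sum uses only increments $\Delta_j$ with $j\in[n]$, in both orderings of $i$ and $i'$. The factor $n$ is loose, since $|i-i'|\le n-1$ would suffice, but the slack is harmless for the statement as posed.
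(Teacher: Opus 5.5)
Your proof is correct and follows essentially the same route as the paper: Jensen (equivalently Cauchy--Schwarz) gives the first inequality, and telescoping each difference $\mX_i - \mX_{i'}$ into consecutive increments followed by Cauchy--Schwarz gives the second. The only cosmetic difference is ordering. The paper first applies the triangle inequality and bounds every pairwise term by $\bigl(\sum_{j=2}^n \norm{\Delta_j}_F\bigr)^2$, and only then uses Cauchy--Schwarz. You instead apply $\norm{\sum_j \Delta_j}_F^2 \le |i-i'|\sum_j \norm{\Delta_j}_F^2$ directly.
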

\begin{proof}
    Fix any $i \in [n]$, we have
    \begin{align}
    \begin{split}
        \norm{\mX_i - \overline{\mX}}_F^2 =& \norm{\mX_i - \frac{1}{n}\sum_{i'=1}^n \mX_{i'}}_F^2 \leq \frac{1}{n} \sum_{i'=1}^n \norm{\mX_i - \mX_{i'}}_F^2 \\
        = & \frac{1}{n} \sum_{i'=1}^n\norm{\sum_{j=i\wedge i' + 1}^{i \vee i'}(\mX_{j-1} - \mX_j)}_F^2 \\
        \leq & \frac{1}{n} \sum_{i'=1}^n\left(\sum_{j=i\wedge i' + 1}^{i \vee i'}\norm{\Delta_j}_F\right)^2 \\
        \leq & \left(\sum_{j=2}^n \norm{\Delta_j}_F\right)^2 \leq n \sum_{j=1}^n \norm{\Delta_j}_F^2,
        \end{split}
    \end{align}
    where the first and the last inequalities are due to the Cauchy--Schwarz inequality; The second inequality follows from the triangle inequality.
\end{proof}

\subsection{Proof of Theorem \ref{lemma:dist_goldstein}}
We now provide the proof of the main convergence theorem of the Musec. We first restate Theorem~\ref{lemma:dist_goldstein} as follows:
\begin{theorem}
    Let $\gamma = \beta / \eta$ and $\eta \leq 1 / \rho$ where $\beta \leq 1/8$,
    Then for any $\delta \geq \eta T D\sqrt{r}$, the sequence $\{\overline{\mW}^{(k)} \}_{k=1}^K$ generated by Algorithm~\ref{alg:musec} satisfies
    \begin{align*}
       \E\left[\frac{1}{K}\sum_{k=1}^K  {\rm dist}(0, \partial_\delta f(\overline{\mW}^{(k)})) \right] \leq \frac{\sigma}{\sqrt{T}} + \left(1 + \frac{2r}{\beta T} \right)D + \frac{\beta L^2 }{D} + \frac{\gamma\Delta_{f} }{\beta D T K}.
   \end{align*}
\end{theorem}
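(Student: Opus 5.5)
The plan is to sum Lemma~\ref{lemma:core_recur} over the $K$ epochs, telescope the function values, and then link $\norm{\overline{\mH}^{(k)}}_F$ to ${\rm dist}(\vzero,\partial_\delta f(\overline{\mW}^{(k)}))$.

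\textbf{Step 1 (telescoping).} Since $\mW_T^{(k)}=\mW_0^{(k+1)}$, summing the bound of Lemma~\ref{lemma:core_recur} over $k\in[K]$ gives
\begin{align*}
-\Delta_f \le \E[f(\mW_N)-f(\mW_0)] \le -\frac{\beta D T}{\gamma}\sum_{k=1}^K \E\norm{\overline{\mH}^{(k)}}_F + K\Big(\frac{\beta D\sigma\sqrt T}{\gamma} + \frac{(\beta T+1)D^2}{\gamma} + \frac{\beta^2L^2T}{\gamma} + \frac{rD^2}{\gamma}\Big),
\end{align*}
after dropping the nonpositive $-\frac{\beta}{8\gamma}\norm{\mM_t^{(k)}}_F^2$ terms. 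Dividing by $\beta D T K/\gamma$ yields
\begin{align*}
\frac1K\sum_{k=1}^K\E\norm{\overline{\mH}^{(k)}}_F \le \frac{\sigma}{\sqrt T} + D + \frac{(1+r)D}{\beta T} + \frac{\beta L^2}{D} + \frac{\gamma\Delta_f}{\beta DTK}.
\end{align*}
Because $1+r\le 2r$, this is exactly the right-hand side of the theorem.

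\textbf{Step 2 (Goldstein membership).} It remains to show ${\rm dist}(\vzero,\partial_\delta f(\overline{\mW}^{(k)}))\le \norm{\overline{\mH}^{(k)}}_F$ almost surely. By definition, $\overline{\mH}^{(k)}=\frac1T\sum_{t=1}^T \mH_t^{(k)}$ with $\mH_t^{(k)}\in\partial f(\mW_t^{(k)})$, so it lies in $\partial_\delta f(\overline{\mW}^{(k)})$ provided every $\mW_t^{(k)}$ with $t\in[T]$ is within Frobenius distance $\delta$ of $\overline{\mW}^{(k)}$.

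Each step satisfies $\norm{\mW_{n+1}-\mW_n}_F=\eta\norm{\mM_n}_F\le\eta\sqrt r D$, since $\mM_n$ has at most $r$ singular values, each at most $D$. Any two iterates within one epoch therefore differ by at most $\eta T\sqrt r D$. The first inequality of Lemma~\ref{lemma:avg_dist}, combined with the triangle inequality (an average of differences), gives $\norm{\mW_t^{(k)}-\overline{\mW}^{(k)}}_F\le \eta T\sqrt rD\le\delta$.

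\textbf{Main obstacle: index mismatch.} The difficulty is one of indexing. $\overline{\mW}^{(k)}$ averages $\mW_0^{(k)},\dots,\mW_{T-1}^{(k)}$, while $\overline{\mH}^{(k)}$ uses the subgradients at $\mW_1^{(k)},\dots,\mW_T^{(k)}$. The point $\mW_T^{(k)}$ is not in the averaged window, but it is at most $T$ steps from every averaged iterate, so the same $\eta T\sqrt r D$ bound still covers it. If this bookkeeping turns out to be off by one step, I would absorb the slack into the stated $\delta\ge\eta TD\sqrt r$ condition by careful counting, or note that the loose factor in Lemma~\ref{lemma:avg_dist} leaves room for it.

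Taking expectations and averaging over $k$ then completes the proof.
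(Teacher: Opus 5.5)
Your proposal is correct and follows the paper's proof essentially step for step. You apply Lemma~\ref{lemma:core_recur}, drop the $-\frac{\beta}{8\gamma}\norm{\mM_t^{(k)}}_F^2$ terms, absorb $(1+r)D^2/\gamma$ into $2rD^2/\gamma$, telescope via $\mW_T^{(k)}=\mW_0^{(k+1)}$, divide by $\beta DTK/\gamma$, and then use $\eta\norm{\mM_n}_F\le\eta\sqrt r D$ to place every $\mW_t^{(k)}$ in $\sB_\delta(\overline{\mW}^{(k)})$. Your pairwise argument already settles the index offset with no slack needed, since each $\mW_t^{(k)}$ with $t\in[T]$ is at most $T$ steps from each averaged $\mW_{t'}^{(k)}$ with $t'\in\{0,\dots,T-1\}$. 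This is slightly more careful than the paper, which applies Lemma~\ref{lemma:avg_dist} without noting that $\overline{\mW}^{(k)}$ averages $t=0,\dots,T-1$ while $\overline{\mH}^{(k)}$ uses $t=1,\dots,T$.
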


\begin{proof}
   For any $k \in [K]$, according to Lemma \ref{lemma:core_recur}, one has
   \begin{align*}
      & \E[f(\mW_T^{(k)}) - f(\mW_0^{(k)})]\\
     \leq & -\E\left[\frac{\beta D T}{\gamma}  \norm{\overline{\mH}^{(k)}}_F + \sum_{t=0}^{T-1} \frac{\beta}{8\gamma}\norm{\mM_t^{(k)}}_F^2\right] + \frac{\beta D \sigma\sqrt{T} }{\gamma}\\
       & + \left(\frac{\beta T}{\gamma} + \frac{1}{\gamma} \right)D^2 + \frac{\beta^2 L^2 T}{\gamma} + \frac{r D^2}{\gamma} \\
       \leq & -\E\left[\frac{\beta D T}{\gamma}  \norm{\overline{\mH}^{(k)}}_F \right]+ \frac{\beta D \sigma\sqrt{T} }{\gamma} + \left(\frac{\beta T}{\gamma} + \frac{2r}{\gamma} \right)D^2 + \frac{\beta^2 L^2 T}{\gamma} .
   \end{align*}
   By the definition that $\mW_T^{(k)} = \mW_0^{(k+1)}$, we obtain
   \begin{align*}
      & \E[f(\mW_0^{(k+1)}) - f(\mW_0^{(k)})]\\
       \leq & -\E\left[\frac{\beta D T}{\gamma}  \norm{\overline{\mH}^{(k)}}_F \right]+ \frac{\beta D \sigma\sqrt{T} }{\gamma} + \left(\frac{\beta T}{\gamma} + \frac{2r}{\gamma} \right)D^2 + \frac{\beta^2 L^2 T}{\gamma} .
   \end{align*}
   Rearranging the terms on both sides of the inequality, we have
   \begin{align*}
       \E\left[\frac{\beta D T}{\gamma}  \norm{\overline{\mH}^{(k)}}_F \right] \leq \frac{\beta D \sigma\sqrt{T} }{\gamma} + \left(\frac{\beta T}{\gamma} + \frac{2r}{\gamma} \right)D^2 + \frac{\beta^2 L^2 T}{\gamma} + \E[f(\mW_0^{(k)}) - f(\mW_0^{(k+1)}) ].
   \end{align*}
   By telescoping both sides through $k \in [K]$, we have
   {\small
    \begin{align*}
       \E\left[\frac{\beta D T}{\gamma}  \sum_{k=1}^K\norm{\overline{\mH}^{(k)}}_F \right] \leq & \frac{\beta D \sigma\sqrt{T} K}{\gamma} + \left(\frac{\beta T }{\gamma} + \frac{2r}{\gamma} \right)D^2 K + \frac{\beta^2 L^2 T K}{\gamma} + \E[f(\mW_0^{(0)}) - f(\mW_0^{(K+1)}) ] \\
       \leq & \frac{\beta D \sigma\sqrt{T} K}{\gamma} + \left(\frac{\beta T }{\gamma} + \frac{2r}{\gamma} \right)D^2 K + \frac{\beta^2 L^2 T K}{\gamma} + \Delta_{f},
   \end{align*}
   }
   where the last inequality follows from the definition of $\Delta_f$.
   Dividing both sides by $\beta D T K / \gamma$ yields that
   \begin{align*}
       \E\left[\frac{1}{K}  \sum_{k=1}^K\norm{\overline{\mH}^{(k)}}_F \right] \leq \frac{\sigma}{\sqrt{T}} + \left(1 + \frac{2r}{\beta T} \right)D + \frac{\beta L^2 }{D} + \frac{\gamma \Delta_{f} }{\beta D T K}.
   \end{align*}
   To translate the bound on $\norm{\overline{\mH}^{(k)}}_F$ into a Goldstein-stationarity bound at $\overline{\mW}^{(k)}$, we need every iterate $\mW_t^{(k)}$ to lie within the $\delta$-ball of $\overline{\mW}^{(k)}$.
   Applying Lemma~\ref{lemma:avg_dist}, for any $t \in [T]$, we can show that
   \begin{align*}
       \norm{\mW_t^{(k)} - \overline{\mW}^{(k)}}_F \leq \sqrt{T \sum_{t=1}^T \norm{\mW_t^{(k)} - \mW_{t-1}^{(k)}}_F^2} \leq \sqrt{T \eta^2 \sum_{t=0}^{T-1} \norm{\mM_t^{(k)}}_F^2} \leq \eta T D\sqrt{r} \leq \delta,
   \end{align*}
   where the second inequality follows from the step (\ref{update_step}) in Algorithm \ref{alg:musec} and the third inequality is due to $\norm{\mM_t^{(k)}}_F^2 \leq r D^2$.
   It implies that
   \begin{align*}
      {\rm dist}(\vzero, \partial_\delta f(\overline{\mW}^{(k)})) \leq  \norm{\frac{1}{T}\sum_{t=1}^T \mH_t^{(k)}}_F = \norm{\overline{\mH}^{(k)}}_F  .
   \end{align*}
   Consequently, we obtain
   \begin{align*}
       \E\left[\frac{1}{K}\sum_{k=1}^K  {\rm dist}(\vzero, \partial_\delta f(\overline{\mW}^{(k)})) \right] \leq \frac{\sigma}{\sqrt{T}} + \left(1 + \frac{2r}{\beta T} \right)D + \frac{\beta L^2 }{D} + \frac{\gamma \Delta_{f}}{\beta D T K}.
   \end{align*}
\end{proof}

\subsection{Proof of Corollary \ref{corollary:complexity}}
We restate the complexity bound of Musec as follows.
\begin{corollary}
        By choosing the parameters as
    \begin{align*}
        & T = {(\delta N)^{{2}/{3}}}, \quad K  = \delta^{-{2}/{3}}N^{{1}/{3}}, \quad D = (\delta N)^{-{1}/{3}},  \\
        & \beta = \frac{r^{{1}/{2}}}{(\delta N)^{{2}/{3}} } ,  \quad \gamma = \frac{r}{\delta^{{4}/{3}} N^{{1}/{3}}  }, \quad \eta = \frac{\delta^{{2}/{3}}}{\sqrt{r} N^{{1}/{3}}},
    \end{align*}
    then it takes at most 
    \begin{align*}
      N = \gO\left( \frac{r^{{3}/{4}}}{\delta} + \frac{\delta^2 \rho^3} { r^{{3}/{2}}} + \frac{r^{{3}/{2}} (\sigma^3 + L^6 + \Delta_{f}^3)}{\delta \epsilon^3}\right)
    \end{align*}
    to obtain a $(\delta, \epsilon)$-Goldstein stationary point.
\end{corollary}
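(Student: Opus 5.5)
The plan is to substitute the stated parameter choices into the bound of Theorem~\ref{lemma:dist_goldstein}. There are three things to do: check the structural side conditions of that theorem, show that every term on its right-hand side is of order $\sqrt{r}\,(\delta N)^{-1/3}$ times problem constants, and then choose $N$ so that the sum is at most $\epsilon$. The bound in the theorem controls the average over $k$ of ${\rm dist}(\vzero,\partial_\delta f(\overline{\mW}^{(k)}))$. Since the output $\overline{\mW}_T$ is a uniformly random epoch average, this average equals $\E[{\rm dist}(\vzero,\partial_\delta f(\overline{\mW}_T))]$, so making it at most $\epsilon$ gives an (expected) $(\delta,\epsilon)$-Goldstein stationary point.

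\emph{Side conditions.} First, $TK=(\delta N)^{2/3}\delta^{-2/3}N^{1/3}=N$, which is consistent with Algorithm~\ref{alg:musec}. I will ignore integer rounding, which only costs constants. Second, $\beta/\eta = r^{1/2}(\delta N)^{-2/3}\cdot \sqrt{r}\,N^{1/3}\delta^{-2/3}=r\delta^{-4/3}N^{-1/3}=\gamma$, as the theorem requires. Third, the localization condition holds with equality:
\begin{align*}
\eta T D\sqrt{r}=\frac{\delta^{2/3}}{\sqrt r N^{1/3}}(\delta N)^{2/3}(\delta N)^{-1/3}\sqrt r=\delta .
\end{align*}
The remaining two conditions become lower bounds on $N$:
\begin{itemize}
\item $\beta\le 1/8$ is equivalent to $(\delta N)^{2/3}\ge 8\sqrt r$, that is, $N\ge 8^{3/2}r^{3/4}/\delta$.
\item $\eta\le 1/\rho$ is equivalent to $\rho\,\delta^{2/3}\le \sqrt r N^{1/3}$, that is, $N\ge \delta^2\rho^3/r^{3/2}$.
\end{itemize}
These produce the first two terms in the stated complexity.

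\emph{Term-by-term evaluation.} With $\delta N$ as the natural variable:
\begin{itemize}
\item $\sigma/\sqrt T=\sigma(\delta N)^{-1/3}$.
\item $D=(\delta N)^{-1/3}$.
\item $2rD/(\beta T)=2\sqrt r(\delta N)^{-1/3}$, using $\beta T=\sqrt r$.
\item $\beta L^2/D=\sqrt r L^2(\delta N)^{-1/3}$.
\item For the last term, use $\gamma/\beta=1/\eta$ and $\eta TD=\delta/\sqrt r$. This gives $\gamma\Delta_f/(\beta DTK)=\sqrt r\,\Delta_f/(\delta K)=\sqrt r\,\Delta_f(\delta N)^{-1/3}$.
\end{itemize}
Summing, the right-hand side of Theorem~\ref{lemma:dist_goldstein} is at most $C\sqrt r\,(1+\sigma+L^2+\Delta_f)(\delta N)^{-1/3}$ for an absolute constant $C$.

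\emph{Choosing $N$.} Requiring this to be at most $\epsilon$ gives
\begin{align*}
N\gtrsim \frac{r^{3/2}(1+\sigma+L^2+\Delta_f)^3}{\delta\epsilon^3}\lesssim \frac{r^{3/2}(1+\sigma^3+L^6+\Delta_f^3)}{\delta\epsilon^3}.
\end{align*}
The additive constant $1$ is either absorbed into the $\gO(\cdot)$ (treating problem constants as at least order one) or dominated. Taking $N$ to be the maximum of this requirement and the two side-condition thresholds yields the claimed $\gO(\cdot)$ bound.

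The computations are routine. The main care point is the algebra showing that every term scales like $(\delta N)^{-1/3}$ with exactly a $\sqrt r$ prefactor, in particular the $2r/(\beta T)$ and $\gamma/(\beta T K)$ terms where the $r$-dependence of $\beta$ and $\eta$ must cancel correctly. This cancellation is what produces the $r^{3/2}$ factor rather than a larger power.
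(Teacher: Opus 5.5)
Your proposal is correct and follows the same route as the paper. You check $\beta=\eta\gamma$ and $\eta T D\sqrt r\le\delta$ directly, convert $\beta\le 1/8$ and $\eta\le 1/\rho$ into the lower bounds $N\ge 8^{3/2}r^{3/4}/\delta$ and $N\ge \delta^2\rho^3/r^{3/2}$, and show that every term in Theorem~\ref{lemma:dist_goldstein} is $\sqrt r\,(\delta N)^{-1/3}$ times a problem constant. Your remark that the $D$ and $2rD/(\beta T)$ terms leave an $r^{3/2}/(\delta\epsilon^3)$ contribution with no $\sigma, L, \Delta_f$ factor (absorbed only by treating these constants as order one) is a point the paper's proof passes over silently.
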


\begin{proof}
The theorem imposes four constraints on the parameters:
    \begin{align*}
        \eta T D\sqrt{r} \leq \delta, \quad \beta = \eta \gamma, \quad \beta \leq \frac{1}{8}, \quad \eta^{-1} \geq \rho
    \end{align*}
We first specify the parameters, then verify that each constraint is satisfied under a mild lower bound on $N$, and finally translate the theorem's bound into the stated oracle complexity.

\noindent\textbf{Step 1: Parameter choices.} Set
\begin{align}\label{param_choice}
\begin{split}
        & T = {(\delta N)^{{2}/{3}}}, \quad K = {\frac{N}{T}} = \delta^{-{2}/{3}}N^{{1}/{3}}, \quad D = (\delta N)^{-{1}/{3}},  \\
        & \beta = \frac{r^{{1}/{2}}}{(\delta N)^{{2}/{3}} } , \quad \gamma  = \frac{r}{\delta^{{4}/{3}} N^{{1}/{3}}  }, \quad \eta = \frac{\delta^{{2}/{3}}}{\sqrt{r} N^{{1}/{3}}}
    \end{split}
\end{align}
\medskip
\noindent\textbf{Step 2: Verifying the constraints.} A direct computation gives
\begin{align*}
\eta \gamma &= \frac{\delta^{{2}/{3}}}{\sqrt{r} N^{{1}/{3}}} \cdot \frac{r}{\delta^{{4}/{3}} N^{{1}/{3}}  } = \beta,
\end{align*}
so $\beta = \eta \gamma$ holds by construction. For the step-size constraint,
\begin{align*}
\eta T D \sqrt{r} = \frac{\delta^{{2}/{3}}}{\sqrt{r} N^{{1}/{3}}}(\delta N)^{{1}/{3}}\sqrt{r}\leq \delta,
\end{align*}
so $\eta T D \sqrt{r} \leq \delta$ also holds.
The remaining two constraints $\beta \leq \tfrac{1}{8}$ and $\eta^{-1} \geq \rho$ translate into lower bounds on $N$:
\begin{align*}
 N \geq \gO\left(\frac{r^{{3}/{4}}}{\delta} \right), \qquad  N \geq \frac{\rho^3 \delta^2}{r^{{3}/{2}}}.
\end{align*}

\noindent\textbf{Step 3: Evaluating the theorem's bound.} By the theorem~\ref{lemma:dist_goldstein} implies that
    \begin{align*}
       \E\left[\frac{1}{K}\sum_{k=1}^K  {\rm dist}(\vzero, \partial_\delta f(\overline{\mW}^{(k)})) \right] \leq & \frac{\sigma}{\sqrt{T}} + \left(1 + \frac{2r}{\beta T} \right)D + \frac{\beta L^2 }{D} + \frac{\Delta_{f} \gamma}{\beta D T K} \\
       \leq & \frac{\sigma}{(\delta N)^{\frac{1}{3}}} + \frac{1}{(\delta N)^{\frac{1}{3}}} + \frac{2r^{\frac{1}{2}}}{( \delta N)^{\frac{1}{3}}} + \frac{r^{\frac{1}{2}}L^2}{(\delta N)^{\frac{1}{3}}} + \frac{\Delta_{f} r^{\frac{1}{2}}}{(\delta N)^{\frac{1}{3}}},
   \end{align*}
   where the last inequality follows from the parameter choices in (\ref{param_choice}).
    To obtain $(\delta, \epsilon)$-Goldstein stationary point such that it satisfies 
    \begin{align*}
        \E\left[\frac{1}{K}\sum_{k=1}^K  {\rm dist}(\vzero, \partial_\delta f(\overline{\mW}^{(k)})) \right] \leq \epsilon,
    \end{align*} 
    it takes at most
    \begin{align*}
     N = \gO\left( \frac{r^{\frac{3}{4}}}{\delta} + \frac{\delta^2 \rho^3} { r^{\frac{3}{2}}} + \frac{r^{\frac{3}{2}} (\sigma^3 + L^6 + \Delta_{f}^3)}{\delta \epsilon^3}\right)
    \end{align*}
    stochastic gradient oracle queries.
\end{proof}

\section{Detailed Algorithm of Soft Musec}\label{app:practical_soft}

\IncMargin{1em}
\begin{algorithm}[!t]
\caption{Soft Spectral Clipping (SSC)}
\label{alg:ssc}
\textbf{Input:} Matrix $\mM \in \mathbb{R}^{m \times n}$, clipping threshold $D > 0$,  number of Newton--Schulz iterations~$K$. \vspace{0.05cm}

$\mA = \mM \mM^\top + D^2 \mI$ \vspace{0.05cm}

$\alpha = \norm{\mA}_F$ \vspace{0.05cm}

$\mY_0 = \mA / \alpha$ \vspace{0.05cm}

$\mZ_0 =\mI$ \vspace{0.05cm}

\For {$k = 0, 1, \dots, K-1$}
{
    $\mT_{k}  = \frac{1}{2}(3\mI - \mZ_k\mY_k)$ \vspace{0.05cm}

    $\mY_{k+1} = \mY_k \mT_k$ \vspace{0.05cm}

    $\mZ_{k+1} = \mT_k \mZ_k$ \vspace{0.05cm}

}

\textbf{Return: } $D \cdot (\mZ_K / \sqrt{\alpha}) \cdot \mM$.
\end{algorithm}
\DecMargin{1em}

In Section~\ref{sec:practical}, we introduced Soft Musec, which replaces exact SVD-based spectral clipping with a smooth saturation function approximated via coupled Newton--Schulz iterations. Here we provide the full algorithmic details.

Recall from Section~\ref{sec:practical} that the soft spectral clipping operator is given by
\begin{align*}
H(\mM, D) = D(\mM\mM^\top + D^2 \mI)^{-1/2} \mM,
\end{align*}
where $\mM \in \R^{m \times n}$ with $m \leq n$ w.l.o.g.
We approximate the matrix inverse square root 
\begin{align*}
    (\mM\mM^\top + D^2 \mI)^{-1/2}
\end{align*}
using coupled Newton--Schulz iterations \citep[Chapter~6]{higham2008functions}, which involve only matrix--matrix multiplications and are therefore well-suited to GPU computation. The complete procedure is presented in Algorithm~\ref{alg:ssc}.
The normalization by $\norm{\mM\mM^\top + D^2 \mI}_F$ ensures that the initial iterate $\mY_0$ has unit Frobenius norm, placing the eigenvalues in a range where the Newton--Schulz iteration converges. 
The addition of $D^2 \mI$ shifts all eigenvalues away from zero, ensuring numerical stability.

\IncMargin{1em}
\begin{algorithm}[!t]
\caption{Soft Musec}
\label{alg:soft_musec}
\textbf{Input:} Initial point $\mW_0$, momentum parameter $\beta \in [0, 1)$, learning rate $\eta$, clipping threshold $D>0$, positive integers $K$ and $T$, number of Newton--Schultz steps $K_{\rm ns}$ \vspace{0.05cm} \vspace{0.05cm}

$N = K \times T$ \vspace{0.05cm}

\For {$n = 0, 1, \dots, N-1$}
{
Sample $\xi_{n} \sim \gP$  \vspace{0.05cm}

$\mG_n =  G(\mW_n; \xi_{n})$ \vspace{0.05cm}

\eIf{$n = 0$}
{
  $\widehat{\mM}_0 = \mG_0$ \vspace{0.05cm}
}
{
  $\widehat{\mM}_n = (1 - \beta) \mM_{n-1} + \beta \mG_n$ \vspace{0.05cm}
}

$\mM_n = {\rm SSC}(\widehat{\mM}_n, D, K_{\rm ns}) $   \tcp{Algorithm~\ref{alg:ssc}} \vspace{0.05cm}

$\mW_{n + 1} = \mW_n - \eta \mM_n$ \vspace{0.05cm}
}

 Set $\overline{\mW}^{(k)} = \frac{1}{T} \sum_{t=0}^{T-1} \mW_t^{(k)}$ where  $\mW_t^{(k)} = \mW_{(k-1)T + t}$ for $\forall k \in [K]$ \vspace{0.05cm}
% Set $\mW_t^{(k)} = \mW_{(k-1)T + t}, \forall k \in [K], t \in \{0\}\cup[T-1]$ and $\overline{\mW}^{(k)} = \frac{1}{T} \sum_{t=0}^{T-1} \mW_t^{(k)}$.

 \textbf{Return:} $\overline{\mW}_T \sim {\rm Uniform} (\{\overline{\mW}^{(k)}: k \in [K]\})$.
\end{algorithm}

We present the full Soft Musec procedure in Algorithm~\ref{alg:soft_musec}, which integrates the soft spectral clipping subroutine (Algorithm~\ref{alg:ssc}) into the Musec framework by replacing the exact SVD and hard clipping in Steps~\ref{svd_op}--\ref{update_step} of Algorithm~\ref{alg:musec}.

\section{Comparison with SPECTRA}\label{app:spectra_comp}
We provide a detailed comparison of the Musec and SPECTRA update rules to clarify their algorithmic differences.

Given a matrix $\mM$ with reduced SVD $\mM = \mU \mS \mV^\top$, we define the spectral clipping operator $\mathbf{Clip}_{\rm sc} (\mM) = \mU \mathbf{Clip}(\mS,D) \mV^{\top}$.
Both Musec and SPECTRA employ this operator, but they differ in how the clipped output interacts with the momentum state.

Musec maintains a clipped momentum buffer and computes:
\begin{align*}
\begin{cases}
&\widehat{\mM}_n = (1 - \beta) \mM_{n-1} + \beta \mG_n, \\
&\mM_n = \mathbf{Clip}_{\rm sc} (\widehat{\mM}_n, D), \\
&\mW_{n + 1} = \mW_n - \eta \mathbf{Clip}_{\rm sc} \mM_n,
\end{cases}
\end{align*}
Crucially, the clipped momentum $\mM_n$ is propagated to the next iteration. This ensures that the momentum state is always spectrally bounded: $\norm{\mM_n} \leq D$ at every iteration.

Ignoring weight decay, the SPECTRA update \citep[Eq.~(4)]{jiang2026enhancing} maintains an unclipped momentum buffer and applies spectral clipping only when computing the parameter update:
\begin{align*}
\begin{cases}
    &{\mM}_n = (1 - \beta) \mM_{n-1} + \beta \mG_n. \\
&\mW_{n + 1} = \mW_n - \eta \mathbf{Clip}_{\rm sc} ({\mM}_n, D),
\end{cases}
\end{align*}
Here, the unclipped momentum $\mM_n$ is carried forward to the next iteration. Although the weight update is spectrally clipped, the momentum state itself is unconstrained.

\paragraph{Key difference.} The distinction lies in whether spectral clipping is integrated into the momentum recurrence. In Musec, the momentum state satisfies $\norm{\mM_n} \leq D$ at every iteration, so the input to the next EMA step is always spectrally bounded. In SPECTRA, the momentum state is unconstrained: $\norm{\mM_n}$ can grow well beyond $D$ over successive iterations, as clipping is only applied to the output update and does not feed back into the recurrence. Consequently, SPECTRA's momentum can retain large spectral components accumulated over many past iterations, even when recent gradients are small in those directions. Musec's feedback mechanism prevents this accumulation, providing tighter spectral control over the optimization trajectory.

\section{Experimental Details}\label{app:exp_details}
This section provides the detailed architecture, training, and hyperparameter configurations for our NanoGPT experiments.

\subsection{Architecture Configurations}

\begin{table}[ht]
\centering
\caption{Architecture specifications across model scales.}
\label{tab:architecture}
\smallskip
\begin{tabular}{lrrr}
\toprule
& \textbf{Small} & \textbf{Medium} & \textbf{Wide} \\
\midrule
Layers              & 11  & 16  & 16 \\
Attention heads     & 6   & 8   & 16 \\
Head dimension      & 128 & 128 & 128 \\
Model dimension     & 768 & 1{,}024 & 2{,}048 \\
MLP hidden dimension & 3{,}072 & 4{,}096 & 8{,}192 \\
Max sequence length  & 2{,}048 & 4{,}096 & 4{,}096 \\
Core parameters     & ${\sim}$491M & ${\sim}$613M & ${\sim}$1.63B \\
\bottomrule
\end{tabular}
\end{table}

In our experiments, we evaluate three model configurations, including NanoGPT-small, NanoGPT-Medium, and NanoGPT-Wide.
All three model configurations follow a decoder-only Transformer architecture based on the modded-nanogpt codebase.\footnote{\url{https://github.com/KellerJordan/modded-nanogpt}} The architecture incorporates several modern modifications on top of the standard GPT-2 backbone, including RMSNorm \citep{zhang2019root}, squared ReLU activations \citep{so2021searching}, Rotary Position Embeddings \citep{su2024roformer}, sliding-window causal attention via Flash Attention 3 \citep{shah2024flashattention}, and multi-token prediction as an auxiliary training objective. We refer to the codebase for full architectural details. The three configurations differ primarily in their width scaling. Table~\ref{tab:architecture} summarizes their specifications.

\subsection{Training Configuration}
\label{sec:training}

\begin{table}[ht]
\centering
\caption{Training configuration across model scales.}
\label{tab:training}
\smallskip
\begin{tabular}{lccc}
\toprule
& \textbf{Small} & \textbf{Medium} & \textbf{Wide} \\
\midrule
Total steps          & 1{,}390     & 4{,}740    & 8{,}040 \\
Final batch size (tokens) & 393K   & 524K       & 1{,}049K \\
Precision            & bfloat16    & bfloat16   & bfloat16 \\
Hardware             &  H800 &  H800 & H800 \\
\bottomrule
\end{tabular}
\end{table}

All models are trained using the GPT-2 BPE tokenizer in bfloat16 precision on NVIDIA H800 GPUs. 
We train all configurations on FineWeb \citep{penedo2024fineweb}, OpenWebText \citep{Gokaslan2019OpenWeb}, and C4 \citep{raffel2020exploring}.
Each configuration employs a multi-phase training schedule that jointly ramps up batch size and learning rate. The learning rate follows a warmup-then-decay schedule. Table~\ref{tab:training} summarizes the key training parameters. Full schedule details are available in the modded-nanogpt codebase.

\subsection{Hyperparameters}
\label{app:hyperparameters}

\begin{table}[ht]
\centering
\caption{Selected weight decay $\lambda$ and clipping threshold $D$ for each method and learning rate on NanoGPT-Small. Each entry corresponds to the best validation loss over the sweep. Soft Musec and SPECTRA share the same hyperparameter space.}
\label{tab:hparams_small}
\smallskip
\begin{tabular}{lcccccccccc}
\toprule
& \multicolumn{3}{c}{\textbf{Muon}} & \multicolumn{3}{c}{\textbf{MuonClip}} & \multicolumn{4}{c}{\textbf{Soft Musec / SPECTRA}} \\
\cmidrule(lr){2-4} \cmidrule(lr){5-7} \cmidrule(lr){8-11}
LR & $\lambda_{\text{QK}}$ & $\lambda_{\text{VO}}$ & $\lambda_{\text{MLP}}$ & $\lambda_{\text{QK}}$ & $\lambda_{\text{VO}}$ & $\lambda_{\text{MLP}}$ & $\lambda_{\text{QK}}$ & $\lambda_{\text{VO}}$ & $\lambda_{\text{MLP}}$ & $D$ \\
\midrule
0.01 & 1.2 & 1.2 & 0.3 & 1.2 & 1.2 & 0.3 & 1.2 & 1.2 & 0.3 & 0.5 \\
0.023 & 1.2 & 1.2 & 1.2 & 1.2 & 1.2 & 1.2 & 1.2 & 1.2 & 0.3 & 0.5 \\
0.05 & 1.2 & 1.2 & 1.2 & 1.2 & 1.2 & 1.2 & 1.2 & 1.2 & 0.3 & 0.1 \\
0.1  & 1.2 & 1.2 & 0.3 & 1.2 & 1.2 & 0.3 & 1.2 & 1.2 & 0.3 & 0.1 \\
0.2  & 1.2 & 1.2 & 0.3 & 1.2 & 1.2 & 0.3 & 1.2 & 1.2 & 0.3 & 0.1 \\
0.5  & 1.2 & 1.2 & 0.3 & 1.2 & 1.2 & 0.3 & 1.2 & 1.2 & 0.3 & 0.1 \\
\bottomrule
\end{tabular}
\end{table}

\begin{table}[ht]
\centering
\caption{Selected weight decay $\lambda$ and clipping threshold $D$ for each method and learning rate on NanoGPT-Medium. Each entry corresponds to the best validation loss over the sweep. Soft Musec and SPECTRA share the same hyperparameter space.}
\label{tab:hparams_medium}
\smallskip
\begin{tabular}{lrrrrrrrrrrr}
\toprule
& \multicolumn{3}{c}{\textbf{Muon}} & \multicolumn{3}{c}{\textbf{MuonClip}} & \multicolumn{4}{c}{\textbf{Soft Musec / SPECTRA}} \\
\cmidrule(lr){2-4} \cmidrule(lr){5-7} \cmidrule(lr){8-11}
LR & $\lambda_{\text{QK}}$ & $\lambda_{\text{VO}}$ & $\lambda_{\text{MLP}}$ & $\lambda_{\text{QK}}$ & $\lambda_{\text{VO}}$ & $\lambda_{\text{MLP}}$ & $\lambda_{\text{QK}}$ & $\lambda_{\text{VO}}$ & $\lambda_{\text{MLP}}$ & $D$ \\
\midrule
0.01 & 1.2 & 1.2 & 1.2 & 1.2 & 1.2 & 1.2 & 1.2 & 1.2 & 0.3 & 0.25 \\
0.015 & 1.2 & 1.2 & 1.2 & 1.2  & 1.2  & 1.2  & 1.2 & 1.2 & 0.3 & 0.25 \\
0.05 & 1.2 & 1.2 & 1.2 & 1.2  & 1.2  & 1.2  & 1.2 & 1.2 & 0.3 & 0.05 \\
0.1  & 1.2 & 1.2 & 0.3 & 1.2  & 1.2  & 1.2  & 1.2 & 1.2 & 0.3 & 0.05\\
0.2  & 1.2 & 1.2 & 0.3 & 1.2  & 1.2  & 1.2  & 1.2 & 1.2 & 0.3 & 0.05 \\
0.3  & 1.2 & 1.2 & 0.3 & 1.2  & 1.2  & 0.3 & 1.2 & 1.2 & 0.3 & 0.05 \\
0.5  & 0.3 & 0.3 & 0.3 & 0.3  & 0.3  & 0.3 & 0.3 & 0.3 & 0.3 & 0.05 \\
0.8  & 0.3 & 0.3 & 0.1 & 0.3 & 0.3 & 0.1 & 0.3 & 0.3 & 0.1 & 0.05 \\
\bottomrule
\end{tabular}
\end{table}

\begin{table}[ht]
\centering
\caption{Selected weight decay $\lambda$ and clipping threshold $D$ for each method and learning rate on NanoGPT-Wide. Each entry corresponds to the best validation loss over the sweep. Soft Musec and SPECTRA share the same hyperparameter space.}
\label{tab:hparams_wide}
\smallskip
\begin{tabular}{lcccccccccc}
\toprule
& \multicolumn{3}{c}{\textbf{Muon}} & \multicolumn{3}{c}{\textbf{MuonClip}} & \multicolumn{4}{c}{\textbf{Soft Musec / SPECTRA}} \\
\cmidrule(lr){2-4} \cmidrule(lr){5-7} \cmidrule(lr){8-11}
LR & $\lambda_{\text{QK}}$ & $\lambda_{\text{VO}}$ & $\lambda_{\text{MLP}}$ & $\lambda_{\text{QK}}$ & $\lambda_{\text{VO}}$ & $\lambda_{\text{MLP}}$ & $\lambda_{\text{QK}}$ & $\lambda_{\text{VO}}$ & $\lambda_{\text{MLP}}$ & $D$ \\
\midrule
0.01 & 1.2 & 1.2 & 1.2 & 1.2 & 1.2 & 1.2 & 1.2 & 1.2 & 0.3 & 0.25 \\
0.05 & 1.2 & 1.2 & 1.2 & 1.2 & 1.2 & 1.2 & 1.2 & 1.2 & 0.3 & 0.05 \\
0.1 & 1.2 & 1.2 & 0.3 & 1.2 & 1.2 & 1.2 & 1.2 & 1.2 & 0.3 & 0.05 \\
0.2  & 1.2 & 1.2 & 0.3 & 1.2 & 1.2 & 1.2 & 1.2 & 1.2 & 0.3 & 0.05 \\
0.3  & 1.2 & 1.2 & 0.3 & 1.2 & 1.2 & 0.3 & 1.2 & 1.2 & 0.3 & 0.05 \\
0.5  & 0.3 & 0.3 & 0.3 & 0.3 & 0.3 & 0.3 & 0.3 & 0.3 & 0.3 & 0.05 \\
\bottomrule
\end{tabular}
\end{table}

For all methods, the respective optimizer is applied to attention and MLP projection matrices, while AdamW is used for embeddings, the language-model head, and scalar parameters. AdamW hyperparameters are held fixed across all methods. For each method, model configuration, and learning rate, we tune the weight decay over $\{0.096,0.3,0.6,1.2\}$. For Soft Musec and SPECTRA, we additionally tune the clipping threshold 
$D$ over $\{0.05,0.125,0.25,0.5,0.75,1.0\}$. We report the best validation loss over these sweeps. 
Tables~\ref{tab:hparams_small}, \ref{tab:hparams_medium}, and~\ref{tab:hparams_wide} list the selected hyperparameters for NanoGPT-Small, NanoGPT-Medium, and NanoGPT-Wide, respectively.

\section{Additional Experimental Results}\label{app:additional_exp}

This section presents additional experimental results comparing Soft Musec against baseline methods on NanoGPT training.

\subsection{Learning Rate Sweep}\label{app:lr_sweep}

\begin{figure}[htbp]
    \centering
    \begin{tabular}{ccc}
        % First Row
        \includegraphics[width=0.31\textwidth]{figures/lr_sweep/fineweb_small.pdf} &
        \includegraphics[width=0.31\textwidth]{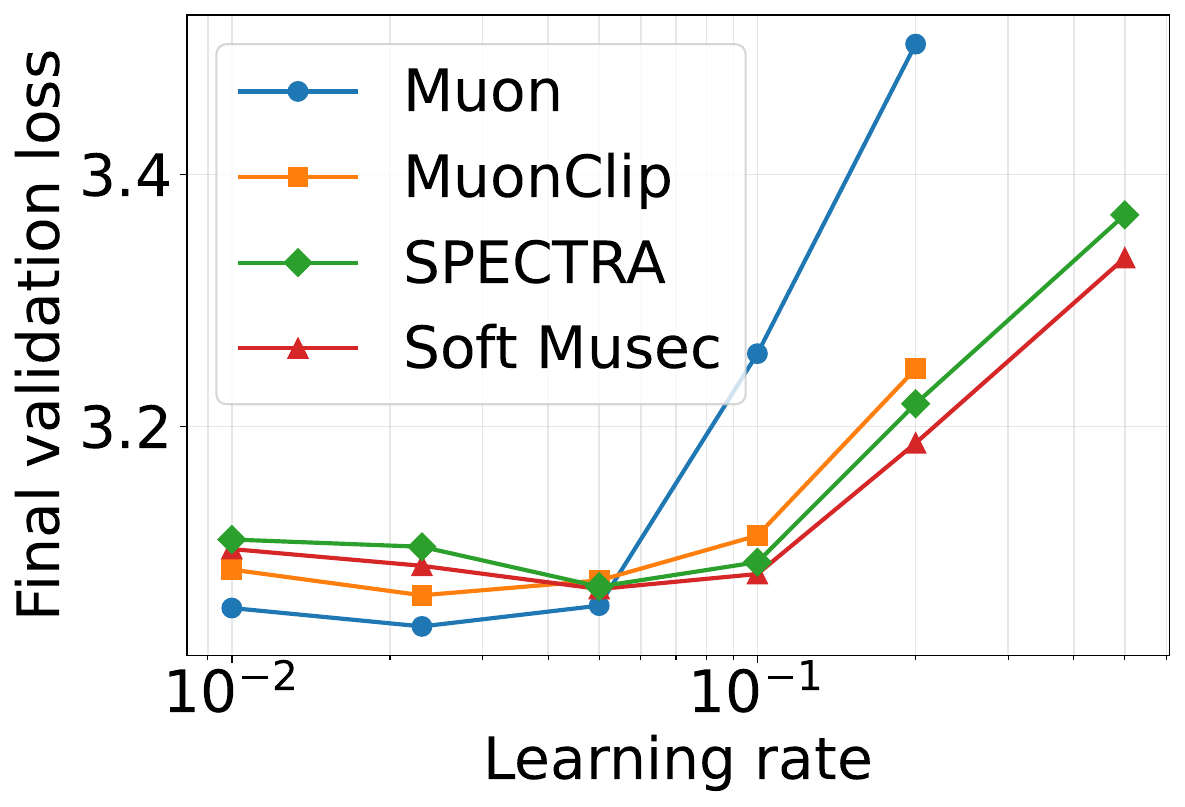} &
        \includegraphics[width=0.31\textwidth]{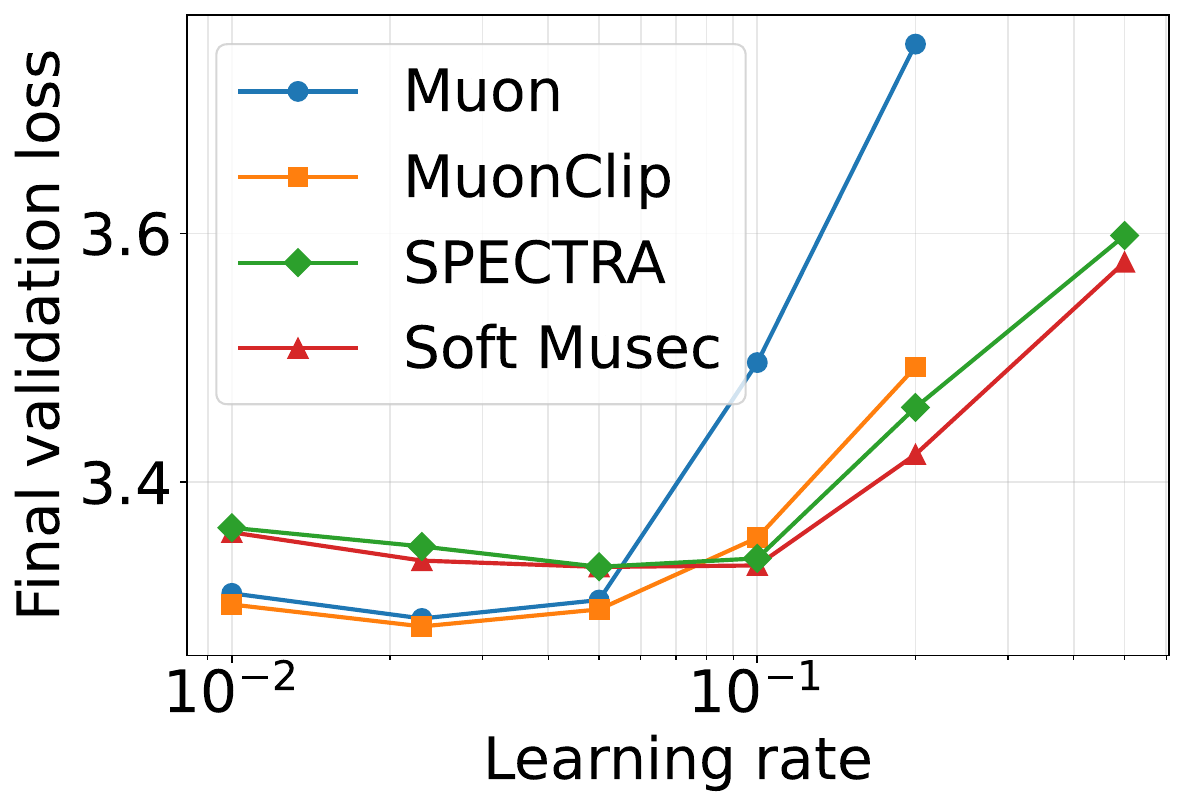} \\
        (a) FineWeb & (b) OpenWebText & (c) C4 \\[1em]
    \end{tabular}
    \caption{Validation loss versus learning rate for NanoGPT-Small across three datasets.}
    \label{fig:lr_sweep_small}
\end{figure}

\begin{figure}[htbp]
    \centering
    \begin{tabular}{ccc}
        % First Row
        \includegraphics[width=0.31\textwidth]{figures/lr_sweep/fineweb_medium.pdf} &
        \includegraphics[width=0.31\textwidth]{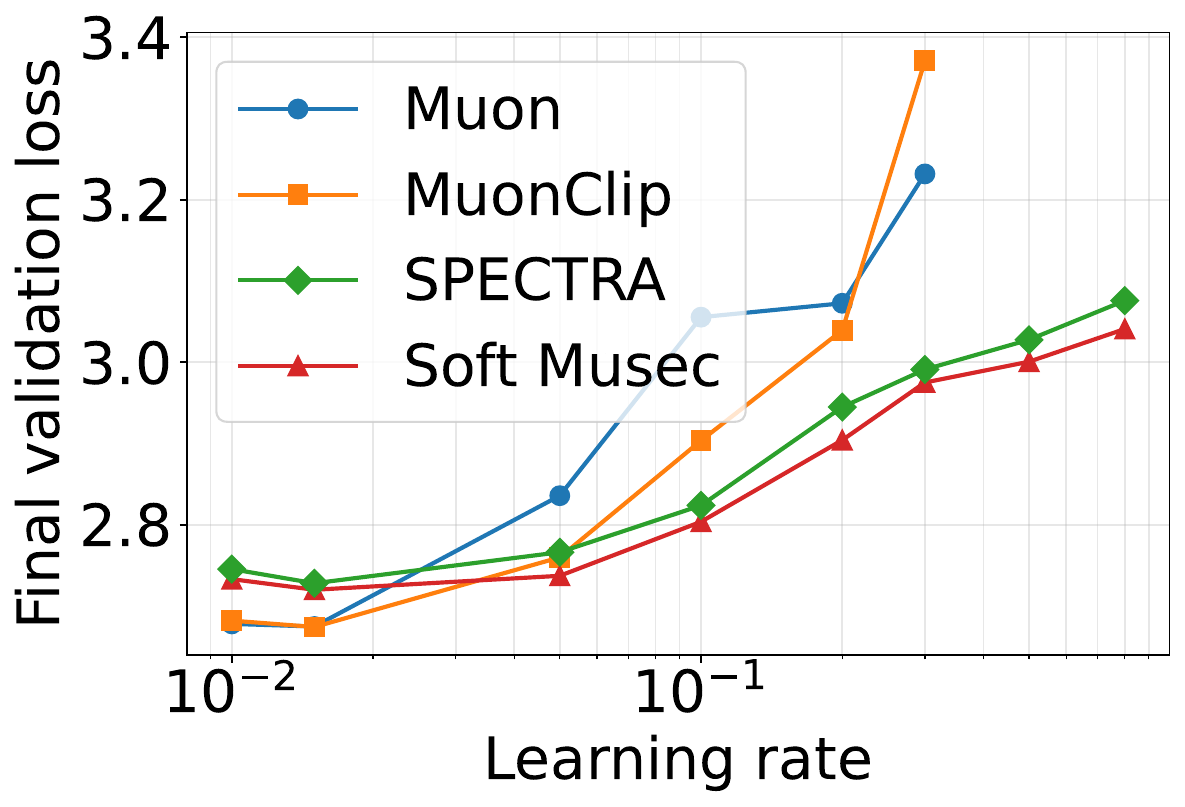} &
        \includegraphics[width=0.31\textwidth]{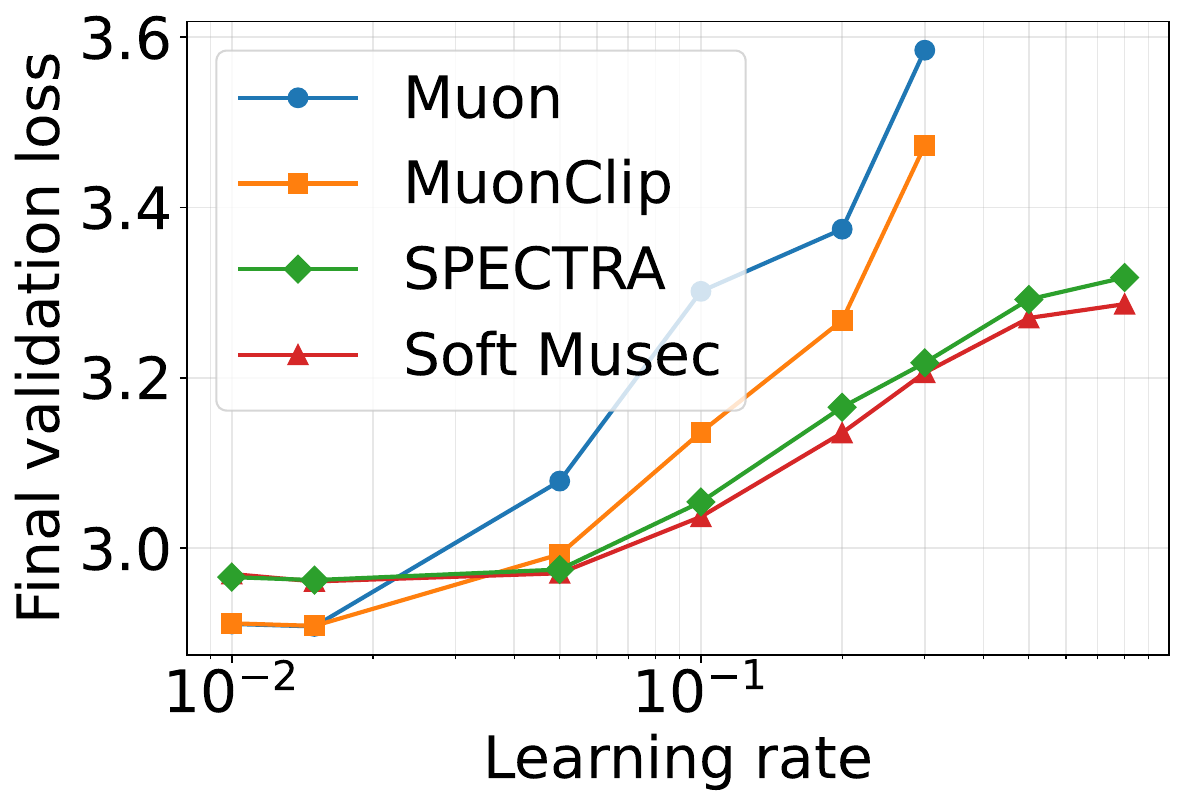} \\
        (a) FineWeb & (b) OpenWebText & (c) C4 \\[1em]
    \end{tabular}
    \caption{Validation loss versus learning rate for NanoGPT-Medium across three datasets.}
    \label{fig:lr_sweep_medium}
\end{figure}

Figures~\ref{fig:lr_sweep_small}--\ref{fig:lr_sweep_medium} present the complete learning rate sweep results across all model configurations and datasets; runs that diverge during training are omitted.
For NanoGPT-Small (Figure~\ref{fig:lr_sweep_small}), all methods achieve comparable validation loss at small learning rates. As the learning rate increases, Muon degrades most rapidly, followed by MuonClip, while Soft Musec and SPECTRA maintain lower validation loss across the full range. For NanoGPT-Medium (Figure~\ref{fig:lr_sweep_medium}), the advantage of spectral clipping methods becomes more pronounced: at learning rates $\eta \geq 0.1$, both Soft Musec and SPECTRA substantially outperform Muon and MuonClip, with Soft Musec achieving the best or near-best validation loss at the largest learning rates across all three datasets.

\subsection{Additional Training Dynamics}\label{app:training_dynamics}
\begin{figure}[p]
\centering
\begin{tabular}{cccc}
& \textbf{FineWeb} & \textbf{OpenWebText} & \textbf{C4} \\
\rowlabel{Best Tuned} &
\includegraphics[width=0.3\textwidth]{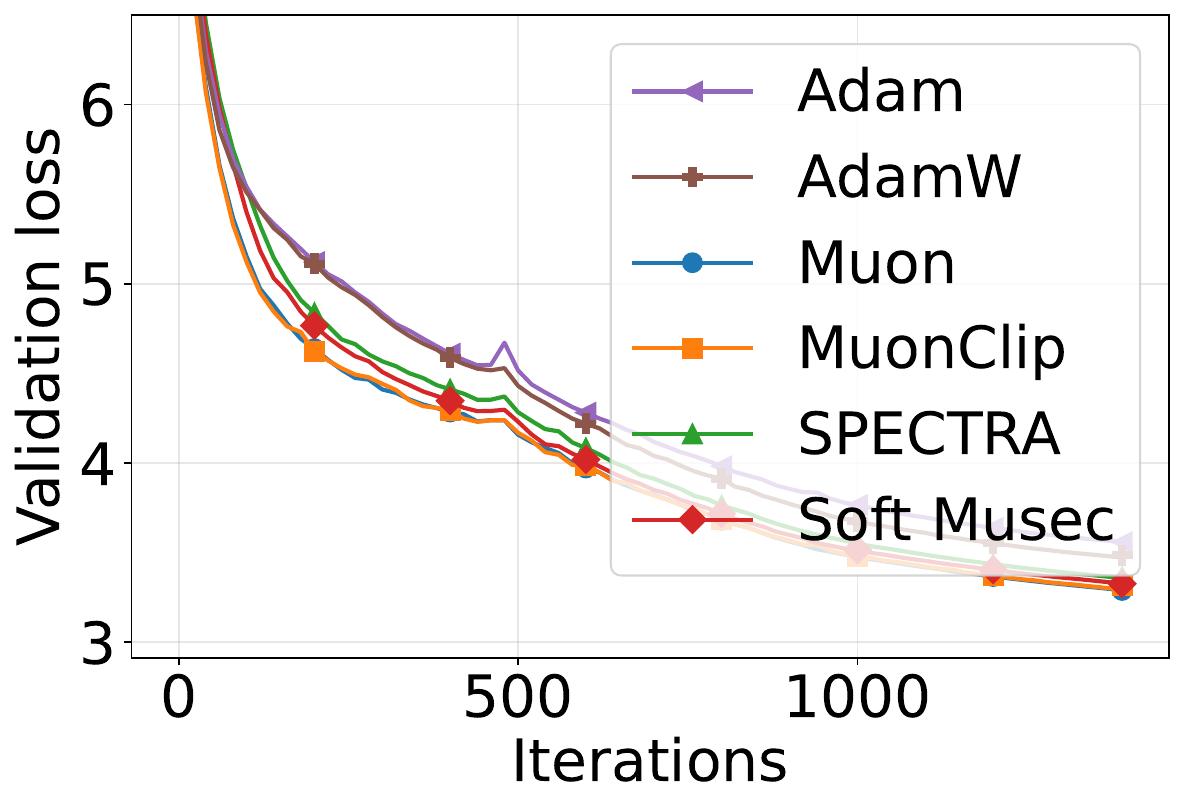} &
\includegraphics[width=0.3\textwidth]{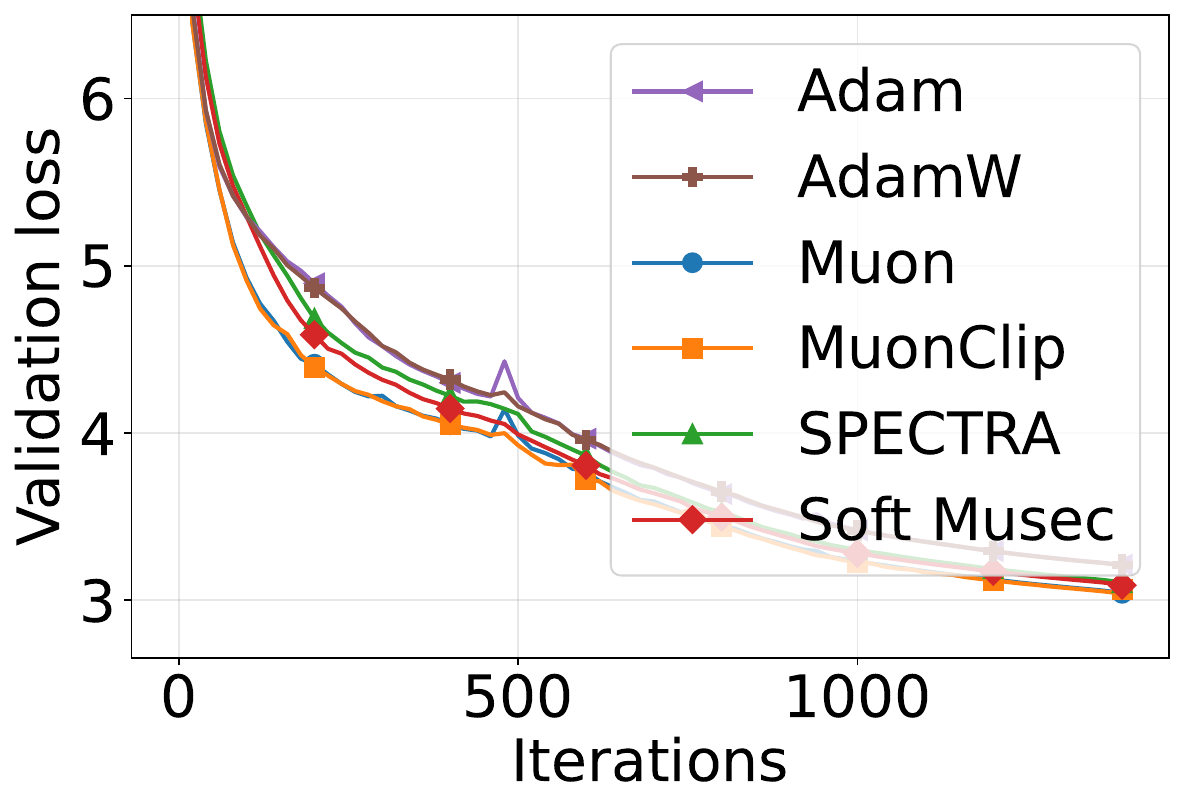} &
\includegraphics[width=0.3\textwidth]{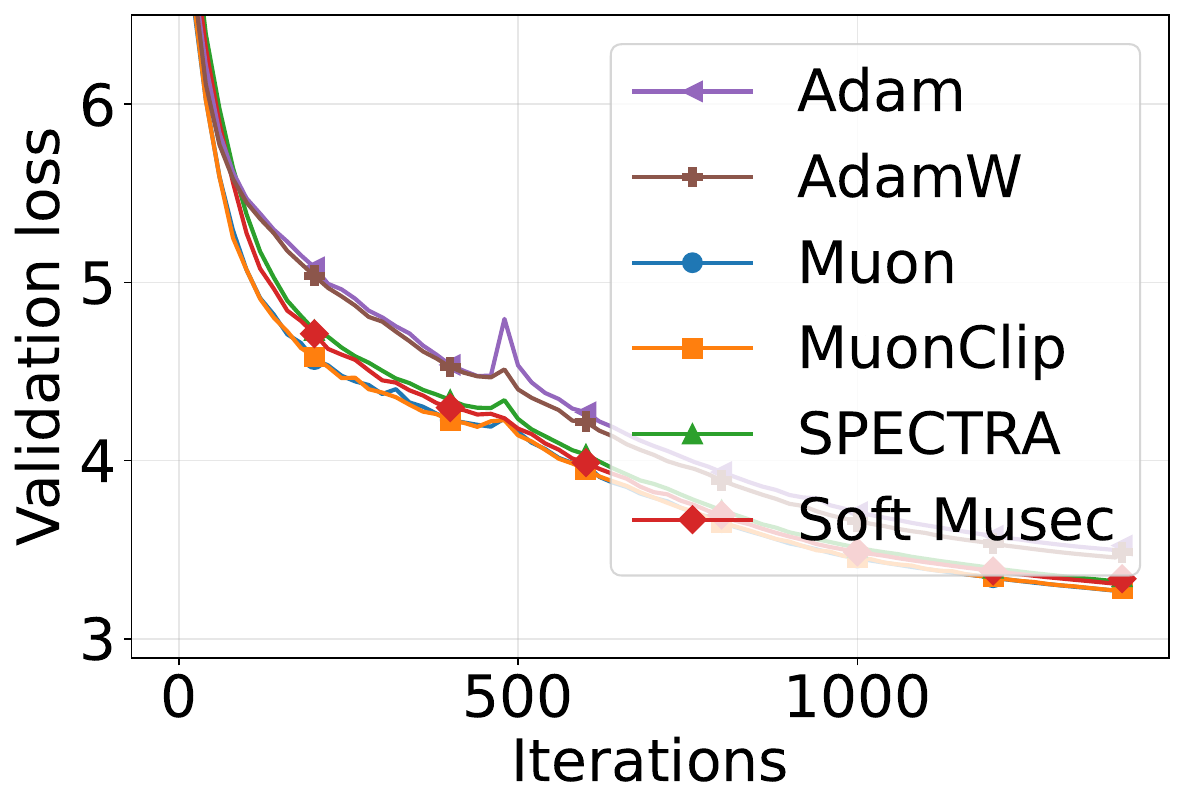} \\
\rowlabel{$\eta = 0.1$} &
\includegraphics[width=0.3\textwidth]{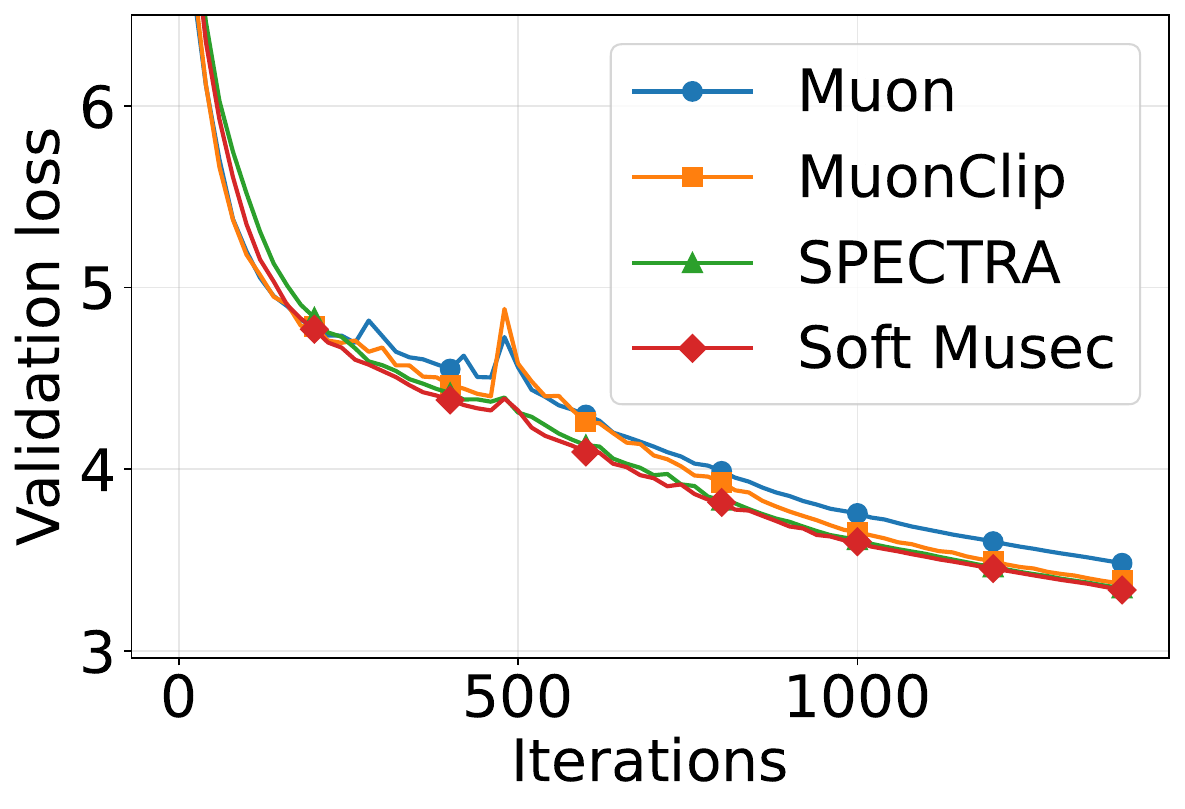} &
\includegraphics[width=0.3\textwidth]{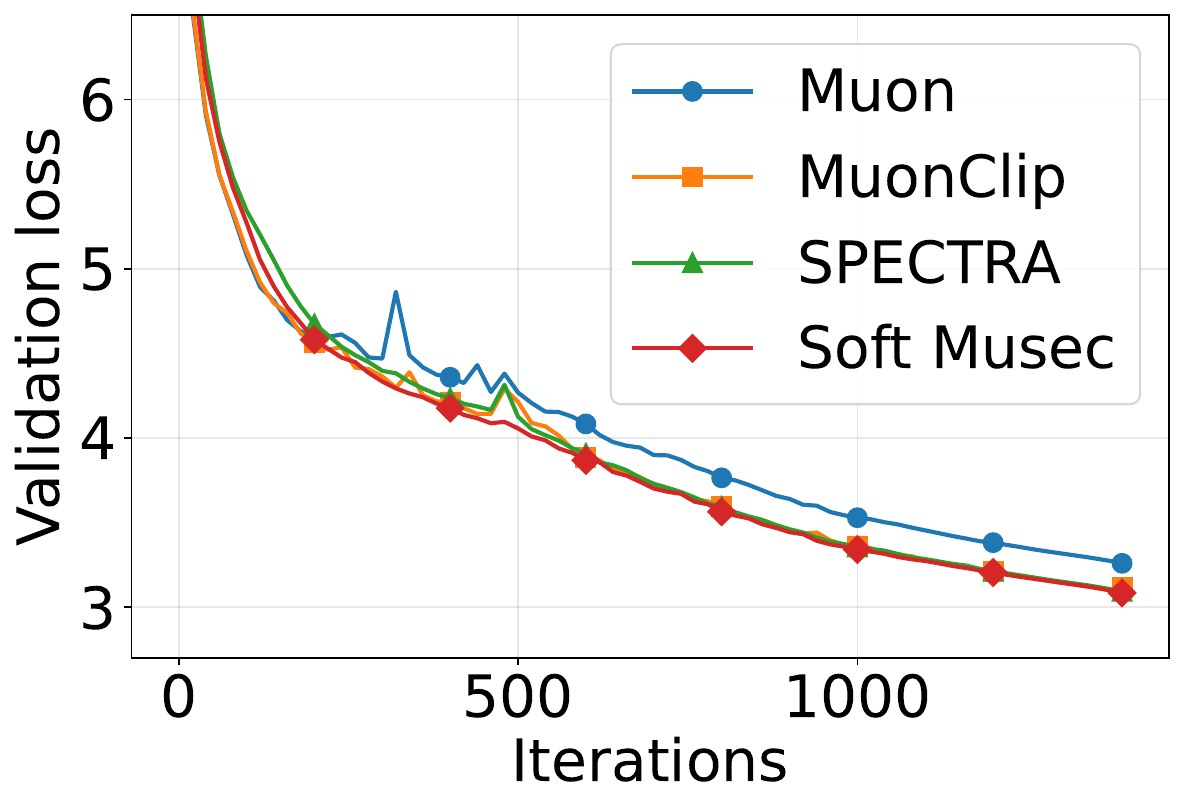} &
\includegraphics[width=0.3\textwidth]{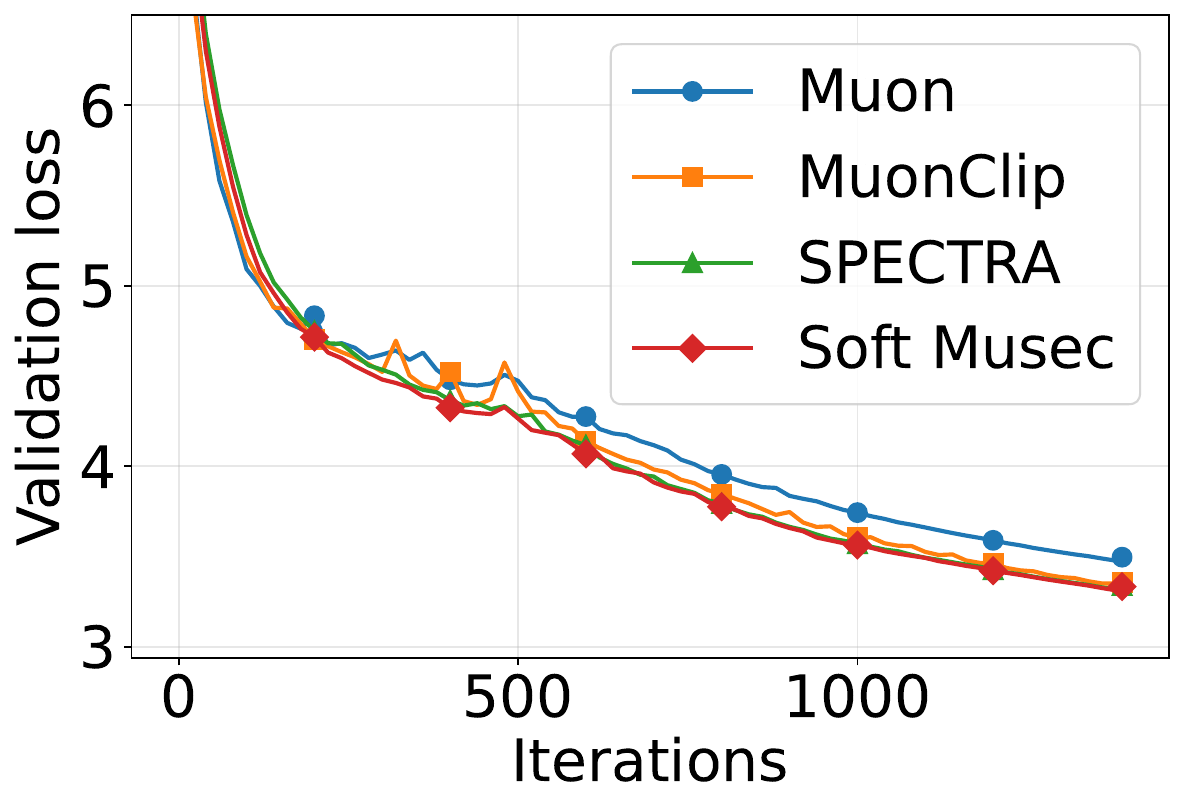} \\
\rowlabel{$\eta = 0.2$} &
\includegraphics[width=0.3\textwidth]{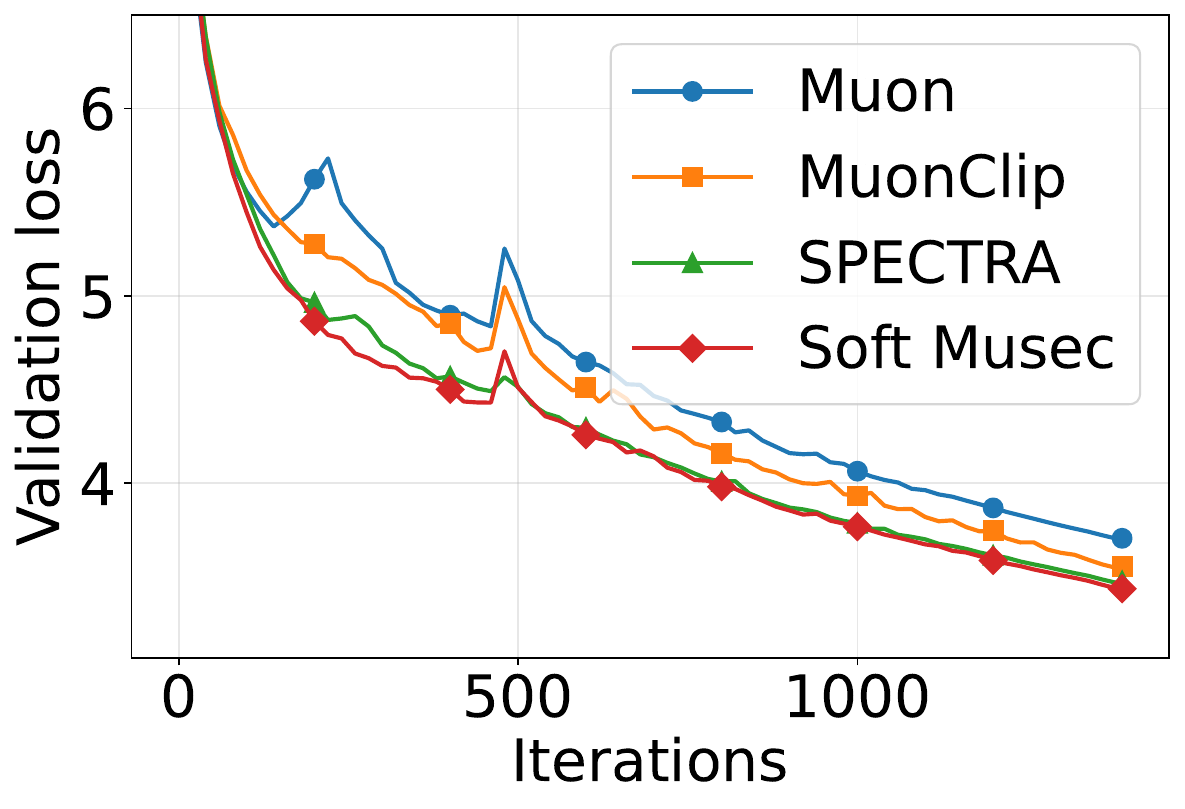} &
\includegraphics[width=0.3\textwidth]{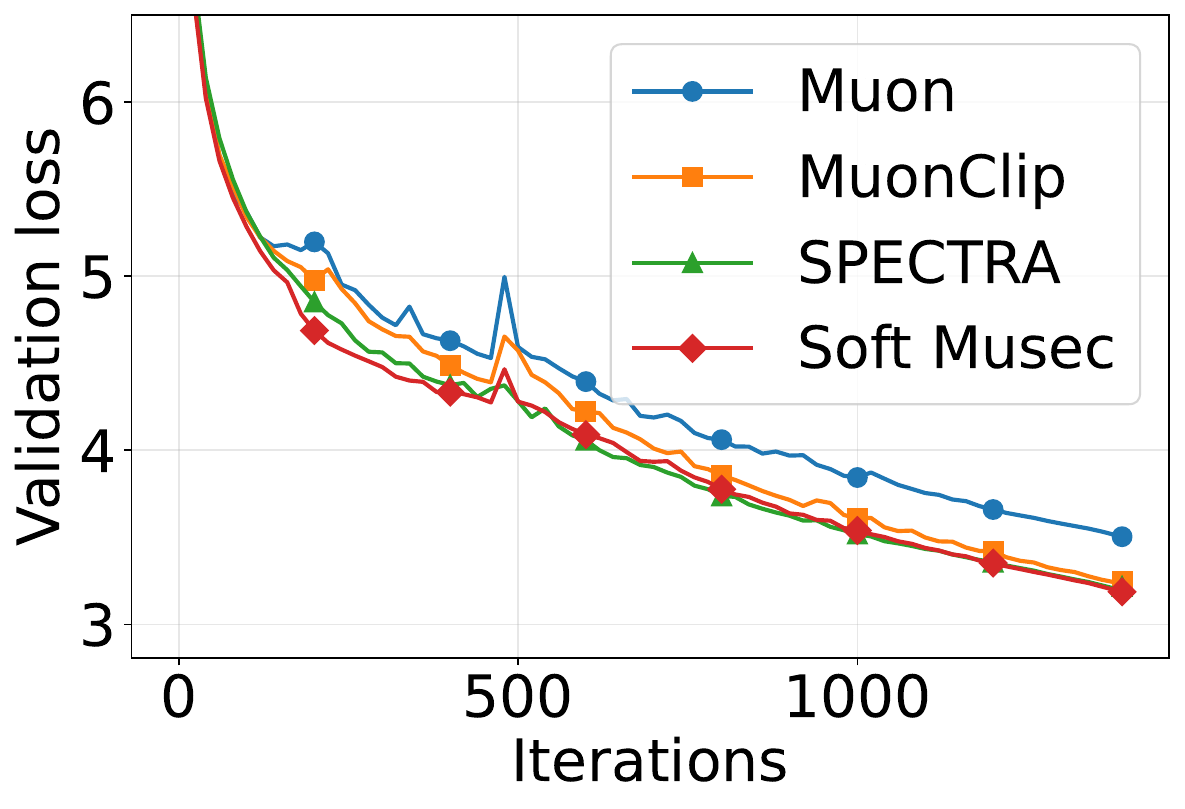} &
\includegraphics[width=0.3\textwidth]{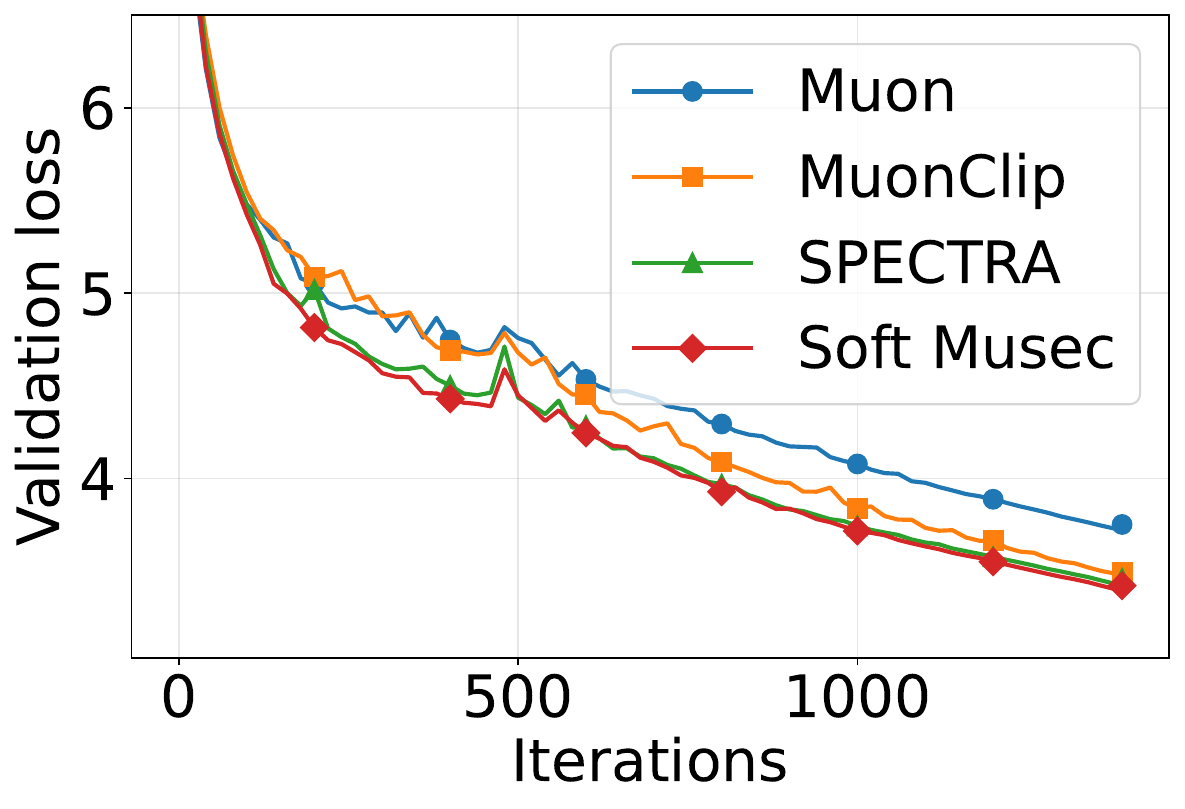} \\
\rowlabel{$\eta = 0.5$} &
\includegraphics[width=0.3\textwidth]{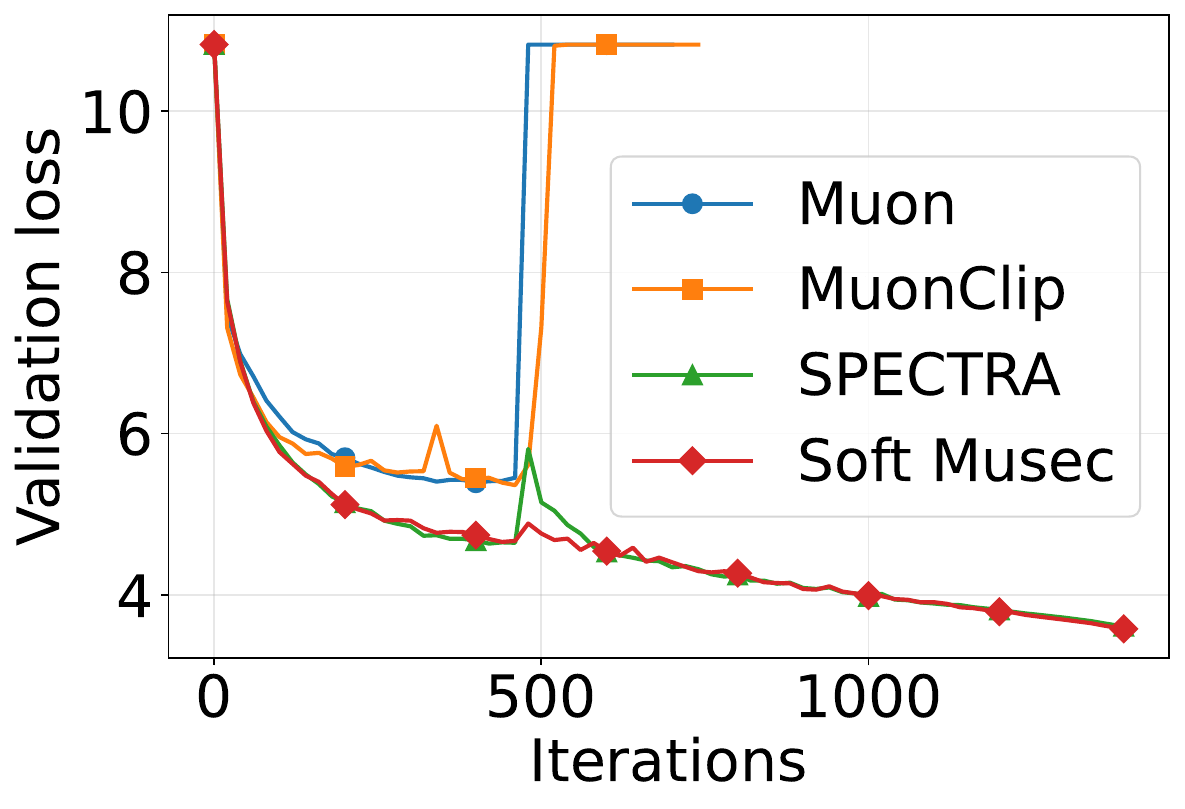} &
\includegraphics[width=0.3\textwidth]{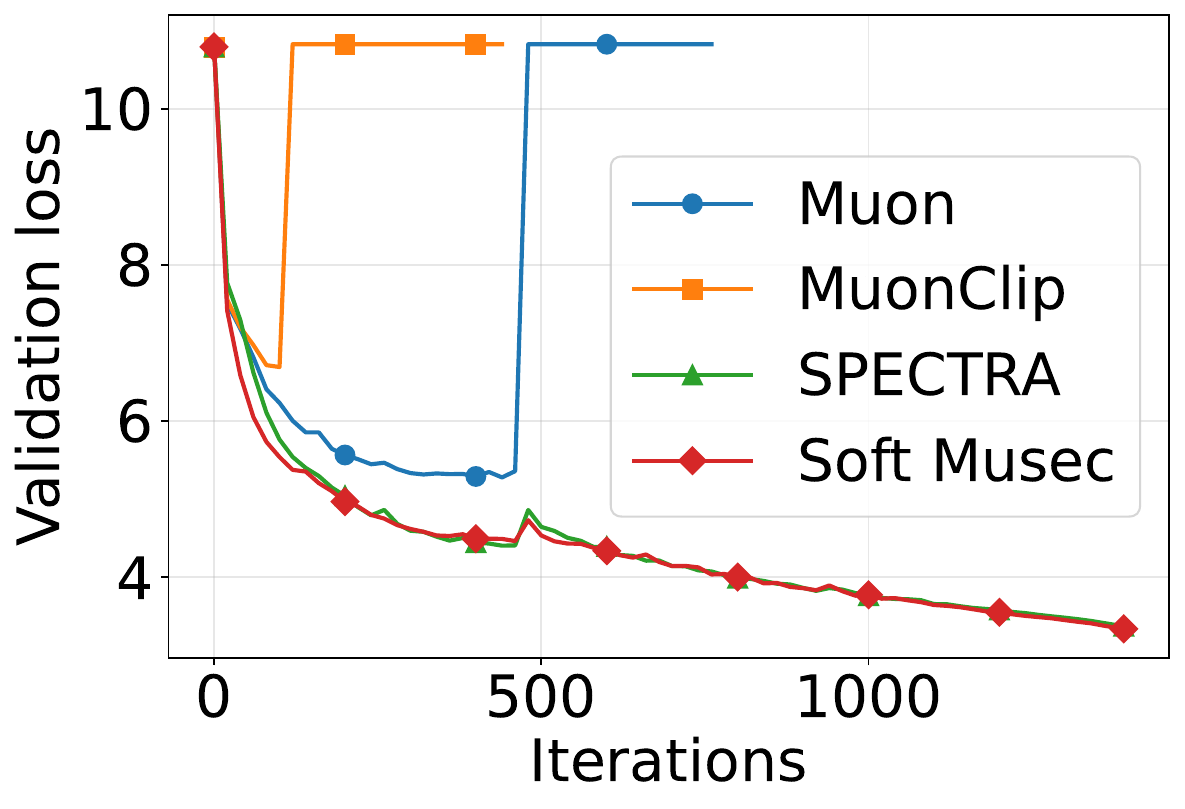} &
\includegraphics[width=0.3\textwidth]{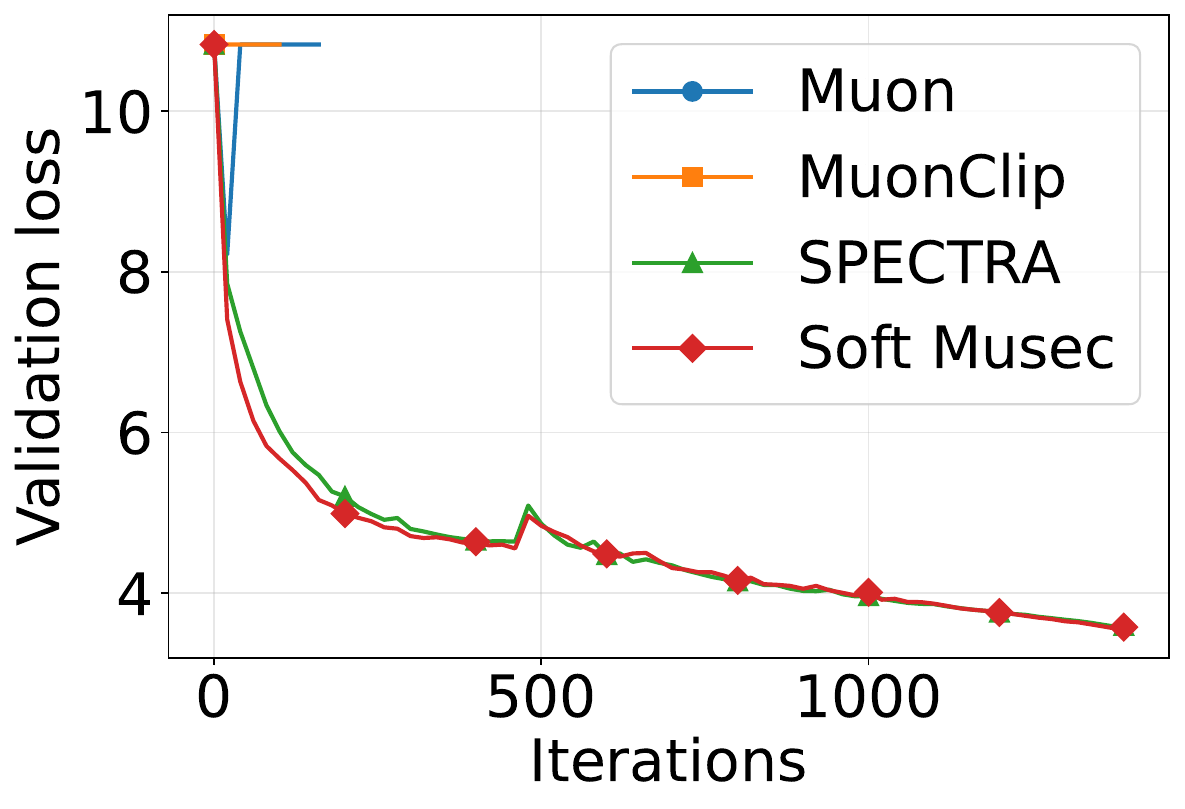} \\
\end{tabular}
\caption{Training loss over steps for NanoGPT-Small. Rows: learning rates; columns: datasets. Only learning rates where methods exhibit meaningfully different behavior are shown. NaN values within runs are omitted.}
\label{fig:td_small}
\end{figure}

\begin{figure}[p]
\centering
\begin{tabular}{cccc}
& \textbf{FineWeb} & \textbf{OpenWebText} & \textbf{C4} \\
\rowlabel{Best Tuned} &
\includegraphics[width=0.3\textwidth]{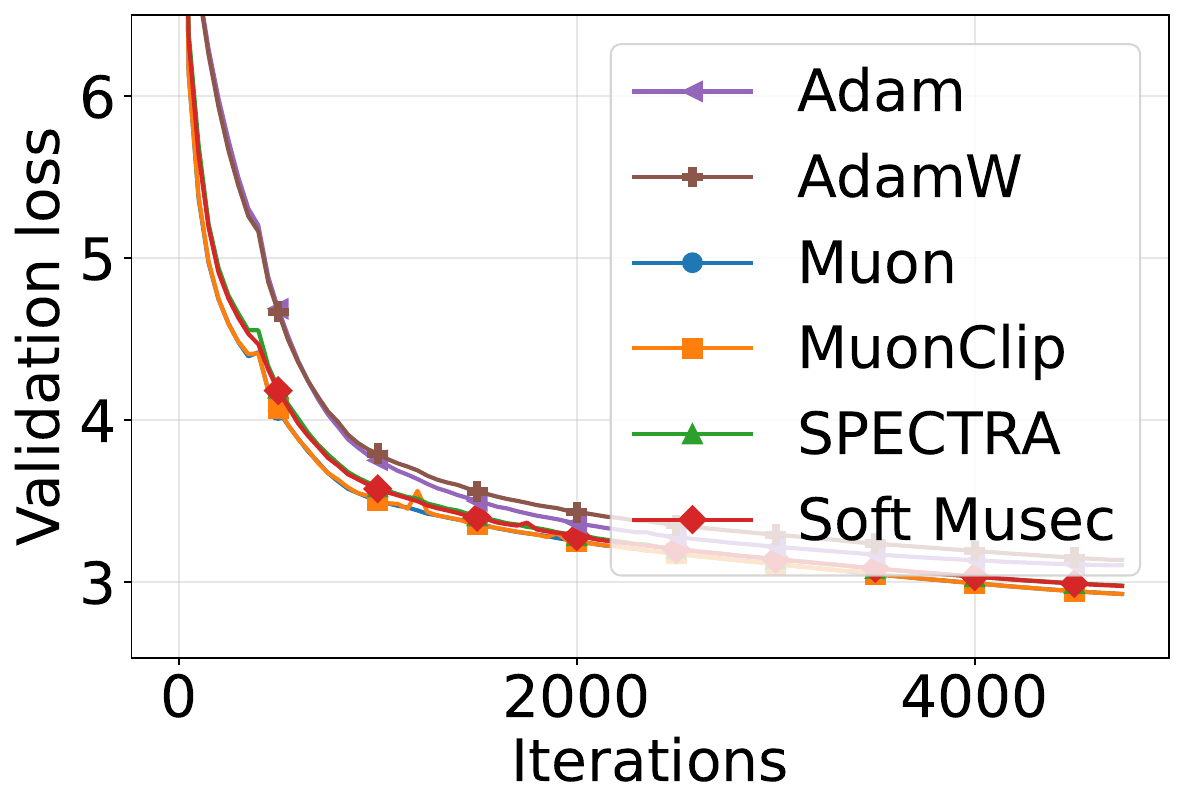} &
\includegraphics[width=0.3\textwidth]{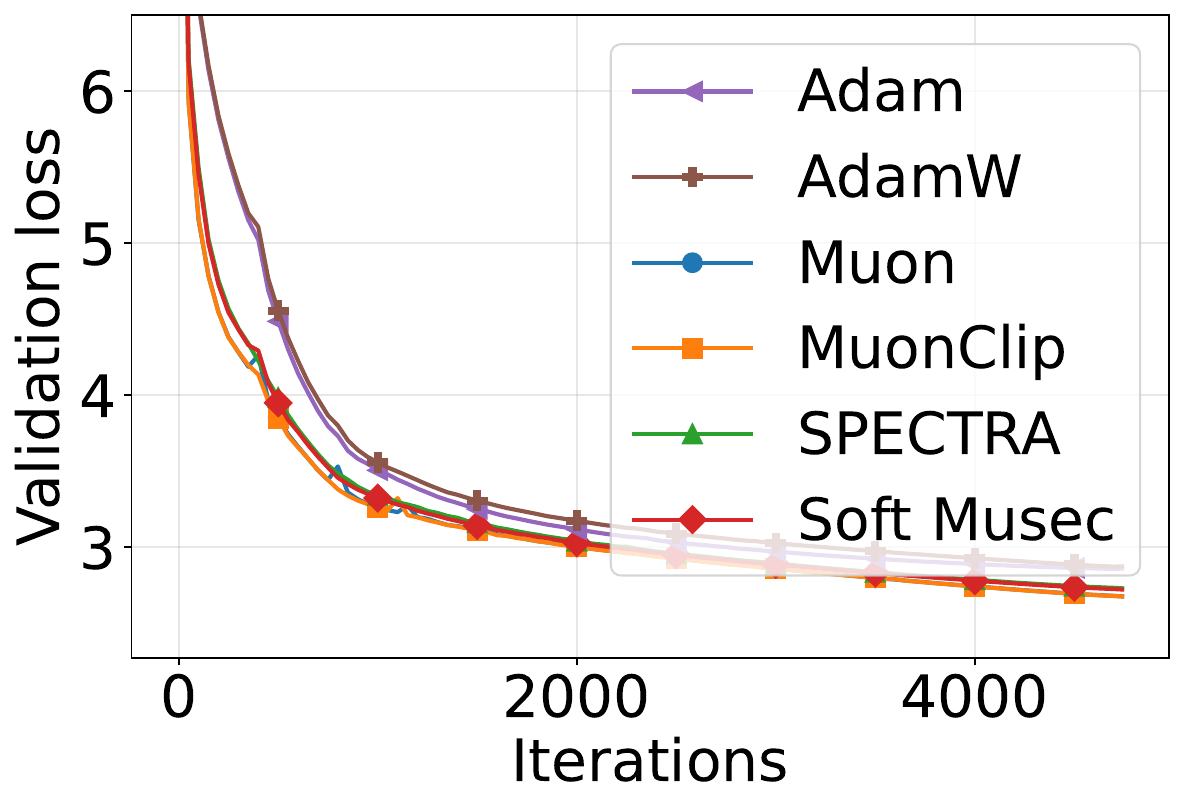} &
\includegraphics[width=0.3\textwidth]{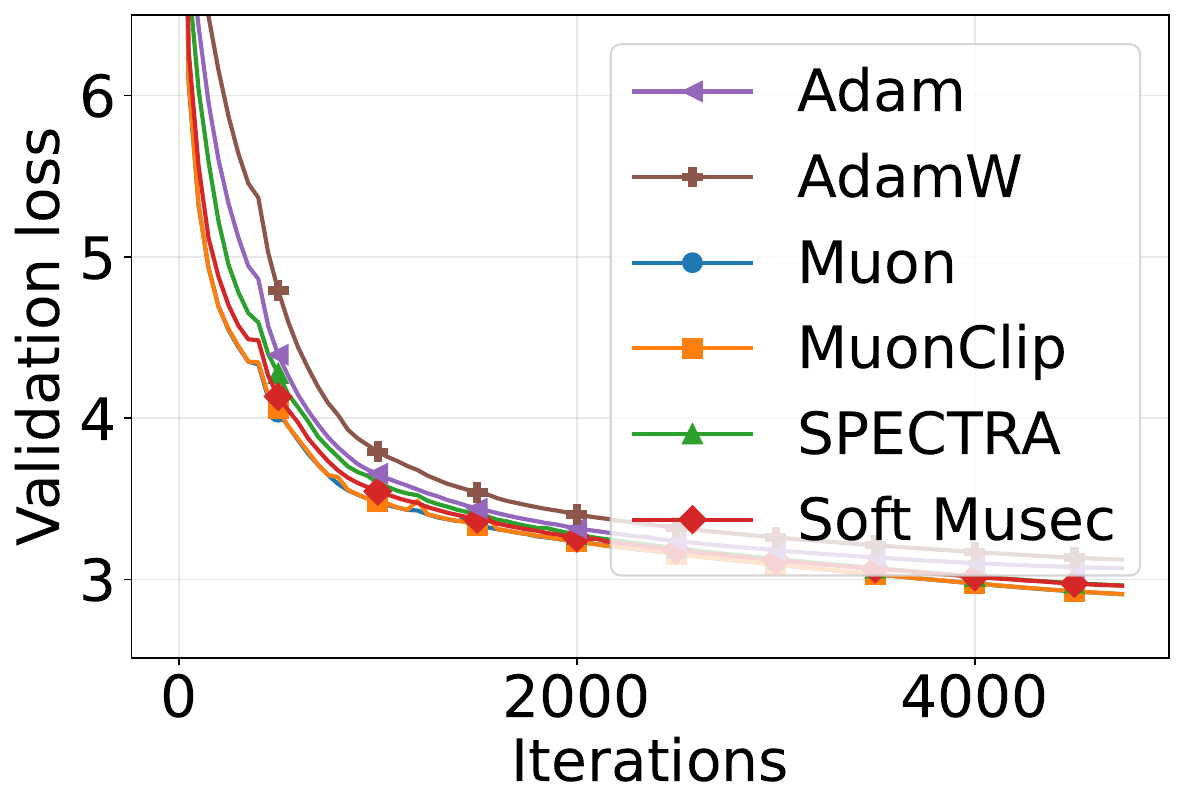} \\
\rowlabel{$\eta = 0.1$} &
\includegraphics[width=0.3\textwidth]{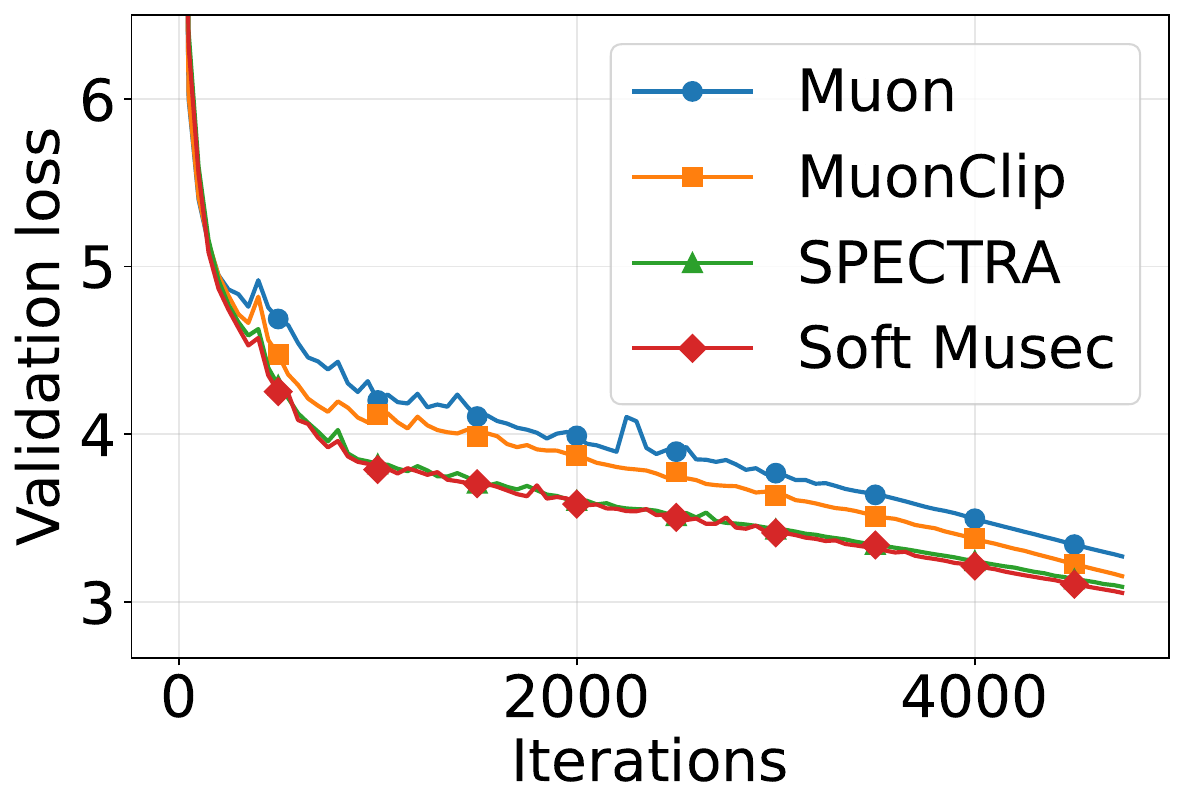} &
\includegraphics[width=0.3\textwidth]{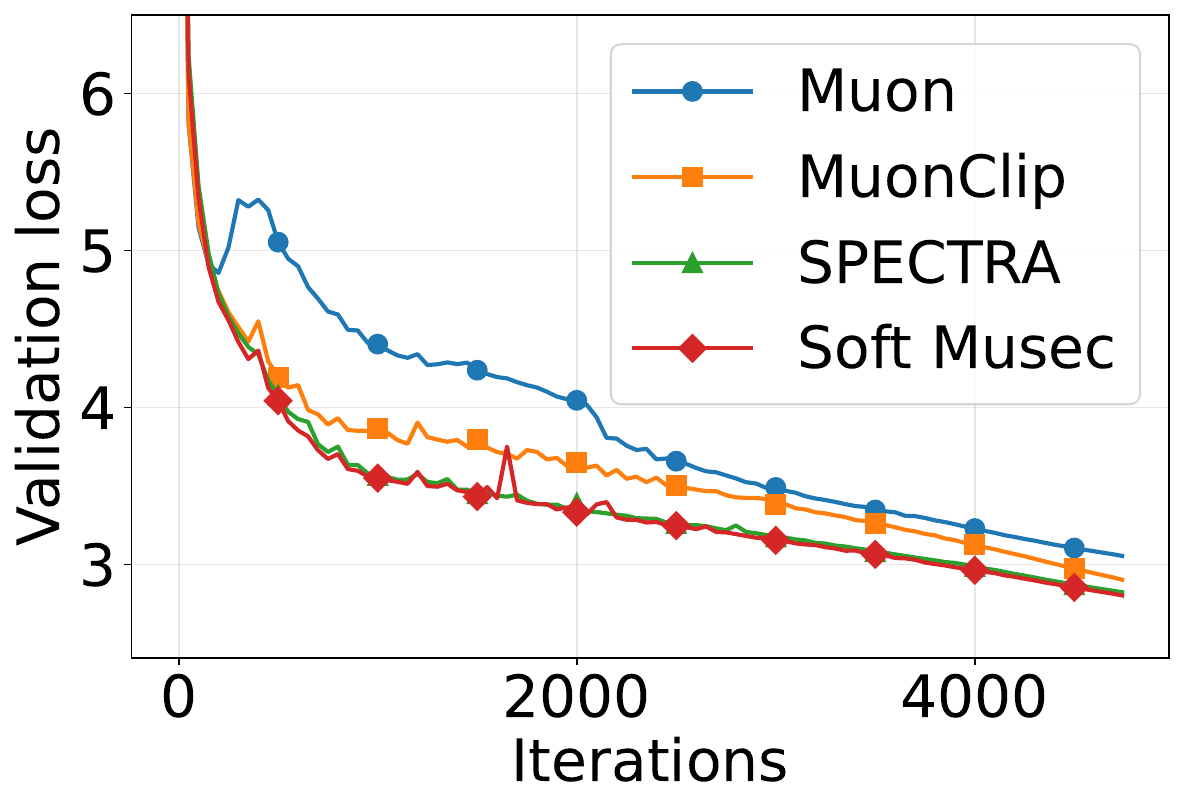} &
\includegraphics[width=0.3\textwidth]{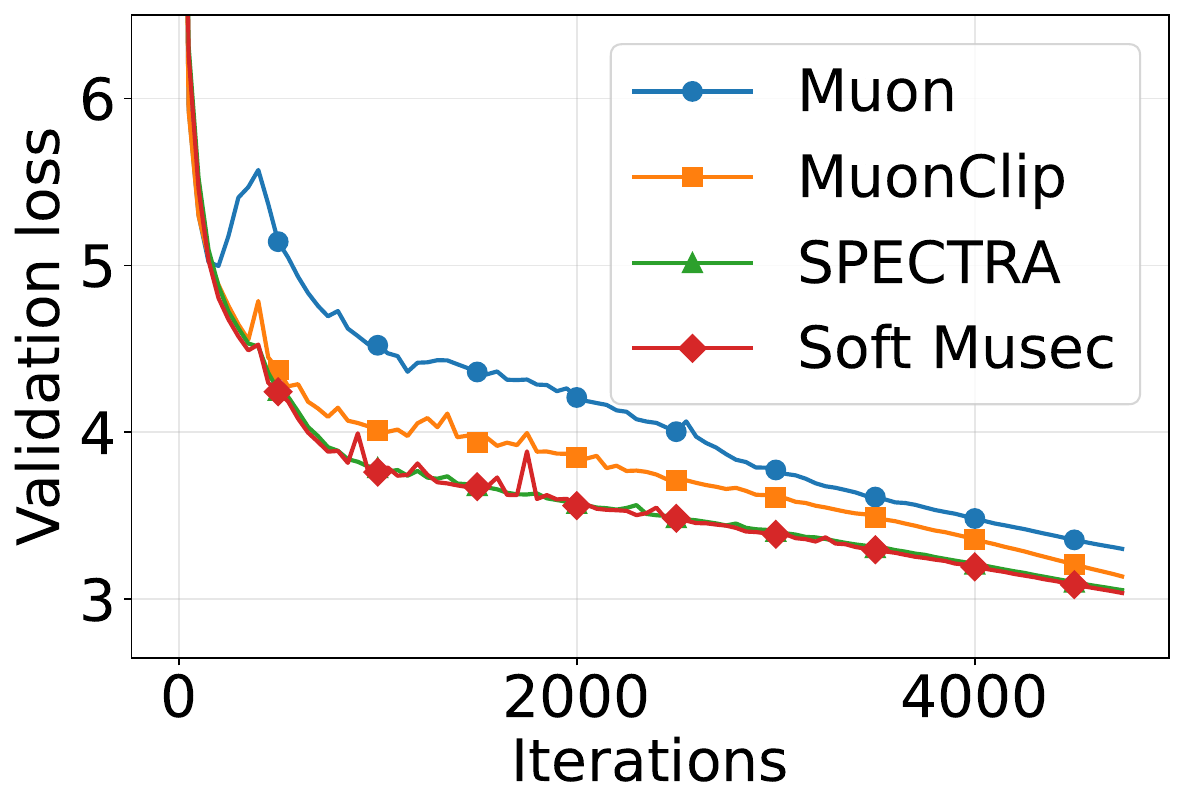} \\
\rowlabel{$\eta = 0.2$} &
\includegraphics[width=0.3\textwidth]{figures/train_dynamics/fineweb/val_loss_vs_steps_medium_0.2.pdf} &
\includegraphics[width=0.3\textwidth]{figures/train_dynamics/openweb/val_loss_vs_steps_medium_0.2.pdf} &
\includegraphics[width=0.3\textwidth]{figures/train_dynamics/c4/val_loss_vs_steps_medium_0.2.pdf} \\
\rowlabel{$\eta = 0.5$} &
\includegraphics[width=0.3\textwidth]{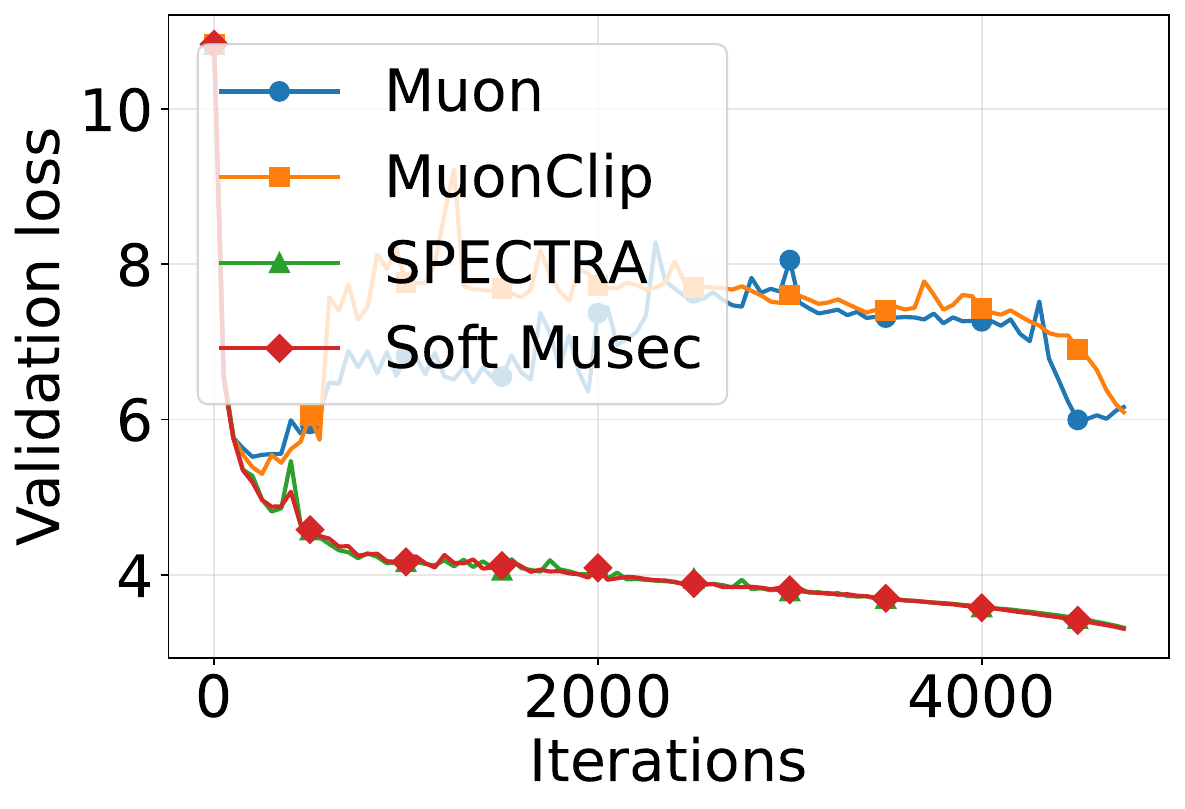} &
\includegraphics[width=0.3\textwidth]{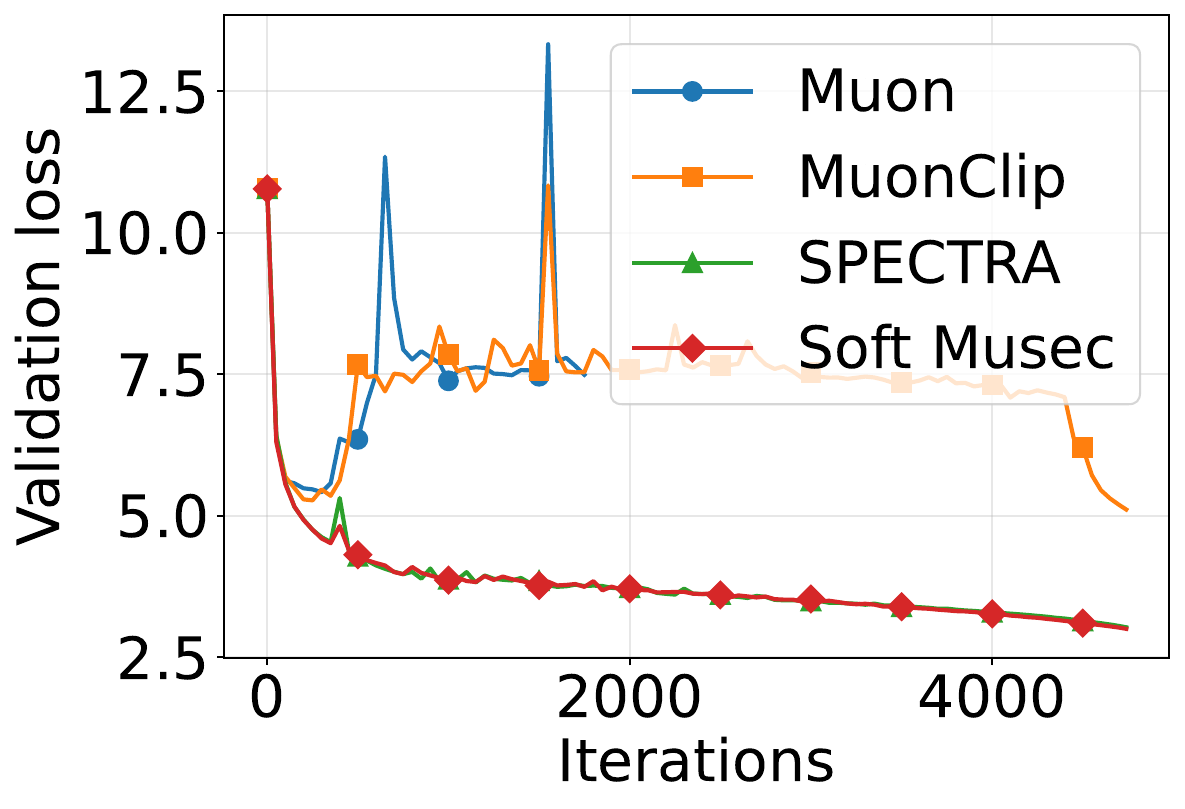} &
\includegraphics[width=0.3\textwidth]{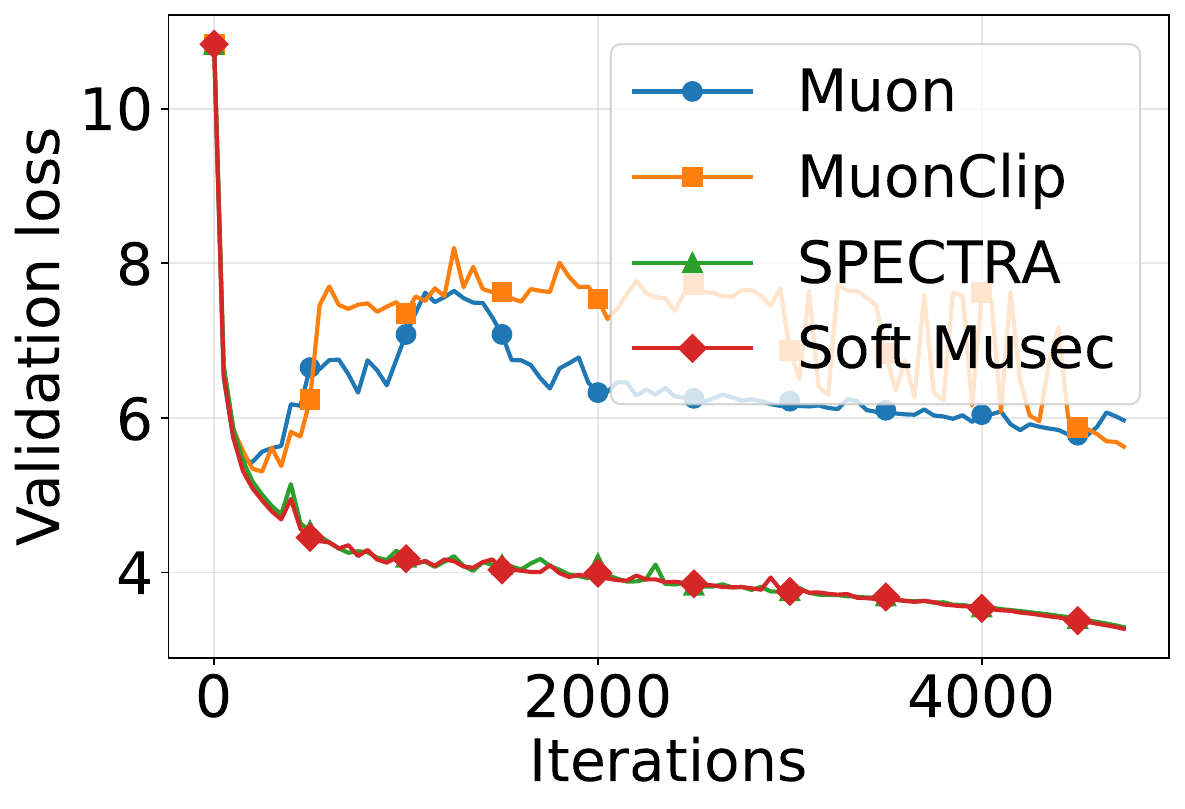} \\
\end{tabular}
\caption{Training loss over steps for NanoGPT-Medium. Rows: learning rates; columns: datasets. Only learning rates where methods exhibit meaningfully different behavior are shown. NaN values within runs are omitted.}
\label{fig:td_medium}
\end{figure}

\begin{figure}[p]
\centering
\begin{tabular}{cccc}
\includegraphics[width=0.4\textwidth]{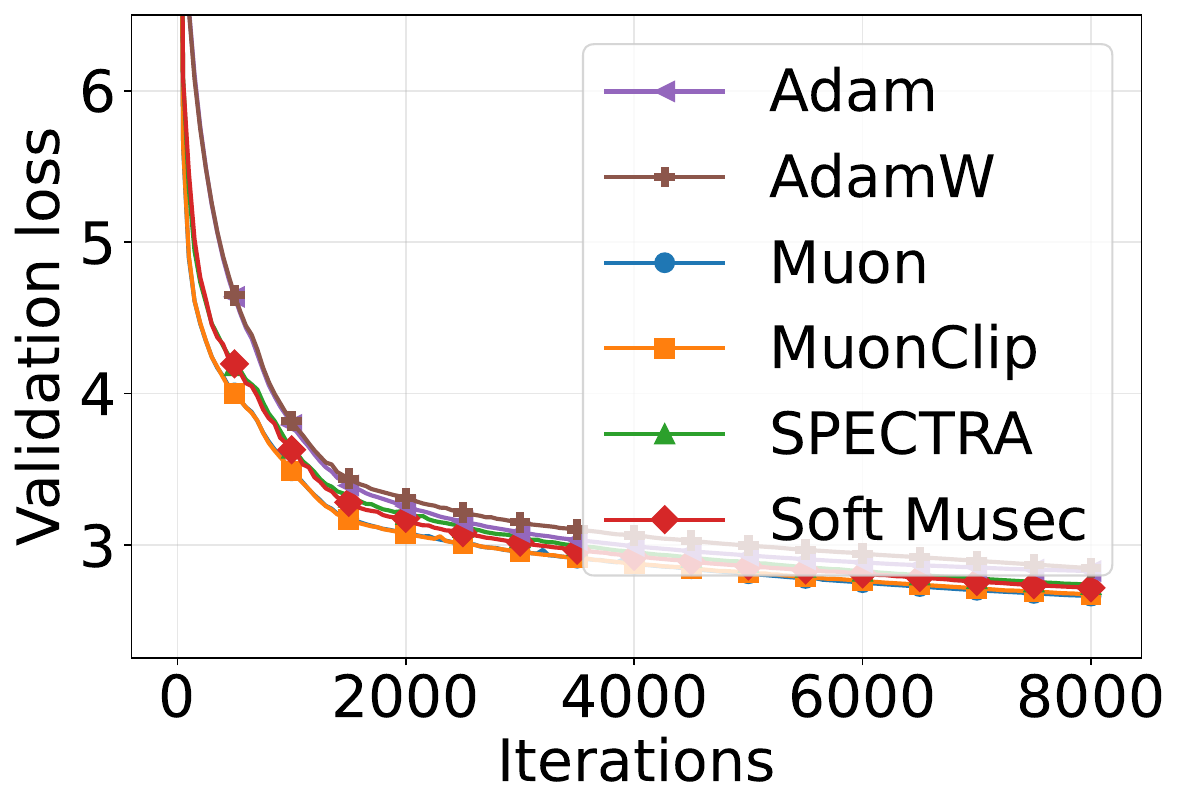}  &
\includegraphics[width=0.4\textwidth]{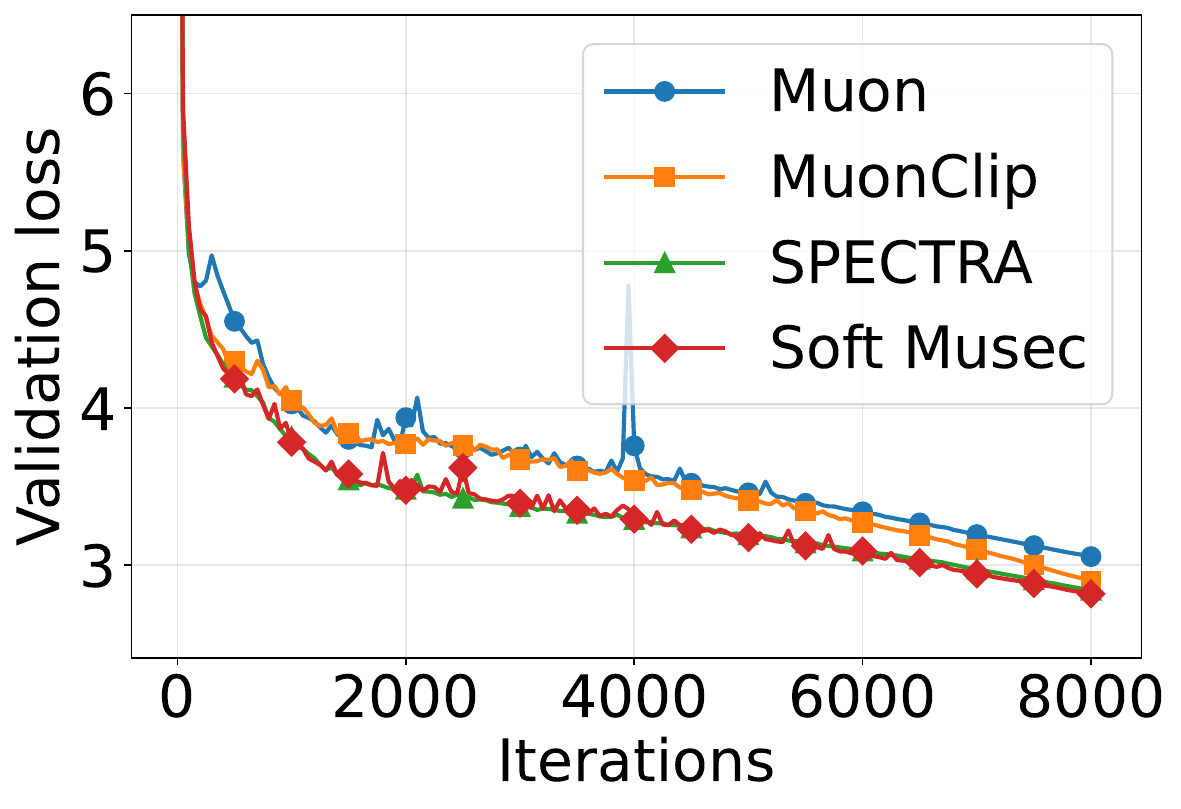}  \\
(a) Best Tuned & (b) $\eta = 0.1$ \\[6pt]
\includegraphics[width=0.4\textwidth]{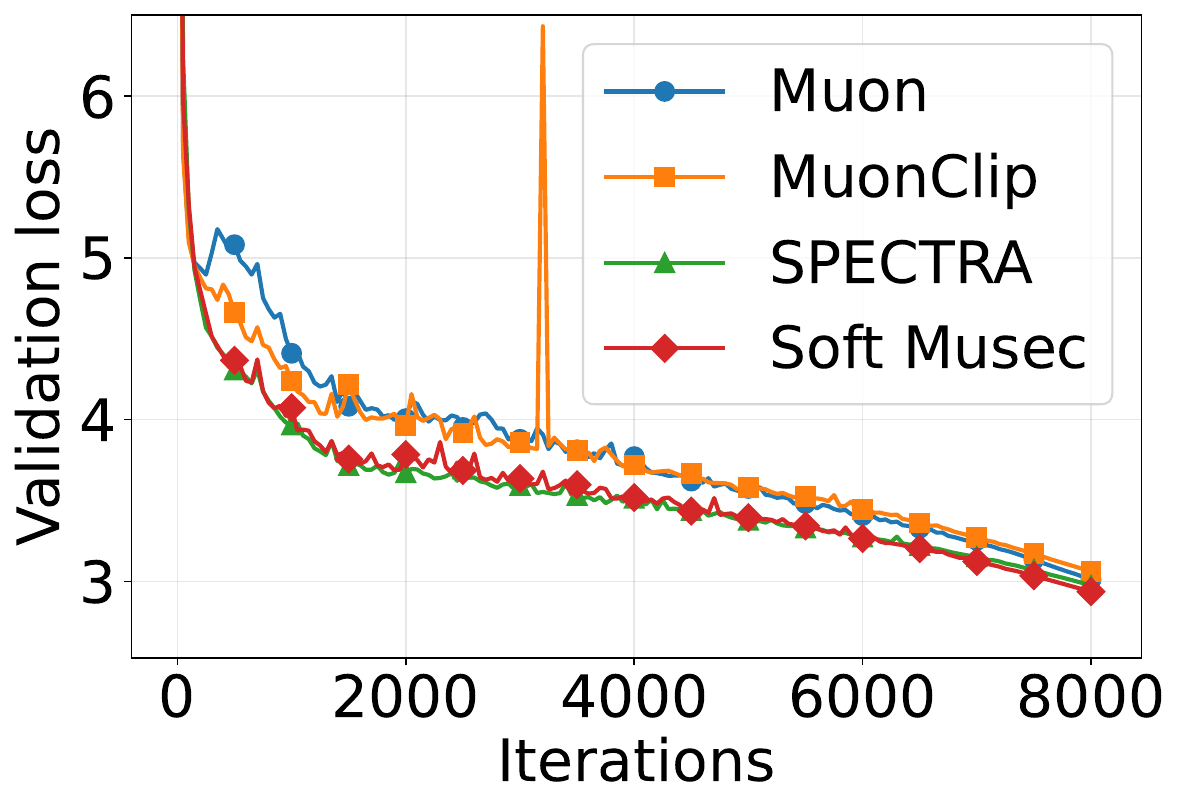}   &
\includegraphics[width=0.4\textwidth]{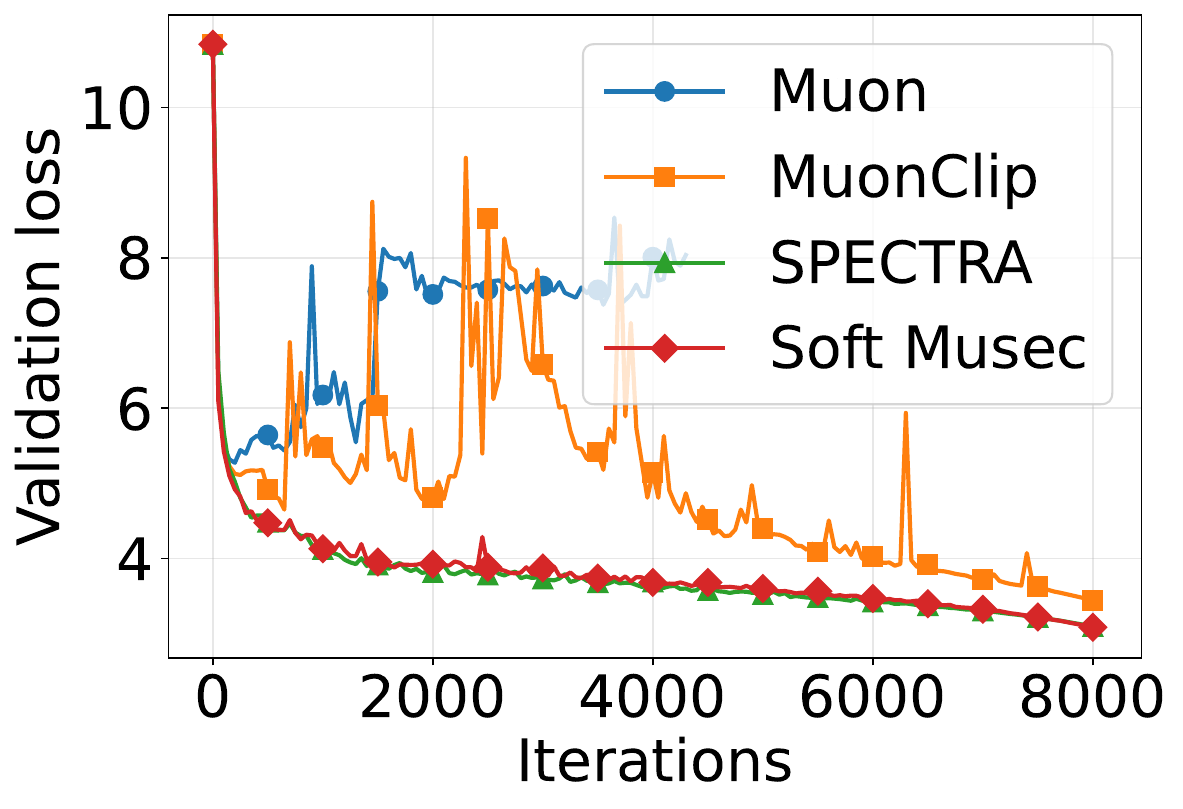}  \\
(c) $\eta = 0.2$ & (d) $\eta = 0.5$ \\
\end{tabular}
\caption{Training loss over steps for NanoGPT-Wide on FineWeb dataset. Only learning rates where methods exhibit meaningfully different behavior are shown. NaN values within runs are omitted.}
\label{fig:td_wide}
\end{figure}

Figures~\ref{fig:td_small}--\ref{fig:td_wide} present training loss curves across all model configurations, datasets, and selected learning rates. At the best-tuned learning rate, all Muon-type optimizers converge to lower validation loss than Adam and AdamW. As the learning rate increases, the stability gap between methods widens, and this effect is amplified at larger model scales. For NanoGPT-Small, instability only manifests at the largest learning rate ($\eta=0.5$), where Muon and MuonClip diverge on some datasets while Soft Musec and SPECTRA remain stable. For NanoGPT-Medium and NanoGPT-Wide, Muon and MuonClip exhibit increasingly severe instability starting from $\eta=0.1$, with large loss spikes and sustained oscillations. 
Soft Musec and SPECTRA both maintain stable convergence across all configurations, with Soft Musec achieving comparable or slightly smoother training dynamics.

\subsection{Effective Rank Analysis}\label{app:effective_rank}
\begin{figure}[htbp]
    \centering
    \begin{tabular}{ccc}
        % First Row
        \includegraphics[width=0.31\textwidth]{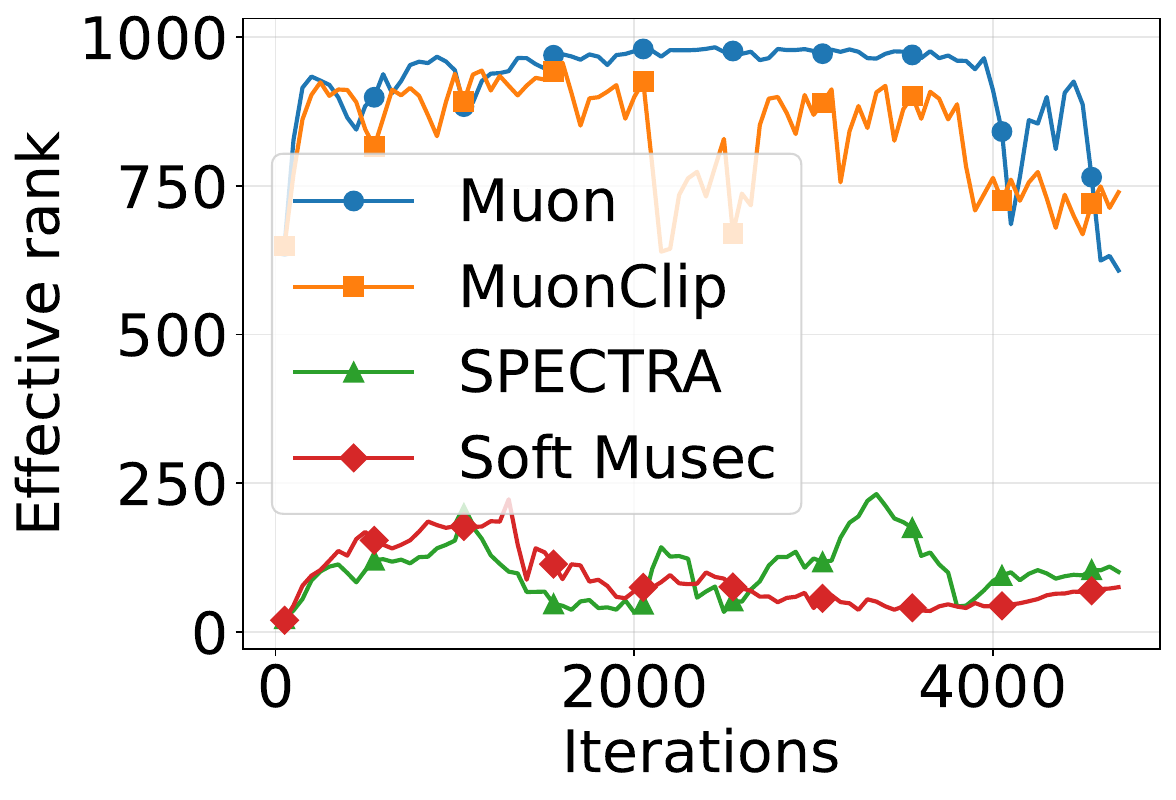} &
        \includegraphics[width=0.31\textwidth]{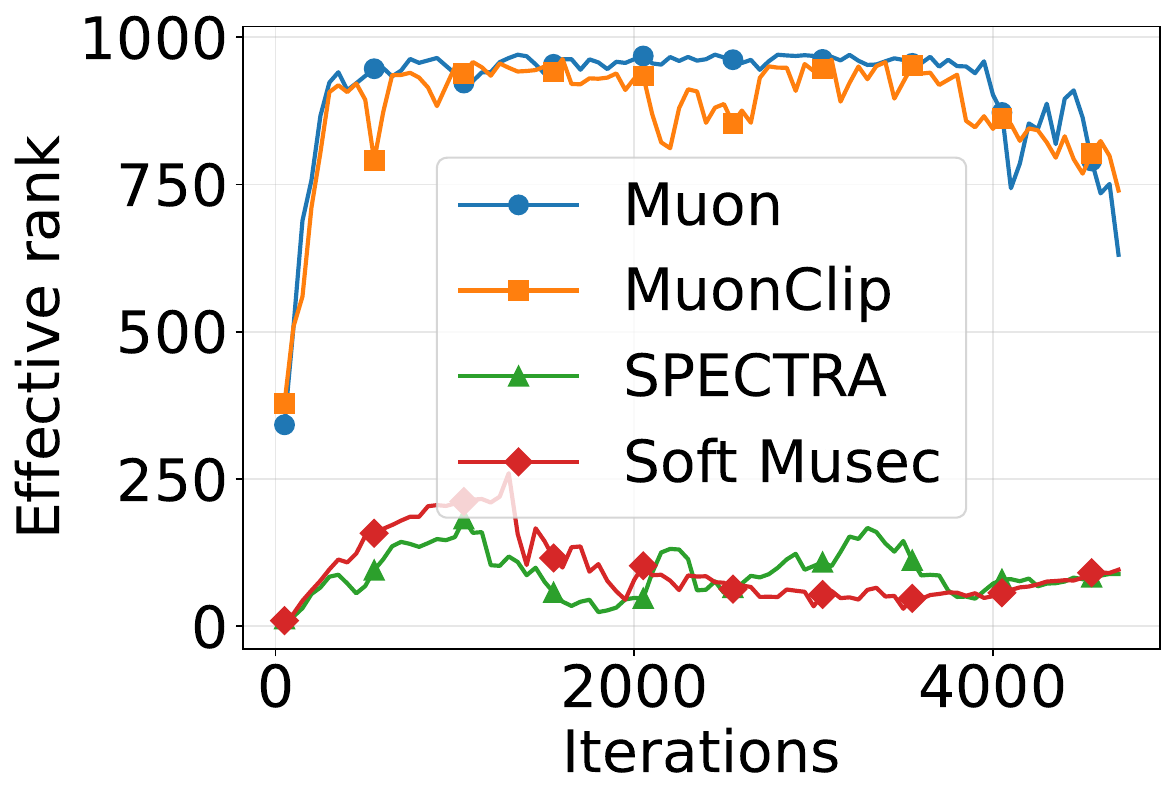} &
        \includegraphics[width=0.31\textwidth]{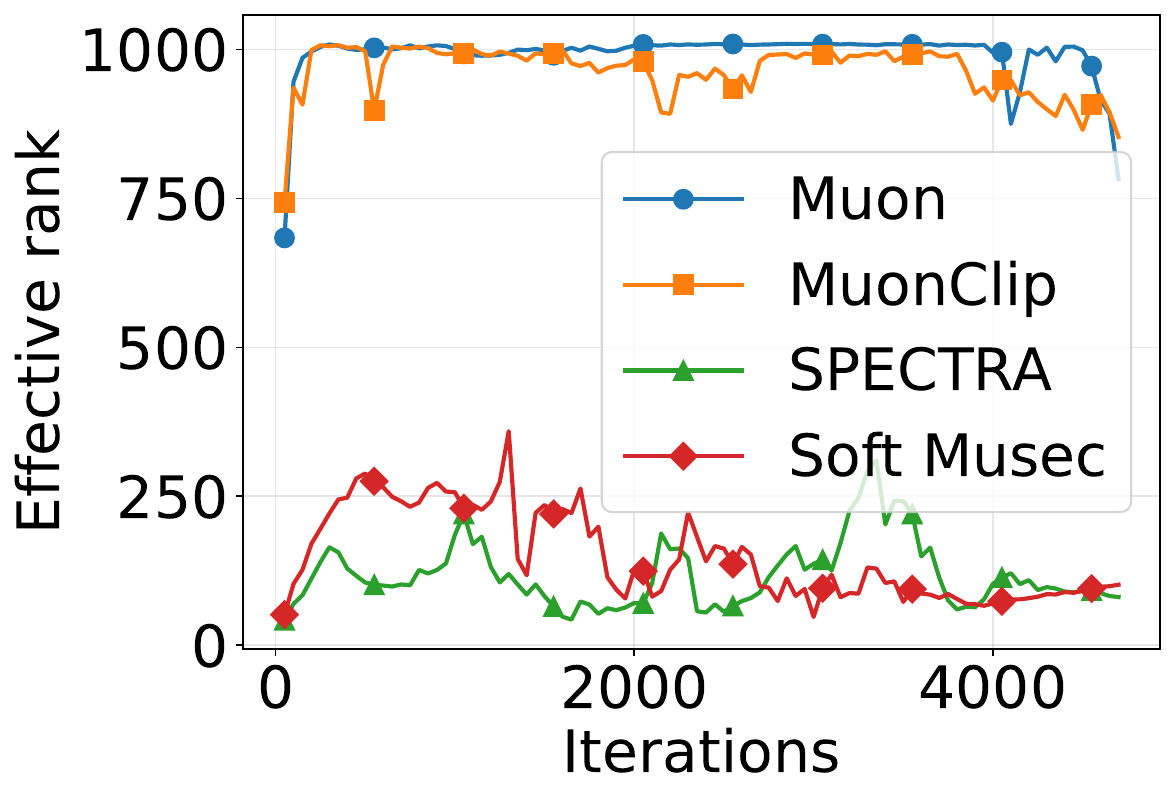} \\
        (a) QK projections & (b) VO projections & (c) MLP \\[1em]
    \end{tabular}
    \caption{Effective rank of update matrices over training steps for NanoGPT-Medium on FineWeb.}
    \label{fig:er_res}
\end{figure}

To directly validate our theoretical motivation, we track the effective rank of the update matrices across QK projections, VO projections, and MLP weights during NanoGPT-Medium training on FineWeb at $\eta = 0.5$.

Let us first recall the definition of the effective rank \citep{roy2007effective,garrido2023rankme} of a matrix $\mA \in \R^{m \times n}$ with singular values $\sigma_1 \geq \sigma_2 \geq \dots \geq \sigma_r \geq 0$, where $r = \min(m,n)$.

We define the singular value distribution as
\begin{align*}
    p_i = \frac{\sigma_i}{\sum_{j=1}^r \sigma_j}, \qquad i \in [r]
\end{align*}
so that $\sum_i p_i = 1$. 
We next compute the Shannon entropy of this distribution as
\begin{align*}
    H(p_1, \dots, p_r) = - \sum_{i=1}^r p_i \log(p_i).
\end{align*}
The effective rank of the matrix $\mA$, denoted ${\rm erank}(\mA)$, is defined as
\begin{align*}
    {\rm erank}(\mA) = \exp(H(p_1, \dots, p_r)).
\end{align*}

The complete results are shown in Figure~\ref{fig:er_res}. Muon exhibits near-full effective rank across all components, confirming that its polar step discards all spectral magnitude information. MuonClip does not meaningfully reduce the effective rank of the update, as it targets the weight matrices rather than the update itself. Both SPECTRA and Soft Musec achieve substantially lower effective rank than Muon, consistent with the effect of spectral clipping. Notably, as training progresses, Soft Musec maintains lower effective rank than SPECTRA across all three component types. This is consistent with the momentum feedback mechanism in Musec: by propagating the clipped momentum, spectral components are continuously regulated rather than allowed to accumulate in an unclipped buffer, as discussed in Section~\ref{sec:methodology}.

\subsection{Computational Overhead}\label{app:overhead}

\begin{figure}[htbp]
    \centering
    \begin{tabular}{ccc}
        % First Row
        \includegraphics[width=0.31\textwidth]{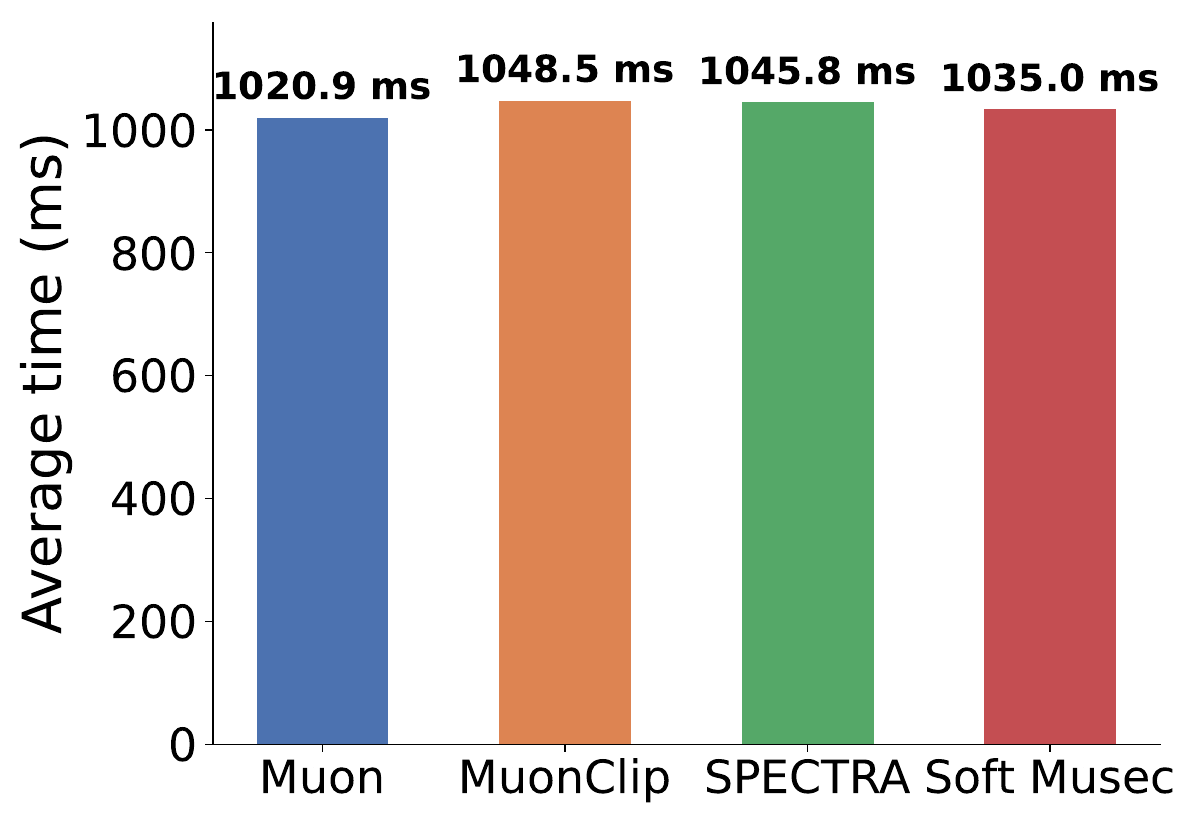} &
        \includegraphics[width=0.31\textwidth]{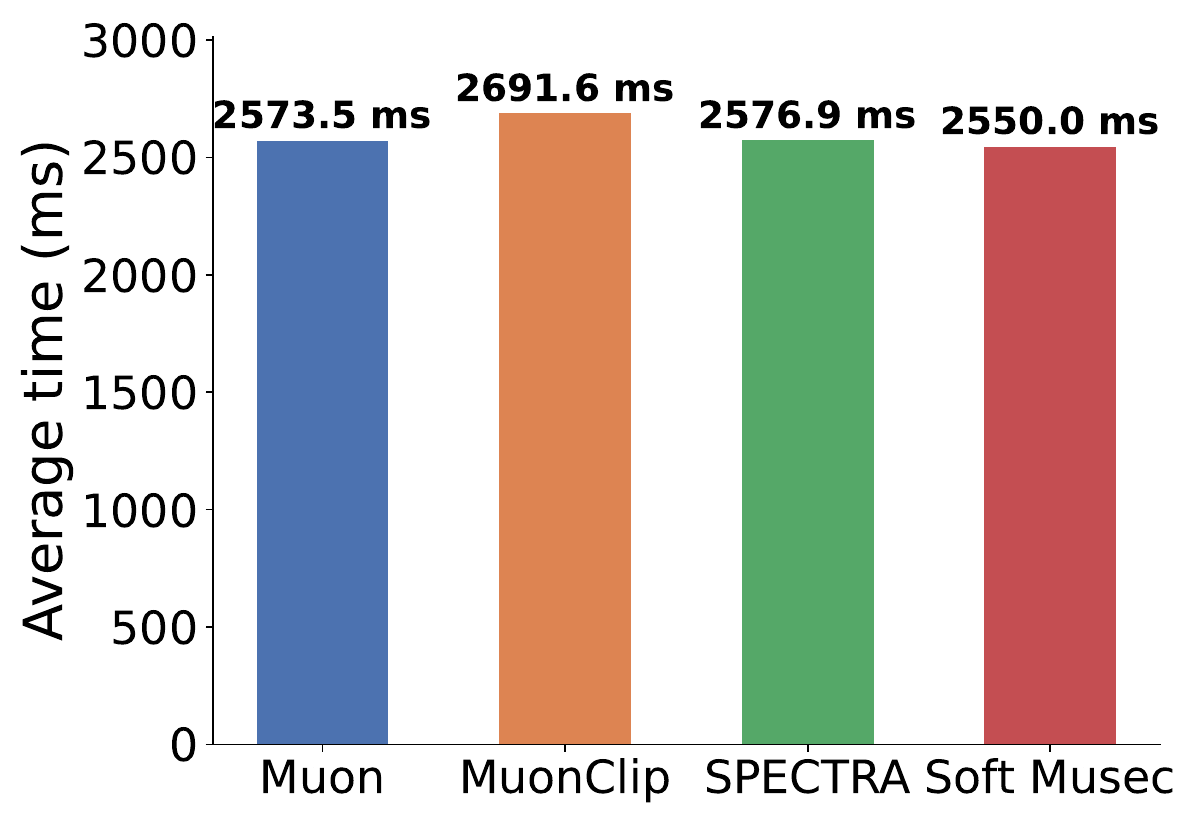} &
        \includegraphics[width=0.31\textwidth]{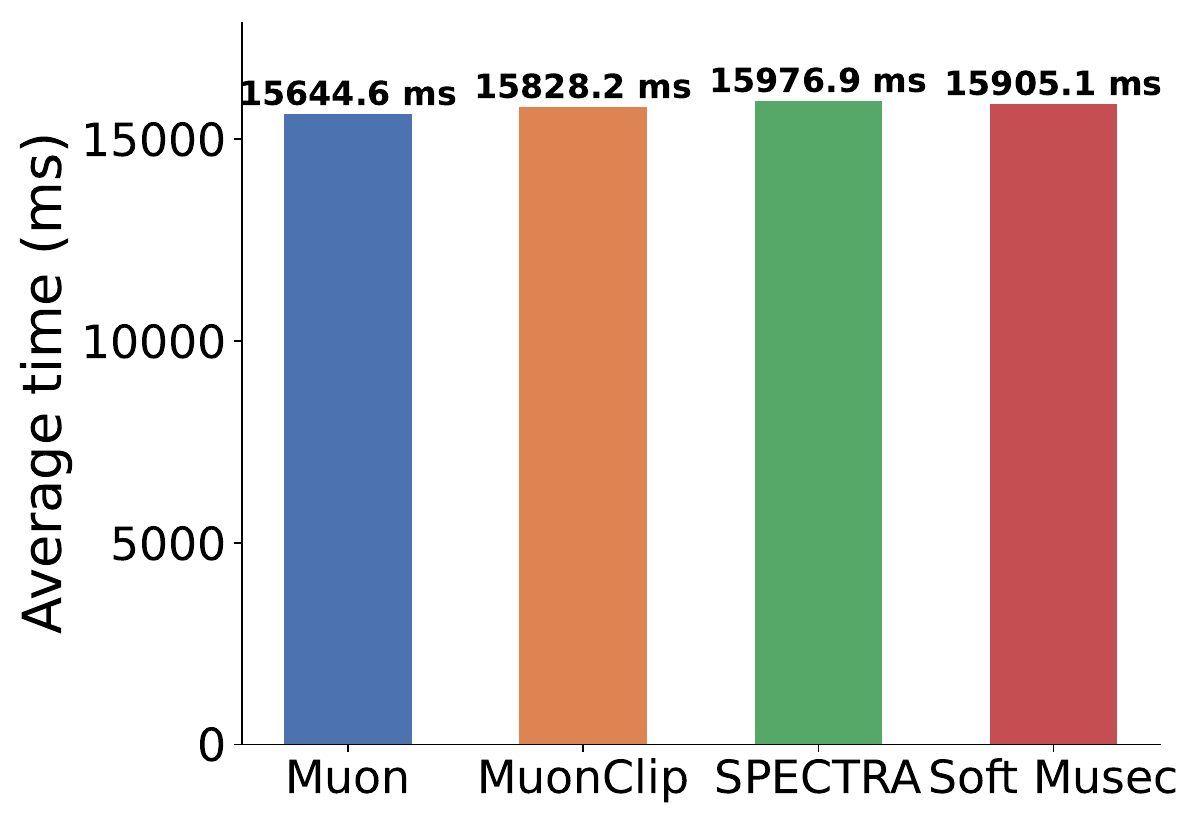} \\
        (a) NanoGPT-Small & (b) NanoGPT-Medium & (c) NanoGPT-Wide \\[1em]
    \end{tabular}
    \caption{Average wall clock time comparison of methods for training FineWeb.}
    \label{fig:avg_time}
\end{figure}

We compare the average wall-clock time per training step across all methods on FineWeb, with results shown in Figure~\ref{fig:avg_time}. Across all three model configurations, Soft Musec introduces negligible computational overhead compared to Muon. For NanoGPT-Small, all methods are within $3\%$ of each other (1021--1049 ms). 
For NanoGPT-Medium, all Muon-type methods achieve similar wall-clock time (2550--2692 ms), with MuonClip being the slowest.
For NanoGPT-Wide, all methods remain within $2\%$ (15645--15977 ms). 
MuonClip is consistently the slowest method due to its additional weight clipping step.
These results confirm that the spectral clipping operation in Soft Musec introduces minimal computational overhead relative to Muon, as both methods rely on Newton--Schulz iterations of similar complexity.

\subsection{Ablation Studies}\label{app:ablation}

In this subsection, we ablate two key hyperparameters of Soft Musec: the clipping threshold $D$ and the number of Newton--Schulz iterations.

\subsubsection{Sensitivity to Clipping Threshold}
\begin{figure}[htbp]
    \centering
    \begin{tabular}{ccc}
        % First Row
        \includegraphics[width=0.31\textwidth]{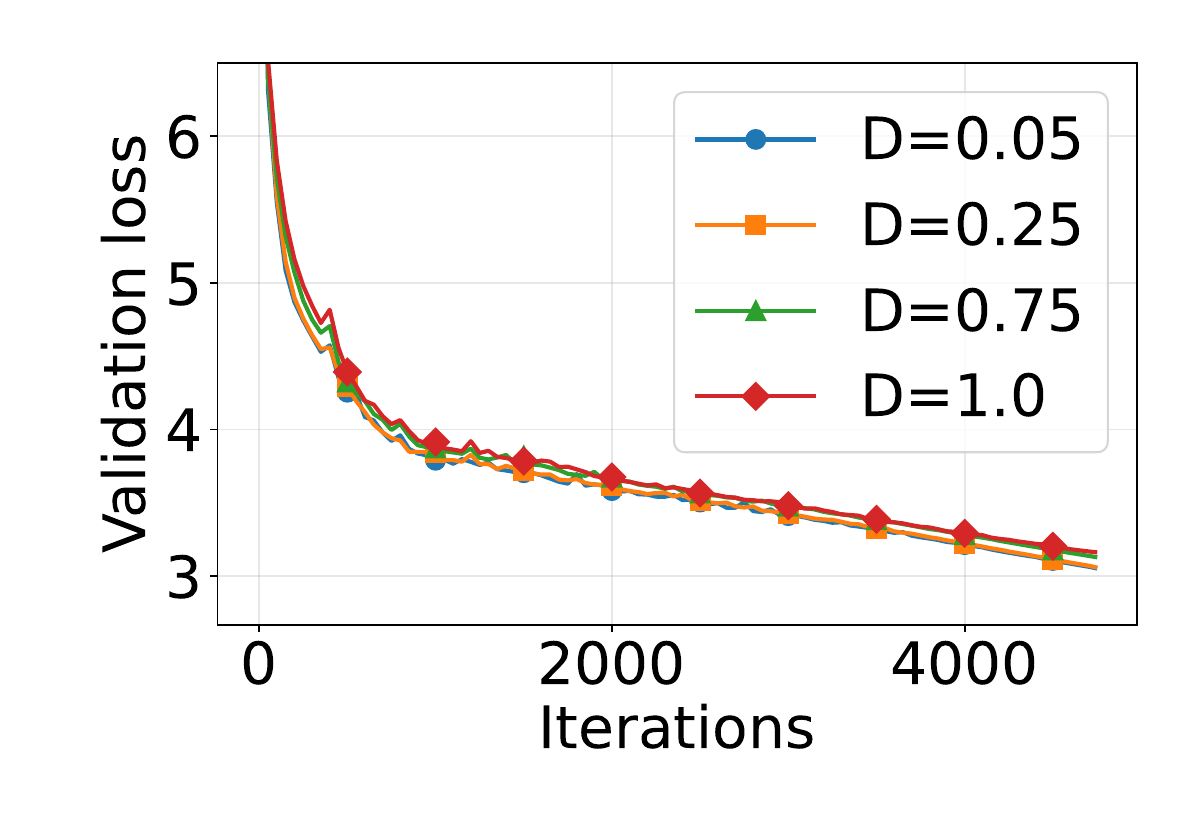} &
        \includegraphics[width=0.31\textwidth]{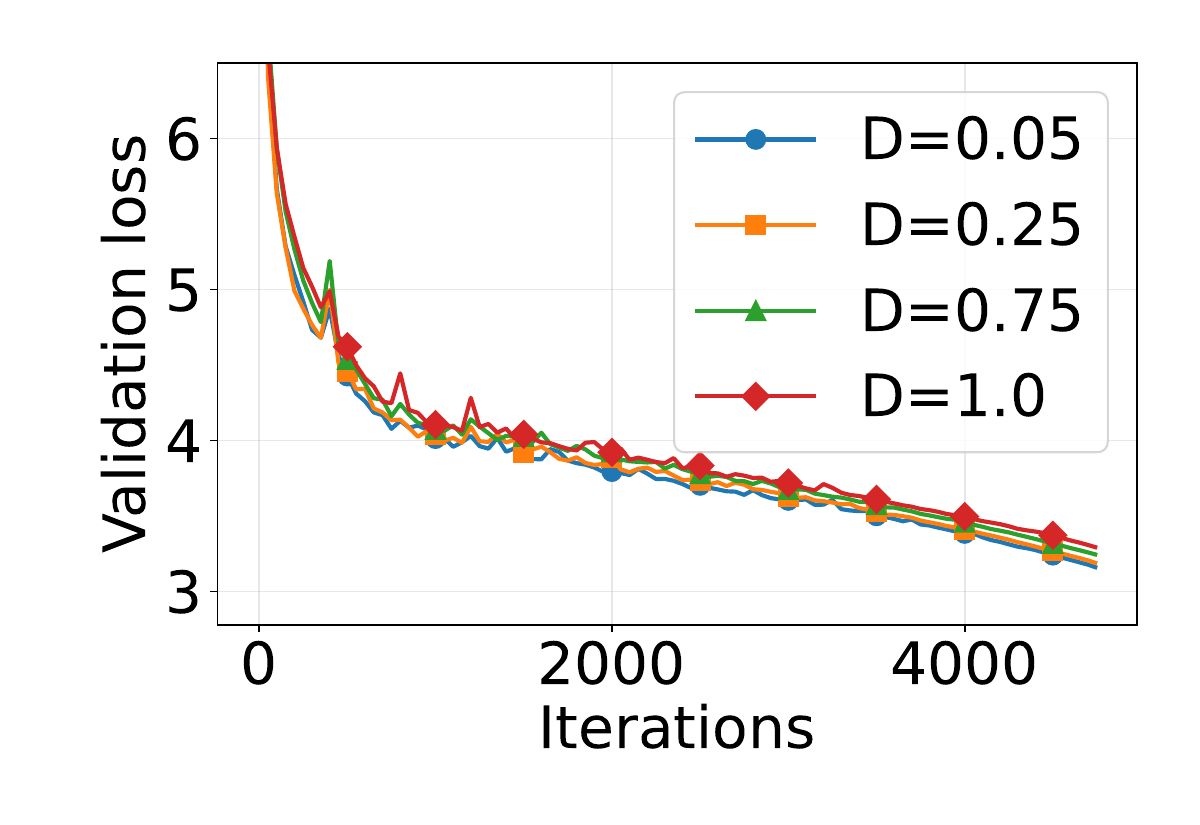} &
        \includegraphics[width=0.31\textwidth]{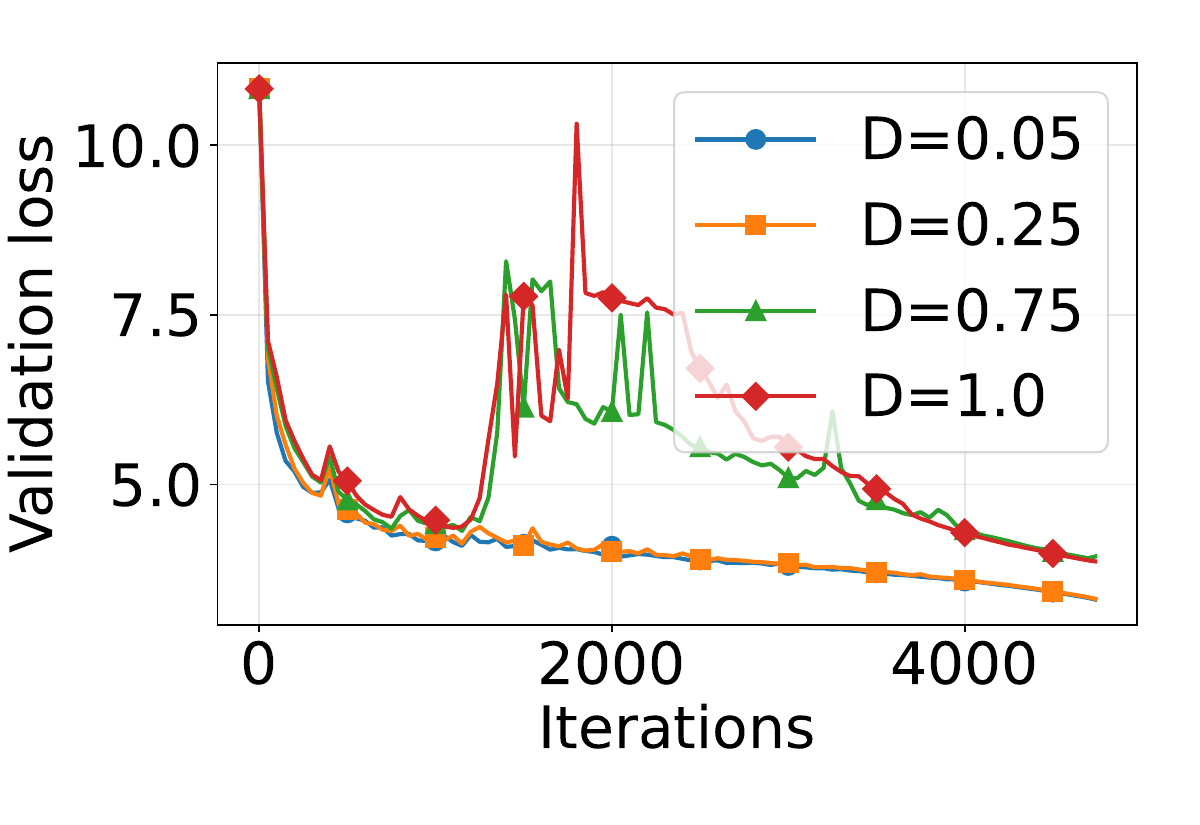} \\
        (a) $\eta = 0.1$ & (b) $\eta = 0.2$ & (c) $\eta = 0.5$ \\[1em]
    \end{tabular}
    \caption{Effect of clipping threshold on Soft Musec training stability across learning rates.}
    \label{fig:clip_ablation}
\end{figure}

We study the sensitivity of Soft Musec to the clipping threshold $D$ on NanoGPT-Medium trained on FineWeb across three learning rates $(\eta \in \{0.1, 0.2, 0.5\})$, with results shown in Figure~\ref{fig:clip_ablation}.

At moderate learning rates ($\eta=0.1$ and $\eta=0.2$), Soft Musec is robust to the choice of $D$: all values from $D=0.05$ to $D=1.0$ converge to comparable validation loss with similar training dynamics. At the largest learning rate ($\eta=0.5$), the choice of $D$ becomes more important. Smaller thresholds ($D=0.05$ and $D=0.25$) maintain smooth convergence, while larger thresholds ($D=0.75$ and $D=1.0$) exhibit loss spikes in the later stages of training. Notably, even with the largest clipping threshold ($D=1.0$), Soft Musec still converges — in contrast to Muon and MuonClip, which diverge entirely at this learning rate (Figure~\ref{fig:td_medium}). This demonstrates that spectral clipping provides a meaningful stability benefit regardless of the threshold, while a well-tuned $D$ further improves training smoothness.

\subsubsection{Number of Newton-Schulz Iterations}
\begin{figure}[htbp]
    \centering
    \begin{tabular}{ccc}
        % First Row
        \includegraphics[width=0.31\textwidth]{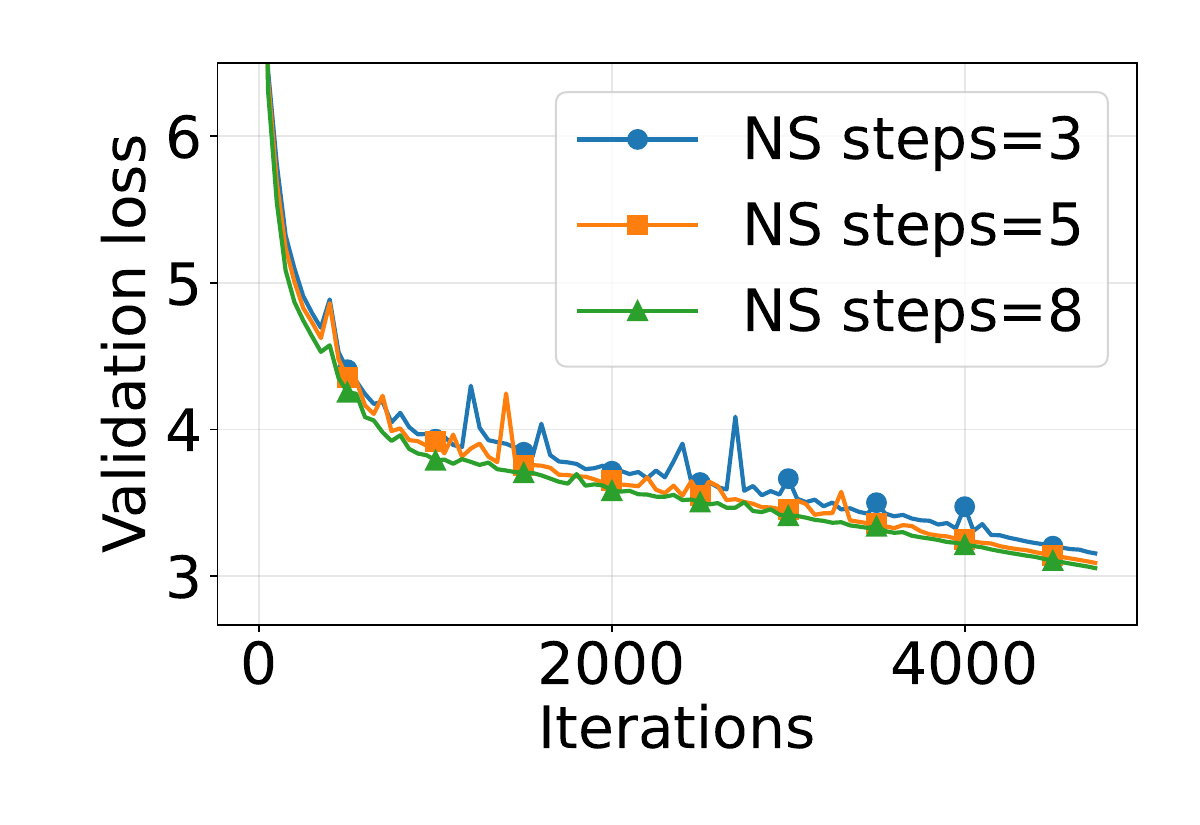} &
        \includegraphics[width=0.31\textwidth]{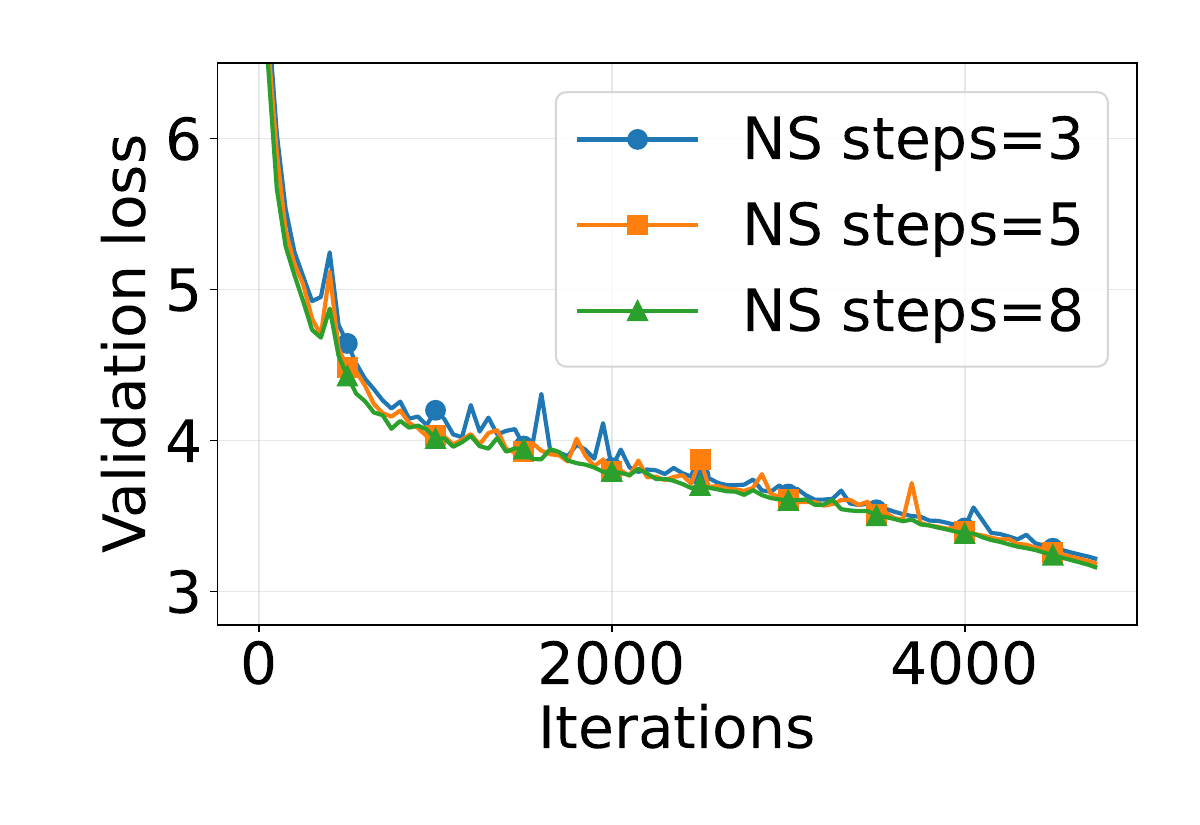} &
        \includegraphics[width=0.31\textwidth]{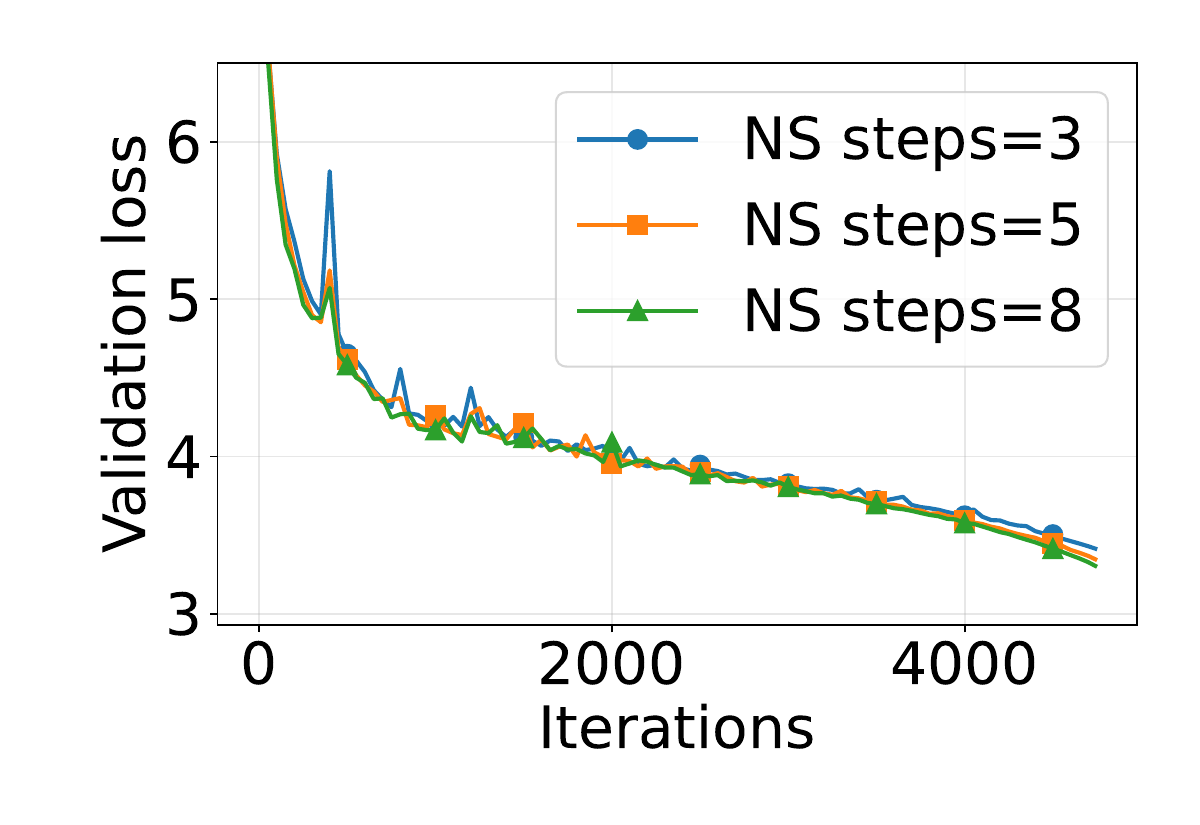} \\
        (a) $\eta = 0.1$ & (b) $\eta = 0.2$ & (c) $\eta = 0.5$ \\[1em]
    \end{tabular}
    \caption{Effect of Newton--Schultz steps on Soft Musec training stability across learning rates.}
    \label{fig:ns_ablation}
\end{figure}

We study the sensitivity of Soft Musec to the number of Newton--Schulz iterations on NanoGPT-Medium trained on FineWeb across three learning rates ($\eta \in \{0.1, 0.2, 0.5\}$), with the clipping threshold fixed at $D=0.05$. Results are shown in Figure~\ref{fig:ns_ablation}.

Across all learning rates, Soft Musec is largely insensitive to the number of Newton--Schulz iterations. More iterations yield slightly better validation loss, consistent with a more accurate approximation of the spectral clipping operator. 
However, the differences are marginal, indicating that even a coarse approximation with three iterations provides most of the stability benefit. This robustness stems from the well-conditioned nature of the matrix $\widehat\mM \widehat{\mM}^\top + D^2\mI$, whose eigenvalues are bounded below by $D^2$, enabling rapid convergence of the iteration. 
In practice, five iterations offer a good balance between approximation quality and computational cost.

\end{document}